\newcommand{\algoname}{{\AE}qprop\xspace}
\newtheorem{thm}{Theorem}
\newtheorem{lma}[thm]{Lemma}
\newtheorem{prop}[thm]{Proposition}
\newtheorem{cor}[thm]{Corollary}
\newtheorem{hyp}[thm]{Assumption}
\newtheorem{defi}[thm]{Definition}
\newcommand{\E}{\mathbb{E}}
\newcommand{\R}{\mathbb{R}}
\newcommand{\eps}{\varepsilon}
\newcommand{\deq}{\mathrel{\mathop{:}}=}
\newcommand{\abs}[1]{\left\lvert#1\right\rvert}
\newcommand{\norm}[1]{\left\lVert#1\right\rVert}
\def\d{\operatorname{d}\!{}}
\newcommand{\lyap}{\mathcal{L}}
\DeclareMathOperator*{\argmin}{arg\,min}
\DeclareMathOperator{\Id}{Id}
\newcommand{\grad}{\nabla\!}
\newcommand{\from}{\colon} 
\newcommand{\plh}{{\mkern-2mu\times\mkern-2mu}} 
\newcommand{\TODO}[1]{{\color{red} TODO: {#1}}}
\newcommand{\todo}[1]{\TODO{#1}}
\newcommand{\authorcomment}[2]{{\color[rgb]{#1}#2}}
\newcommand{\NDY}[1]{\authorcomment{0.0,0.8,0.4}{[NdY: #1]}}
\renewcommand{\TODO}[1]{}
\renewcommand{\todo}[1]{}
\renewcommand{\authorcomment}[2]{}
\renewcommand{\NDY}[1]{}
\newcommand{\neuripsonly}[1]{}
\newtheorem{rem}[thm]{Remark}
\title{Agnostic Physics-Driven Deep Learning}
\author{
  Benjamin Scellier \\
  SAM, D-MATH, ETH Zurich\\
  \And
  Siddhartha Mishra \\
  SAM, D-MATH and AI Center, ETH Zurich \\
  \AND
  Yoshua Bengio \\
  Mila, University of Montreal \\
  \And
  Yann Ollivier \\
  Facebook A.I. Research, Paris \\
}
\begin{document}

\maketitle

\begin{abstract}
This work establishes that a physical system can perform statistical
learning without gradient computations, via an \emph{Agnostic Equilibrium
Propagation} (\algoname) procedure that combines energy
minimization, homeostatic control, and nudging towards the correct
response. 
In \algoname, the specifics of the system do not have to be known: the
procedure is based only on external manipulations, and produces a
stochastic gradient descent without explicit gradient computations.
Thanks to nudging, the system performs a true, order-one
gradient step for each training sample, in contrast with order-zero
methods like reinforcement or evolutionary strategies, which rely on
trial and error. This procedure considerably widens the range of
potential hardware for statistical learning to any system with enough
controllable parameters, even if the details of the system are poorly
known. \algoname also establishes that in natural (bio)physical systems,
genuine gradient-based statistical learning may result from
generic, relatively simple mechanisms, without backpropagation and its requirement for analytic knowledge of partial derivatives.
\todo{Is that correct? one could argue this was already known eg Hebbian
mechanisms...}
\end{abstract}

\section{Introduction}
\label{sec:introduction}

In the last decade, deep learning has emerged as the leading approach to machine learning \citep{lecun2015deep}.
Deep neural networks have significantly improved the state of the art in pretty much all domains of artificial intelligence.
However, as neural networks get scaled up further, training and running them on Graphics Processing Units (GPUs) becomes slow and energy intensive.
These inefficiencies can be attributed to the so-called \textit{von Neumann bottleneck} i.e., the separation of processing and memory creating a bottleneck for the flow of information.
Considerable efficiency gains would be possible by rethinking hardware for machine learning, taking inspiration from the brain and other biological/physical systems where processing and memory are two sides of the same physical unit. \todo{ref}

One of the central tools of deep learning is optimization by gradient
descent, usually performed by the backpropagation algorithm.
Works such as \cite{wright2022deep} establish that various physical
systems can perform machine learning computations efficiently for
inference; still, 
gradient training is done \emph{in silico} on a digital model of the
system.
We believe that building truly efficient hardware for large-scale gradient-descent-based machine learning also requires rethinking the learning algorithms to be better integrated within the underlying system's physical laws.

Equilibrium propagation (Eqprop) is an alternative mathematical framework
for gradient-descent-based machine learning, in which inference and gradient computations are both performed using the same physical laws
\citep{scellier2017equilibrium}.
In principle, this offers the possibility to optimize arbitrary physical systems and loss functions by gradient descent \citep{scellier2020deep}. Eqprop applies, in particular, to physical systems whose equilibrium state minimizes an energy function,
e.g. nonlinear resistive networks \citep{kendall2020training}.
Such physical networks may be called `energy-based models' in the machine learning terminology, but energy minimization in these networks is directly performed by the laws of physics (not with numerical methods in a computer simulation).
In Eqprop, the gradient of the loss function is computed with two
measurements.
In a first phase, the system settles to equilibrium after presenting an input.
In a second phase, the energy of the system is slightly modified so as to
nudge the output towards a desired response, and the system settles to a
new equilibrium. The gradient is estimated from these two equilibrium states--- see Appendix~\ref{sec:eqprop} for more details on Eqprop. This approach has already been physically realized, e.g., in
\cite{dillavou2021demonstration} using a small variable resistor electrical network.

However, three challenges remain for training physical systems with
Eqprop. First, part of the analytical form of the
energy function of the system must be known explicitly (all partial
derivatives of the energy function with respect to the parameters).
Second---and most importantly---once gradients have been computed,
the trainable parameters still need to be physically updated by some
(nontrivial) physical procedure. Many articles propose to store parameters
as conductance values of non-volatile memory (NVM) elements (e.g.
memristors \citep{chua1971memristor}), but these NVM elements are far
from ideal and updating them continues to be extremely challenging \citep{chang2017mitigating}.
Third, the equilibrium state of the first phase of Eqprop needs to be stored somehow, for later use in the gradient computation.

We introduce \textit{Agnostic Eqprop} (\algoname), a novel alternative to Eqprop that overcomes these three challenges in one stroke. 
\algoname exploits the underlying physics of the system not just to perform the computations at inference, but also to physically adjust the system's parameters in proportion to their gradients.
To achieve this, in \algoname, the parameter variables are seen as floating
variables that also minimize the energy of the system, just like the
state variables do. We also require that each parameter variable is
coupled to a \emph{control knob} that can be used to maintain the
parameter around its current value while the system settles.

In \algoname as in Eqprop, training consists of iterating
over two phases for each training sample,
with a modified energy in the second phase:
\begin{enumerate}
\item In the first phase (inference), the input variables are set to some value; the output and state variables are allowed to
evolve freely, whereas the control knob variables are set so that the trainable parameters remain fixed. 
\item In the second phase, the inputs and controls are fixed at the values of the first phase, and the output is
slightly pushed (or `nudged') towards the desired value for the input by acting on the
underlying output energy function; the state and parameters are allowed to evolve freely, and this slightly moves the parameters towards a new value.
\end{enumerate}

After iterating over many
examples, the parameters evolve so that the output spontaneously
produces an approximation of the desired value. Indeed, we prove that the parameter change in the second phase corresponds to one step of
gradient descent with respect to the loss function
(Theorem~\ref{thm:sgd}). We also show that \algoname has some better
performance guarantees than gradient descent, especially in the so-called
\emph{Pessimistic} variant of \algoname: contrary to gradient descent,
even with large step sizes,
each step of \algoname is guaranteed to reduce a tight bound on the loss
function, evaluated on the example used at
that step (Theorem~\ref{thm:lyapunov}).

In this process, \algoname is agnostic to the analytical form of the
energy function, and there is no need to store the first equilibrium state.
Although no gradients are computed explicitly, \algoname is a first-order
method, not a zero-order method like evolutionary strategies: at each
step, the parameters do follow the gradient of the error on the given
sample.

\section{\algoname: an Agnostic Physical Procedure for Gradient Descent}
\label{sec:aeqprop}

We consider a prototypical machine learning problem: minimize an objective function
\begin{equation}
    \label{eq:loss}
J(\theta)=\E_{(x,y)} \, \left[ C(s(\theta,x),y) \right]
\end{equation}
over some parameter $\theta$, where the variable $x$ represents some inputs\footnote{All quantities in this text are vectors, not scalars, unless otherwise specified.}, the variable $y$ represents desired outputs, $C$ is a cost function, and $s$ is some quantity computed by the system from $\theta$ and $x$, that encodes a prediction with respect to $y$. The expectation represents the distribution of values we want to predict.

In machine learning, the workhorse for this problem is stochastic
gradient descent (SGD) \citep{
bottou2010large},
\begin{equation}
\label{eq:sgd}
\theta_{t}=\theta_{t-1}-\eta_t \,\partial_\theta C(s(\theta_{t-1},x_t),y_t)
\end{equation}
with step size (learning rate) $\eta_t$, where at each step, an example $(x_t,y_t)$ is
chosen at random from a training set of examples. (With batching, each
$x_t$ and $y_t$ may represent a set of several inputs and desired outputs.)

Here, following \cite{scellier2017equilibrium}, we assume that the function $s(\theta,x)$ is obtained by a physical
process that minimizes some energy function $E$,
\begin{equation}
s(\theta,x)=\argmin_s E(\theta,x,s).
\end{equation}
Namely, we use physical equilibration of the system as the computing
process. Many physical systems evolve by minimizing some quantity \citep{millar1951cxvi,cherry1951cxvii,wyatt1989criteria, kendall2020training, stern2021supervised}
\todo{Better REF ?}, so we take this equilibration as the basic computational step. \footnote{The function $E$ does not have to be the physical energy of the system:
it may be any function effectively minimized by the system's spontaneous
evolution. For instance, in a thermodynamical system, $E$ may be the free
energy.} Below, we will also assume that the parameter $\theta$ itself
is a part of this
system and follows the energy minimization to change during equilibration. 

\algoname is a physical procedure that allows an operator (running the computing system) to simulate 
stochastic gradient descent \eqref{eq:sgd} by pure physical manipulations, \emph{without explicitly knowing the energy function $E$} or other details of the system.

We assume that this operator has the following abilities:
\begin{itemize}
\item The ability to clamp (set) part of the state, the ``input knobs'', to any desired value $x$, then let the system reach equilibrium, and read the system's response on some part of the state $s$, the ``output unit''.
\item The ability to \emph{nudge} the system towards any desired output $y$, by adding $\beta C(s,y)$ to the energy of the system, where $\beta>0$ is a small constant. This requires knowledge of the cost function: for instance, adding a small quadratic coupling between the output unit and the desired output $y$ to minimize the squared prediction error.
\item The ability to control the parameters $\theta$ via
\emph{control knobs} $u$, thanks to a strong (but not infinite) coupling
energy, e.g. $\norm{u-\theta}^2/2\eps$ with small $\eps$. Requiring one control
knob per parameter, the operator needs to adjust $u$ in real time while the system evolves so that $\theta$ remains at a constant
value (\emph{homeostatic control} of $\theta$ by $u$). The operator also requires the ability to
\emph{clamp} $u$ to its current value.
\end{itemize}

So at each instant, we set input knobs $x$, control knobs $u$, and possibly (if $\beta>0$) a desired output $y$, and assume that the system reaches an equilibrium $(\theta_\star,s_\star) = \argmin_{(\theta,s)} \; \mathcal{E}(u,\theta,s,x,y,\eps,\beta)$, where
\begin{equation}
\label{eq:fullenergy}
\mathcal{E}(u,\theta,s,x,y,\eps,\beta) :=
\norm{u-\theta}^2/2\eps+E(\theta,x,s)+\beta \, C(s,y)
\end{equation}
is the global energy function of the system. In the default formulation of \algoname (the so-called \textit{Optimistic} variant), we will set $\beta$ to two values only: $0$ and a small positive value.

The energy function $E(\theta,x,s)$ need not be known explicitly, but must be complex enough that we
can make the system reach any desired behavior by adjusting the parameter $\theta$.

\paragraph{The \algoname procedure.}
Under these assumptions, the following procedure simulates gradient
descent in the physical system.
\begin{enumerate}
\item Observe the current value $\theta_{t-1}$ of the parameter.
\item \label{item:freestep} Present the next input example $x_t$ to the system, without nudging
($\beta=0$). Let the system reach equilibrium, while at the same time,
adjusting the control knobs $u$ so that the parameter $\theta$ remains at $\theta_{t-1}$.
\item Clamp the control knobs to their current value $u_t$. Turn on the
nudging to the desired output $y_t$ by adding $\beta C(s,y_t)$ to the energy
function of the system, where $\beta > 0$.
\item Let the system reach a new equilibrium for $s$ and $\theta$ given
$u_t$, $x_t$, $y_t$ and $\beta$. Read
the new value $\theta_{t}$ of the parameter.
\end{enumerate}
In formulae, this means that we first set a control value $u_{t}$ such
that the equilibrium value $\theta_{t-1}$ does not change when we
introduce the new input $x_t$:
\begin{align}
\label{eq:controlstep}
\text{set }u_t\text{ such that } \quad \theta_{t-1}&=\argmin_\theta \; \min_s \; \mathcal{E}(u_{t},\theta,s,x_t,y_t,\eps,0)
\intertext{and then obtain the next parameter by adding some nudging $\beta$ and letting the system reach
equilibrium,}
\label{eq:nudgestep}
\theta_{t}&=\argmin_{\theta} \; \min_s \; \mathcal{E}(u_{t},\theta,s,x_t,y_t,\eps,\beta).
\end{align}

The above loop is repeated over all pairs $(x_t,y_t)$ in the training set.
After training, the system can be used for inference, without nudging. Hence, only step
\ref{item:freestep} is used: set the
input knobs to some input $x$ while adjusting the controls $u$ so
$\theta$ does not change, let the system relax to equilibrium, then read
the output variable.
Alternatively, after training,
the parameters $\theta$ can just be clamped to their
final value, which avoids the need for further homeostatic control via
$u$.

Next, we show the following outcome of the \algoname procedure.
\begin{thm}
\label{thm:sgd}
Under technical assumptions, for small $\eps$ and $\beta$ we have
\begin{equation}
\theta_{t}=\theta_{t-1}-\eps\beta\, \partial_\theta C(s(\theta_{t-1},x_t),y_t)
+O(\eps^2\beta+\eps\beta^2).
\end{equation}
\end{thm}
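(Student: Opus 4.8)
The plan is to eliminate the state variable $s$ by partial minimization, collapsing the two coupled equilibrium conditions \eqref{eq:controlstep}--\eqref{eq:nudgestep} into a single fixed-point equation for $\theta$, and then to extract the cost gradient from it using the envelope (Danskin) theorem together with the symmetry of mixed second derivatives --- the core identity underlying Eqprop.

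First I would introduce the reduced (``free'') energy, for the fixed example $(x_t,y_t)$,
\[
F(\theta,\beta) \deq \min_s \left[ E(\theta,x_t,s) + \beta\, C(s,y_t) \right] = E(\theta,x_t,s_\beta(\theta)) + \beta\, C(s_\beta(\theta),y_t),
\]
where $s_\beta(\theta) \deq \argmin_s [E+\beta C]$ and $s_0(\theta)=s(\theta,x_t)$. Under the technical nondegeneracy assumptions (invertibility of $\partial_s^2 E$ at the relevant point), the implicit function theorem makes $s_\beta(\theta)$ a smooth map, hence $F$ smooth. The $\theta$-stationarity of $\norm{u-\theta}^2/2\eps + F(\theta,\beta)$ reads $(\theta-u)/\eps + \partial_\theta F(\theta,\beta)=0$, i.e. at any equilibrium $\theta-u = -\eps\,\partial_\theta F(\theta,\beta)$. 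Applying this to the control step \eqref{eq:controlstep} ($\beta=0$, equilibrium $\theta_{t-1}$) exhibits $u_t = \theta_{t-1} + \eps\,\partial_\theta F(\theta_{t-1},0)$, and to the nudge step \eqref{eq:nudgestep} gives $\theta_t - u_t = -\eps\,\partial_\theta F(\theta_t,\beta)$. Eliminating $u_t$ yields the master equation
\[
\theta_t - \theta_{t-1} = -\eps\left[\partial_\theta F(\theta_t,\beta) - \partial_\theta F(\theta_{t-1},0)\right].
\]

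The heart of the argument is to expand the bracket, which I split as $[\partial_\theta F(\theta_t,\beta) - \partial_\theta F(\theta_{t-1},\beta)] + [\partial_\theta F(\theta_{t-1},\beta) - \partial_\theta F(\theta_{t-1},0)]$. The first difference is $O(\norm{\theta_t-\theta_{t-1}})$; feeding this Lipschitz bound back into the master equation gives $\norm{\theta_t-\theta_{t-1}} \le \eps(L\norm{\theta_t-\theta_{t-1}} + M\beta)$, so for $\eps L<1$ a one-line bootstrap closes to $\theta_t-\theta_{t-1}=O(\eps\beta)$, and this term contributes $O(\eps\beta)$ inside the bracket. For the second difference --- the genuine $\beta$-derivative at fixed $\theta$ --- I invoke the envelope identity $\partial_\beta F(\theta,\beta)=C(s_\beta(\theta),y_t)$ (valid because $s_\beta(\theta)$ is the inner minimizer). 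Differentiating in $\theta$ at $\beta=0$ and using Schwarz's theorem,
\[
\partial_\beta\partial_\theta F(\theta,0) = \partial_\theta\partial_\beta F(\theta,0) = \partial_\theta\!\left[C(s(\theta,x_t),y_t)\right],
\]
which is exactly the SGD gradient of \eqref{eq:sgd}. Hence $\partial_\theta F(\theta_{t-1},\beta)-\partial_\theta F(\theta_{t-1},0) = \beta\,\partial_\theta C(s(\theta_{t-1},x_t),y_t) + O(\beta^2)$. Substituting both pieces into the master equation gives $\theta_t-\theta_{t-1} = -\eps\beta\,\partial_\theta C(s(\theta_{t-1},x_t),y_t) + O(\eps^2\beta+\eps\beta^2)$, as claimed.

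The main obstacle is making the ``technical assumptions'' precise and, above all, \emph{uniform}. I expect to need: existence and uniqueness of the equilibria, so that $u_t$, $\theta_t$, and $s_\beta(\theta)$ are genuinely single-valued; invertibility of $\partial_s^2 E$ on a neighborhood, both to run the implicit function theorem and to ensure the critical points are true minima; and uniform control of the third-order Taylor remainders and of the operator norms of the inverse Hessians, so that the $O(\cdot)$ constants do not depend on $t$ and the bootstrap for $\norm{\theta_t-\theta_{t-1}}=O(\eps\beta)$ actually closes. A more delicate point is justifying the homeostatic control step \eqref{eq:controlstep}: one must check that the prescribed $u_t$ really makes $\theta_{t-1}$ the global $\argmin$ rather than a mere stationary point. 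Here the explicit formula $u_t=\theta_{t-1}+\eps\,\partial_\theta F(\theta_{t-1},0)$ is the key leverage, since it realizes the control value constructively instead of only implicitly.
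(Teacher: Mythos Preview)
Your argument is correct and shares the same core with the paper's: reduce to $F(\theta,\beta)=\min_s[E+\beta C]$, subtract the two stationarity conditions to obtain your master equation, invoke the envelope identity $\partial_\beta F=C(s_\beta)$ to identify the $\beta$-derivative with the cost, and control the remainder via a preliminary bound $\theta_t-\theta_{t-1}=O(\eps\beta)$.

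The difference is organizational. The paper first proves the more general Theorem~\ref{thm:riemannian-sgd} (arbitrary coupling $U$, only one of $\eps,\beta$ required to be small), in which the Hessian contribution $\eps\,\partial_\theta^2 F$ is kept as part of a Riemannian preconditioner $M=\partial_\theta^2 U+\eps\,\partial_\theta^2 F$ and the $\beta$-difference quotient of $F$ is packaged as the Lyapunov function $\lyap_\beta$; Theorem~\ref{thm:sgd} then falls out by specializing $U$ to the quadratic and expanding $M=\Id+O(\eps)$, $\lyap_\beta=\lyap+O(\beta)$. You go directly: quadratic $U$ makes the master equation exact rather than a Taylor expansion of $\partial_\theta U$, you discard the $\partial_\theta^2 F$ term as an $O(\eps^2\beta)$ error instead of tracking it as a metric, and you reach $\partial_\theta C(s_0(\theta))$ via Schwarz without the $\lyap_\beta$ intermediary. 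Your contraction bootstrap (needing $\eps L<1$) is also simpler than the paper's Lemma~\ref{lma:technical}, which first obtains $O(\sqrt{\eps\beta})$ from strong convexity before upgrading --- but the paper's version additionally covers the regime $\beta\to 0$ with $\eps$ fixed. For Theorem~\ref{thm:sgd} itself your direct route is cleaner; the paper's detour buys the Riemannian interpretation and the one-small-parameter extension.
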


Namely, the \algoname procedure performs a step of stochastic gradient descent for
the input-output pair $(x_t,y_t)$,
with step size $\eps\beta$. Note that
neither the energy function, nor its gradients, nor the gradients of the
cost function have been used.

A proof of Theorem \ref{thm:sgd} is provided in Appendix
\ref{sec:proofs}. Appendix \ref{sec:riemannian-sgd} also describes
extensions to situations where only one of $\eps$ or $\beta$ is small, to situations where the coupling between $u$ and $\theta$ is not of the
form $\norm{u-\theta}^2 / 2$ (resulting in a \emph{Riemannian} SGD ; for instance, using a per-component coupling $\sum_k
(u_k-\theta_k)^2/2\eps_k$ results in per-component step sizes $\eps_k \, \beta$), 
and gives more details on the $O(\eps^2\beta+\eps\beta^2)$
term.

\textbf{Remark.}
It is well-known that gradient descent can be approximately written as
minimizing a cost function penalized by the distance to the previous
value,
\begin{equation}
\argmin_\theta\{C(s(\theta,x),y)+\norm{\theta-\theta_{t-1}}^2/2\eps\}
\approx \theta_{t-1}-\eps\, \partial_\theta C(s(\theta_{t-1},x),y).
\end{equation}
So it might seem that we just have to set $u=\theta_{t-1}$ and add the
energy function $C(s,y)$. However, as soon as we add $C(s,y)$ to the
energy, we change the equilibrium value for $s$, so that $s\neq
s(\theta_{t-1},x)$ anymore. Likewise, presenting the input $x_t$ with a
fixed $u$ will change $\theta$. This is why we have to use a more
complicated procedure in \algoname.

We now turn to an important aspect of \algoname's behavior when $\eps$ and $\beta$ are not infinitesimal: the existence of a Lyapunov function.

\section{Monotonous Improvement: A Lyapunov Function for \algoname}

Let us rewrite the objective function \eqref{eq:loss} in the form:
\begin{equation}
    J(\theta)=\E_{(x,y)} \, \left[ \lyap(\theta,x,y) \right], \qquad \text{where} \qquad \lyap(\theta,x,y) \deq C(s(\theta,x),y).
\end{equation}
We call $\lyap$ the ``loss function'', to distinguish it from the objective function ($J$) and the cost function ($C$).

Theorem \ref{thm:sgd}
holds in the regime of infinitesimal step sizes $\beta$ and $\epsilon$, but what if $\beta$ and/or $\eps$ are non-infinitesimal?
It is in this context of non-infinitesimal $\beta,\eps$ that \algoname has some better theoretical properties than stochastic
gradient descent (SGD). In SGD with predefined step size, there is
no guarantee that the gradient step will improve the output, unless some
a priori information is available such as bounds on the Hessian of the
loss function. \todo{refs}

On the other hand, in \algoname, there exists a
\emph{Lyapunov function} for each step of the procedure, even when $\eps$ and $\beta$ are
nonzero. More precisely, there exists a function
$\lyap_\beta(\theta,x,y)$ such that
\begin{itemize}
\item $\lyap_\beta(\theta,x,y)\to \lyap(\theta,x,y)$ when $\beta\to 0$,
namely, $\lyap_\beta$ is close to the true loss when $\beta$ is small;
\item $\lyap_\beta(\theta_{t},x_t,y_t)\leq
\lyap_\beta(\theta_{t-1},x_t,y_t)$ for any choice of $\eps$ and $\beta$.
\end{itemize}

The above property is essential for numerical stability: even though
$\lyap_\beta$ is not exactly $\lyap$ for $\beta\neq 0$, it means that the process is still minimizing a function close to $\lyap$, therefore it cannot diverge severely. We point out that the Lyapunov function depends on the current example $(x_t,y_t)$. Hence, performance improves on the current example only. For comparison, standard SGD does not satisfy even this property.

The Lyapunov property may be most interesting in the regime of large batch sizes, where each $x_t$ actually encodes a large number of training samples. In this regime, if each batch is sufficiently representative of the whole training set, then the Lyapunov function depends much less on the batch, and it serves as a proxy for the objective function $J(\theta)$. Denoting $J_\beta(\theta) \deq \E_{(x,y)} \, \left[ \lyap_\beta(\theta,x,y) \right]$, this leads to monotonous improvement along the learning procedure: $J_\beta(\theta_0) \geq J_\beta(\theta_1) \geq \ldots \geq J_\beta(\theta_t)$.

We now define the Lyapunov function $\lyap_\beta$, which is
closely related to the loss function $\lyap$.

\begin{restatable}{thm}{propdecreasing}
\label{thm:lyapunov}
For each $\beta>0$, let $s_\beta(\theta,x,y)$ be the state of the system with nudging $\beta$, i.e.
\begin{equation}
s_\beta(\theta,x,y)=\argmin_s \left\{E(\theta,x,s)+\beta C(s,y)\right\}.
\end{equation}
Define the Lyapunov function
\begin{equation}
\label{eq:deflyapunov}
\lyap_\beta(\theta,x,y)\deq \frac1\beta \int_{\beta^\prime=0}^\beta
C(s_{\beta^\prime}(\theta,x,y),y)\d \beta^\prime
\end{equation}
and note that $\lyap_\beta(\theta,x,y)\to \lyap(\theta,x,y)$ when
$\beta\to 0$.

Then for any $\beta>0$, 
along the \algoname trajectory $(\theta_t)$ given by
\eqref{eq:controlstep}--\eqref{eq:nudgestep},
we have
\begin{equation}
\lyap_\beta (\theta_t,x_t,y_t)\leq \lyap_\beta (\theta_{t-1},x_t,y_t).
\end{equation}
\end{restatable}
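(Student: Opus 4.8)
The plan is to eliminate the state variable $s$ first and phrase everything in terms of a reduced energy. For fixed $(x,y)$, define
\begin{equation}
F_\beta(\theta)\deq \min_s\left\{E(\theta,x,s)+\beta\,C(s,y)\right\}=E(\theta,x,s_\beta(\theta,x,y))+\beta\,C(s_\beta(\theta,x,y),y),
\end{equation}
so that $\min_s\mathcal{E}(u,\theta,s,x,y,\eps,\beta)=\norm{u-\theta}^2/2\eps+F_\beta(\theta)$, since the coupling term does not depend on $s$. The first thing I would establish is a closed form for the Lyapunov function. As an infimum of functions affine in $\beta$, the map $\beta\mapsto F_\beta(\theta)$ is concave, hence absolutely continuous, and by the envelope (Danskin) theorem its derivative is $\partial_\beta F_\beta(\theta)=C(s_\beta(\theta,x,y),y)$ wherever the inner minimizer is unique. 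Integrating from $0$ to $\beta$ then gives
\begin{equation}
\lyap_\beta(\theta,x,y)=\frac1\beta\int_{0}^{\beta} C(s_{\beta'}(\theta,x,y),y)\,\d\beta'=\frac{F_\beta(\theta)-F_0(\theta)}{\beta},
\end{equation}
so the claim reduces to proving $F_\beta(\theta_t)-F_0(\theta_t)\leq F_\beta(\theta_{t-1})-F_0(\theta_{t-1})$.

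Next I would write the two variational inequalities supplied by the procedure, crucially sharing the same control value $u_t$. The control step \eqref{eq:controlstep} makes $\theta_{t-1}$ a global minimizer of $\theta\mapsto\norm{u_t-\theta}^2/2\eps+F_0(\theta)$; testing this minimality against $\theta_t$ gives
\begin{equation}
\norm{u_t-\theta_{t-1}}^2/2\eps+F_0(\theta_{t-1})\leq\norm{u_t-\theta_t}^2/2\eps+F_0(\theta_t).
\end{equation}
The nudged step \eqref{eq:nudgestep} makes $\theta_t$ a global minimizer of $\theta\mapsto\norm{u_t-\theta}^2/2\eps+F_\beta(\theta)$; testing against $\theta_{t-1}$ gives
\begin{equation}
\norm{u_t-\theta_t}^2/2\eps+F_\beta(\theta_t)\leq\norm{u_t-\theta_{t-1}}^2/2\eps+F_\beta(\theta_{t-1}).
\end{equation}

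The decisive step is to add these two inequalities: the coupling terms $\norm{u_t-\theta_{t-1}}^2/2\eps$ and $\norm{u_t-\theta_t}^2/2\eps$ occur on both sides and cancel exactly, leaving $F_0(\theta_{t-1})+F_\beta(\theta_t)\leq F_0(\theta_t)+F_\beta(\theta_{t-1})$, which rearranges to the reduced claim; dividing by $\beta>0$ then finishes the proof. I expect the cancellation of the coupling term to be the conceptual crux: the control value $u_t$ is defined only implicitly through \eqref{eq:controlstep} and admits no usable closed form, so the only way to make it disappear is to combine the control-step and nudged-step optimalities so the $\eps$-dependent terms annihilate. A secondary technical point, which I would dispatch via the concavity argument above rather than by assuming smoothness, is justifying the integral representation of $\lyap_\beta$ when $s_\beta$ fails to be differentiable in $\beta$; I emphasize that the final inequality itself needs no differentiability or uniqueness, only that $\theta_{t-1}$ and $\theta_t$ are genuine global minimizers of their respective reduced energies.
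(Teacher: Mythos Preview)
Your proof is correct and follows essentially the same route as the paper: reduce the Lyapunov function to $(F_\beta-F_0)/\beta$ via the envelope identity $\partial_\beta F_\beta(\theta)=C(s_\beta(\theta),y)$, then sum the two optimality inequalities at the shared control $u_t$ so the coupling terms cancel. Your observation that concavity of $\beta\mapsto F_\beta(\theta)$ (as an infimum of affine functions) already justifies the integral representation without smoothness is a mild sharpening of the paper's argument, which instead invokes its standing smoothness assumptions and the chain rule.
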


We prove Theorem \ref{thm:lyapunov} in Appendix
\ref{sec:proofs}. We emphasize that Theorem \ref{thm:lyapunov} holds for any value of $\beta > 0$, even far from the regime $\beta \to 0$, and regardless of $\eps$.

\section{Optimistic \algoname, Pessimistic \algoname, and Centered \algoname}
\label{sec:variants}

The Lyapunov function expression
\eqref{eq:deflyapunov}
shows that \algoname is slightly too
\emph{optimistic} at first order in $\beta$ as \algoname minimizes the underlying error
assuming that there
will be some nudging $\beta^\prime>0$ at test time.
This Lyapunov function also appears as the gradient
actually computed by \algoname when $\beta$ is fixed instead of $\beta\to 0$: \algoname really
has $\lyap_\beta$ as its loss function (Appendix \ref{sec:riemannian-sgd}, Theorem~\ref{thm:riemannian-sgd}).

It is possible to partially compensate or even reverse this effect. This
leads to \emph{centered} \algoname and \emph{pessimistic} \algoname: 
\begin{itemize}
\item In
unmodified (optimistic) \algoname, we use $\beta=0$ in the first step
\eqref{eq:controlstep} and positive
$\beta$ in the second step \eqref{eq:nudgestep}.
\item \emph{Pessimistic} \algoname uses $-\beta$ instead of $0$ in the
step \eqref{eq:controlstep}, and $0$ instead of $\beta$ in the step
\eqref{eq:nudgestep}. This amounts to assuming that there will be some
nudging \emph{against} the correct answer at test time.
\item \emph{Centered} \algoname uses $-\beta/2$ in step
\eqref{eq:controlstep} and $\beta/2$ in step \eqref{eq:nudgestep}. With
this, the resulting Lyapunov function is $O(\beta^2)$-close to the
loss function $\lyap$, instead of $O(\beta)$.
\end{itemize}

These variants enjoy similar theorems (Appendix~\ref{sec:riemannian-sgd}), and are tested below (Section \ref{sec:numerical-illustration}). In particular, the Lyapunov function $\lyap_{-\beta}$ for Pessimistic \algoname involves
an integral of $\beta^\prime$ from $-\beta$ to $0$ instead of $0$ to
$\beta$ in \eqref{eq:deflyapunov}: namely, it optimizes
under an assumption of \emph{negative} (adversarial) nudging at test time. 
Likewise,
Centered \algoname assumes a mixture
of positive and negative nudging at test time. Speculatively, this
might
improve robustness.

The Lyapunov functions for optimistic and pessimistic \algoname bound the
loss function for each sample (Appendix,
Theorem~\ref{thm:riemannian-sgd}):
\begin{equation}
\lyap_\beta(\theta,x,y)\leq \lyap(\theta,x,y)\leq
\lyap_{-\beta}(\theta,x,y).
\end{equation}
In particular, Pessimistic \algoname actually
optimizes an \emph{upper bound} of the true loss function $\lyap$ for
each sample.

Numerically, negative values of the nudging parameter $\beta$ require more care because a
negative term $-\abs{\beta}C(s,y)$ will be introduced to the energy: if $C$ is
unbounded (such as a quadratic cost), the equilibrium might be when $s\to\infty$ with the energy
tending to $-\infty$. This can be corrected by ensuring the main energy
$E(\theta,x,s)$ is sufficiently large for large $s$, for instance, for
a quadratic cost, by ensuring the
energy $E$ has an
$\norm{s}^2$-like term. \footnote{This is
slightly different from
parameter regularization in machine learning:
regularizing $E$ regularizes the model and the
state $s(\theta,x)$, but \algoname computes the unregularized
gradient of the same, unchanged loss function applied to the regularized model, instead of a
regularized gradient of the original model.}

\section{A Numerical Illustration}
\label{sec:numerical-illustration}

Computationally, there is little interest in a numerical simulation of
\algoname: this amounts to using a computer to simulate a physical system
that is supposed to emulate a computer, which is inefficient.
This is all the more true as the fundamental step of \algoname is energy
minimization, which we will simulate by gradient descent on the energy,
while stochastic gradient descent was the operation we wanted to simulate
in the first place.

Still, such a simulation is a sanity check of \algoname. We can 
compare \algoname with direct stochastic gradient descent, and
observe
the influence of the second-order terms (testing the influence of finite
$\beta>0$ instead of $\beta\to 0$). This also demonstrates that the energy minimization
and the homeostatic control of $\theta$ can be realized in a simple
way, and that imperfect energy minimization does not
necessarily lead to unstable behavior.

We present two series of experiments: a simple linear regression example
(Section~\ref{sec:linreg}, and dense and convolutional Hopfield-like
networks on the real datasets MNIST and FashionMNIST
(Section~\ref{sec:hopfield}).

We start with a discussion of one possible, generic way to simulate the
energy minimization and homeostatic control numerically
(Section~\ref{sec:simucontrol}): an energy relaxation by gradient
descent, and a proportional controller on $u$. This is the implementation
used for the linear regression example.

However, with Hopfield networks on real datasets, such an explicit
physical simulation of energy minimization and homeostatis was
slow. We had to use algebraic knowledge to speed up the simulations:
for energy minimization, we
iteratively minimized each layer given the others (a 1D quadratic
minimization problem for each variable), and the control $u$ was
directly set to the algebraically computed correct value.


\subsection{Simulating Convergence to Equilibrium and Homeostatic
Control}
\label{sec:simucontrol}

In the free (non-nudged, $\beta=0$) phase of \algoname, we have to fix the inputs to
$x_t$ and let the system $(s,\theta)$ relax to equilibrium, while at the
same time adjusting $u$ to ensure that the equilibrium value of $\theta$
is equal to the previous value $\theta_{t-1}$. Numerically, we realize
this by iterating a gradient descent step on the energy
\eqref{eq:fullenergy} of $(s,\theta)$. In parallel, we
implement a simple proportional controller on $u$, which increases $u$
when $\theta$ is too small:
\begin{align}
\label{eq:sstep}
s&\gets s-\eta_s \grad_s \mathcal{E}(u,\theta,s,x,y,0)=s-\eta_s
\grad_s E(\theta,x,s)
\\
\label{eq:thetastep}
\theta &\gets \theta - \eta_\theta \grad_\theta
\mathcal{E}(u,\theta,s,x,y,0)=\theta+\eta_\theta\frac{u-\theta}{\eps} -\eta_\theta
\grad_\theta E(\theta,x,s)
\\
\label{eq:ustep}
u &\gets u + \eta_u (\theta_{t-1}-\theta)
\end{align}
with respective step sizes $\eta_s$, $\eta_\theta$ and $\eta_u$. We
always use
\begin{equation}
\eta_u=\eta_\theta/(4\eps)
\end{equation}
which corresponds to the critically damped regime \footnote{Namely, the
linearized system on $(\theta,u)$ (without $s$) around its equilibrium value
$\theta=u=\theta_{t-1}$ has all
eigenvalues equal to $-1/2\eps$, which provides quickest convergence
without oscillations.} for the pair
$(\theta,u)$ and the coupling energy $U(u,\theta)=\norm{\theta-u}^2/2\eps$.

In practice, the step sizes $\eta_s$ and $\eta_u$ are adjusted adaptively
to guarantee that $\mathcal{E}$ decreases: first, a step \eqref{eq:sstep} is
applied, and if $\mathcal{E}$ decreases the step is accepted and $\eta_s$
is increased by $5\%$; if $\mathcal{E}$ increases the step is cancelled
and $\eta_s$ is decreased by $50\%$; if $\mathcal{E}$ is unchanged we
either multiply or divied $\eta_s$ by $1.05$ with probability $1/2$.
Then, the same is applied for the step \eqref{eq:thetastep} on $\theta$.
Finally, if the step on $\theta$ was accepted then we perform a step
\eqref{eq:ustep} on $u$, with step size $\eta_u=\eta_\theta/(4\eps)$.
Then we loop over
\eqref{eq:sstep}--\eqref{eq:thetastep}--\eqref{eq:ustep} again. The
step sizes were initialized to $\eta_s=1$ and $\eta_\theta=\eps$.

For the nudged step of \algoname $(\beta\neq 0)$, we apply the same principles, but
with $u$ fixed $(\eta_u=0)$, and with $\mathcal{E}$ evaluated at $\beta$
instead of $0$: this results in an additional term $-\eta_s \beta \grad_s
C(s,y)$ in \eqref{eq:sstep} for the update of $s$.

In our experiments, convergence to equilibrium was simulated by iterating
$50$ steps of these updates.

For the controller, we could also directly set $u$ to the optimal value
$u^\ast=\theta_{t-1}+\eps \grad_\theta
E(\theta_{t-1},x,s(\theta_{t-1},x_t))$, which
guarantees an equilibrium at $\theta=\theta_{t-1}$. However, we do not
consider this a realistic scenario for \algoname: contrary to $s$ and
$\theta$ which evolve spontaneously, $u$ must be set by an external
operator, and this operator may not have access to the energy
function $E$ or its gradient. The controller \eqref{eq:ustep} just uses
$\theta_{t-1}$ and the observed $\theta$.

\subsection{A Simple Linear Regression Example}
\label{sec:linreg}

For this experiment we consider linear regression on $[-1;1]$. Let
$f\from [-1;1]\to \R$ be
a target function. Let $\phi_1,\ldots,\phi_k\from [-1;1]\to \R$ be $k$
feature functions. The model to be learned is $f(x)\approx\sum_i \theta_i
\phi_i(x)$. In this section the features $\phi_i$ are fixed,
corresponding to a linear model.

We are going to apply \algoname with parameter $\theta=(\theta_i)$, input
$x\in [-1;1]$ and output $y=f(x)$ for random samples $x\in [-1;1]$. The
state is a single number $s\in \R$, and the energy and cost are
\begin{equation}
E(\theta,x,s)=\frac12 \left(s-\sum_i \theta_i \phi_i(x)\right)^2,\qquad
C(s,y)=\frac12 (s-y)^2.
\end{equation}
In the free phase ($\beta=0$), the system relaxes to $s=\sum_i \theta_i
\phi_i(x)$.

The features $\phi_i$ were taken to be the Fourier features
$\phi_1(z)=1$,
$\phi_{2i}(z)=\sin(i\pi z)$, $\phi_{2i+1}(z)=\cos(i\pi z)$, up to
frequency $i=10$. The ground truth
function $f$ is a random polynomial of degree $d=10$, defined as
\begin{equation}
\label{eq:truefunc}
f(z)\deq \sum_{i=0}^d w_i L_i(z)
\end{equation}
where $L_i(z)$ is the Legendre polynomial of degree $i$, and where the $w_i$
are independent Gaussian random variables $N(0,1)$. Thanks to
the Legendre polynomials being orthogonal, this model produces random
polynomials $f$ with a nice range; see an example in
Fig.~\ref{fig:truefuncsample}. Since
we use Fourier features while $f$ is a polynomial (and non-periodic),
there is no exact solution.

\begin{figure}
\begin{center}
\includegraphics[width=.35\textwidth]{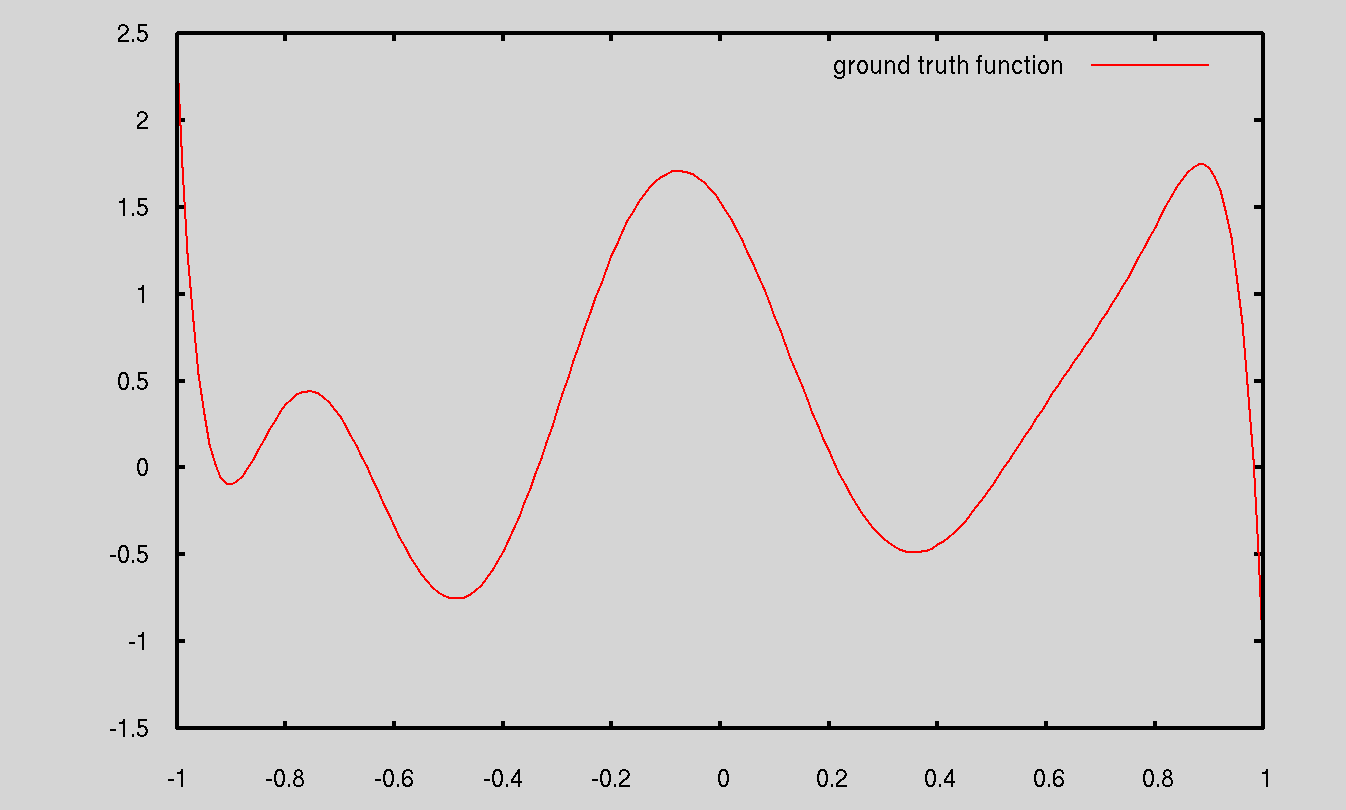}
\caption{A ground truth function $f$ sampled from the model
\eqref{eq:truefunc}.}
\end{center}
\label{fig:truefuncsample}
\end{figure}

Equilibration was run for $50$ steps, as described in
Section~\ref{sec:simucontrol}. We presented a random sequence of $1,000$
samples $(z,f(z))$ with uniform random $z\in [-1;1]$.

We tested values $\eps,\beta\in \{0.5,0.1,0.01\}$, thus including
relatively large and small values. We tested \algoname,
Pessimistic \algoname, and Centered
\algoname. For reference we also compare to ordinary SGD with learning
rate $\eps\beta$, according to Theorem~\ref{thm:sgd}. The results are
reported in Fig.~\ref{fig:linregresults}.

For $\beta=0.01$, the curves are virtually indistinguishable. For
$\beta=0.1$ there are some slight differences: Centered \algoname is
virtually indistinguishable from SGD, in accordance with theory, while
Pessimistic \algoname seems to have a slightly lower error, and \algoname
a slightly larger one.

Results are more interesting with the more aggressive setting $\beta=.5$.
Here, once more, Centered \algoname stays very close to SGD, but the
differences get more pronounced for the other variants. For small $\eps$,
Pessimistic \algoname has the best performance while \algoname is worse.
However, when $\eps$ gets larger, Pessimistic \algoname becomes less
numerically stable.

The most surprising result is with the most aggressive setting
$\beta=.5$, $\eps=.5$, corresponding to the largest learning rate
$\eps\beta$. With this setting, SGD gets unstable (the learning rate
is too large), and so do Pessimistic \algoname and Centered \algoname.
However, \algoname optimizes well. So, in this experiment, \algoname
seems to be more stable than SGD and supports larger learning rates, with
settings for $\beta$ and $\eps$ that clearly do not have to be very
small.

\begin{figure}
\begin{center}
\includegraphics[width=.3\textwidth]{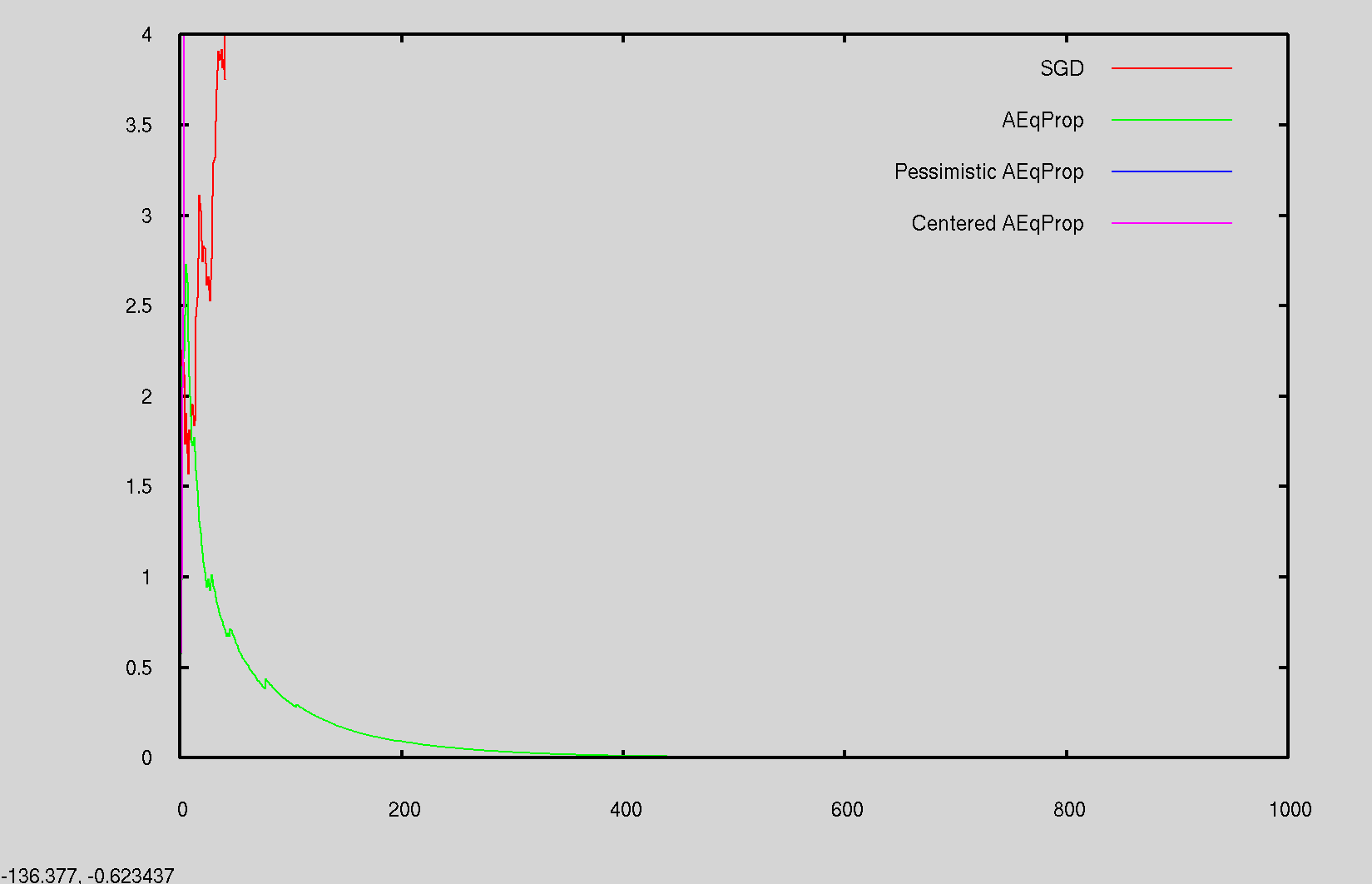}
\includegraphics[width=.3\textwidth]{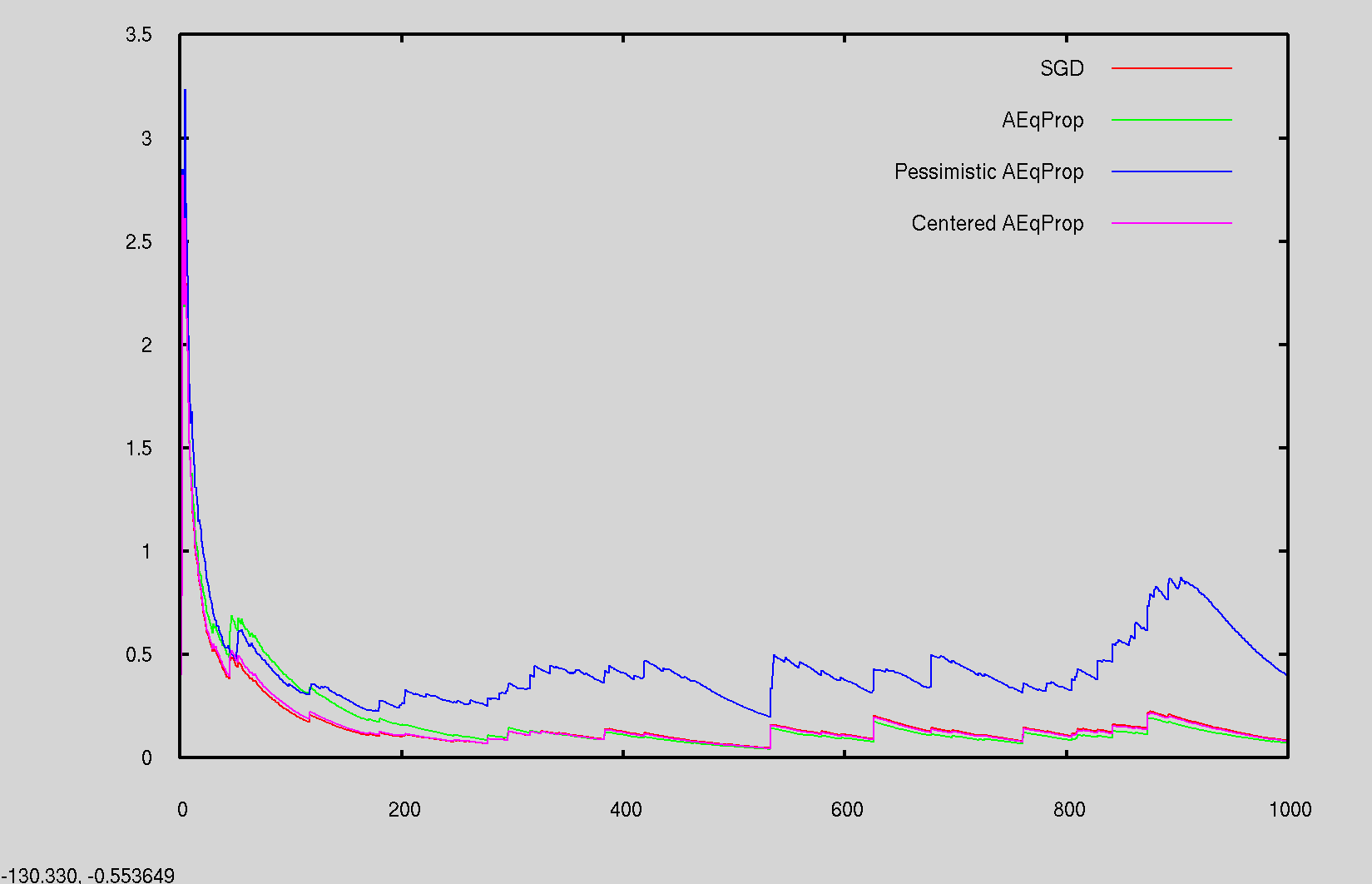}
\includegraphics[width=.3\textwidth]{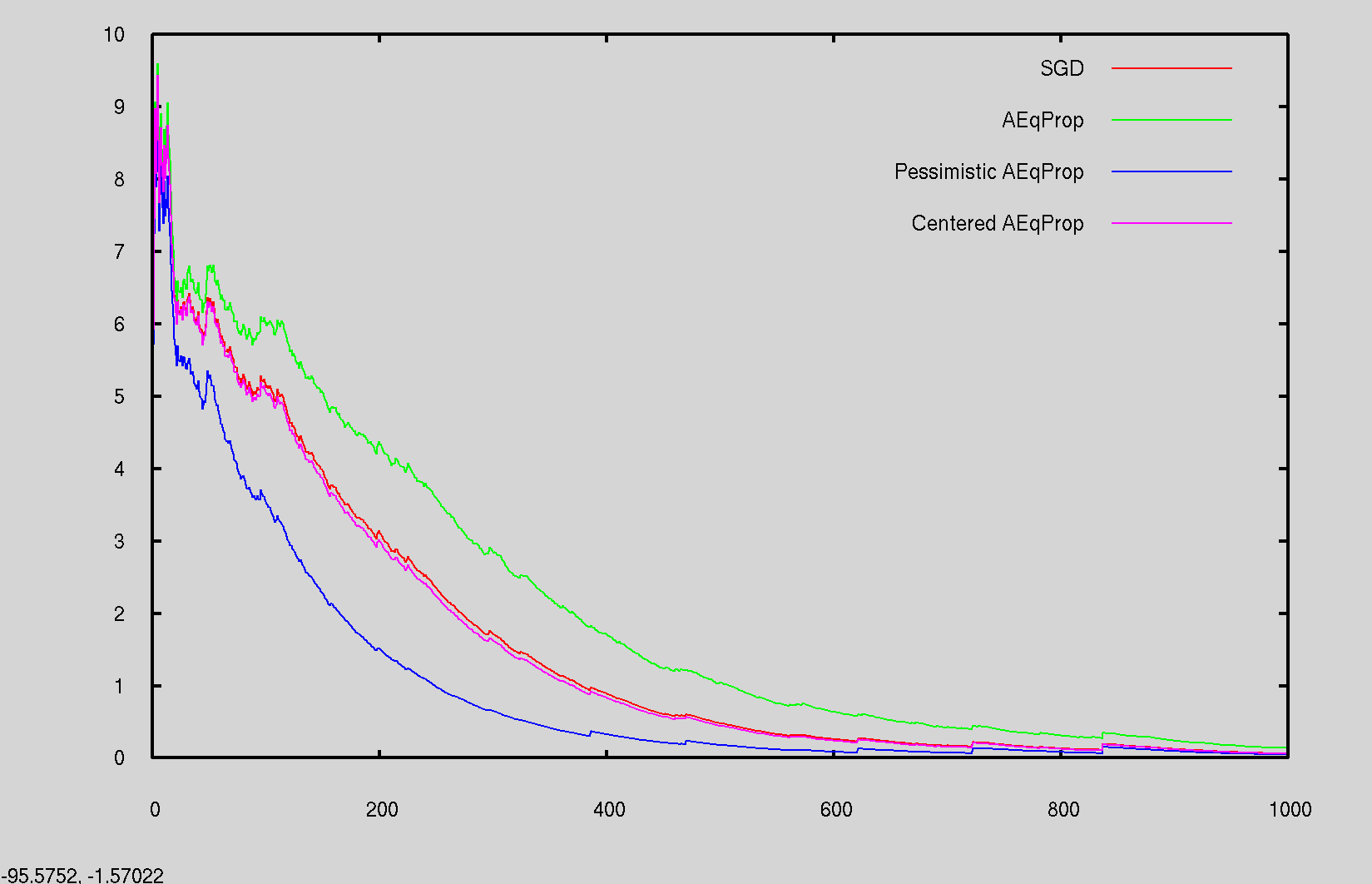}
\\
\includegraphics[width=.3\textwidth]{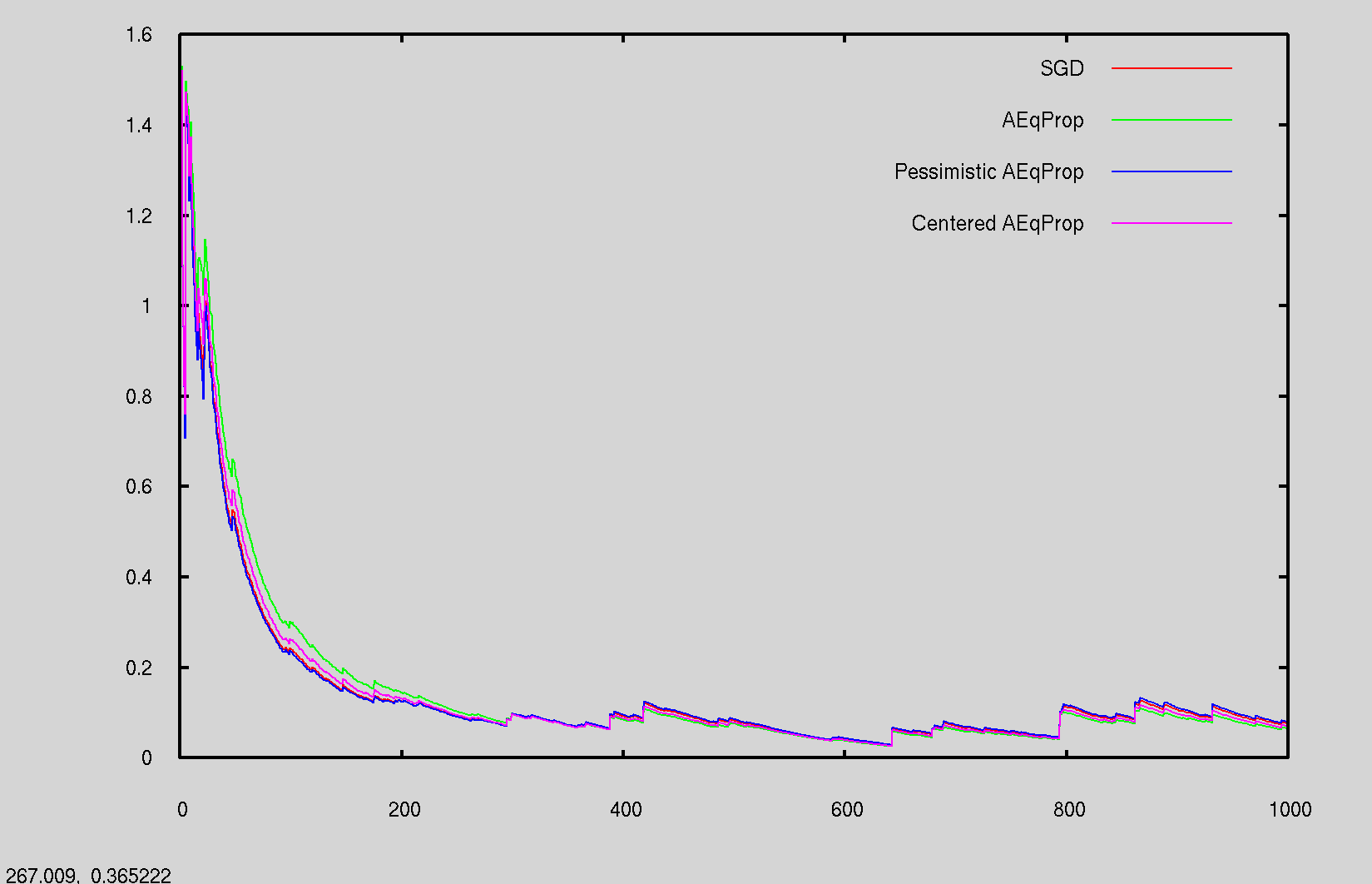}
\includegraphics[width=.3\textwidth]{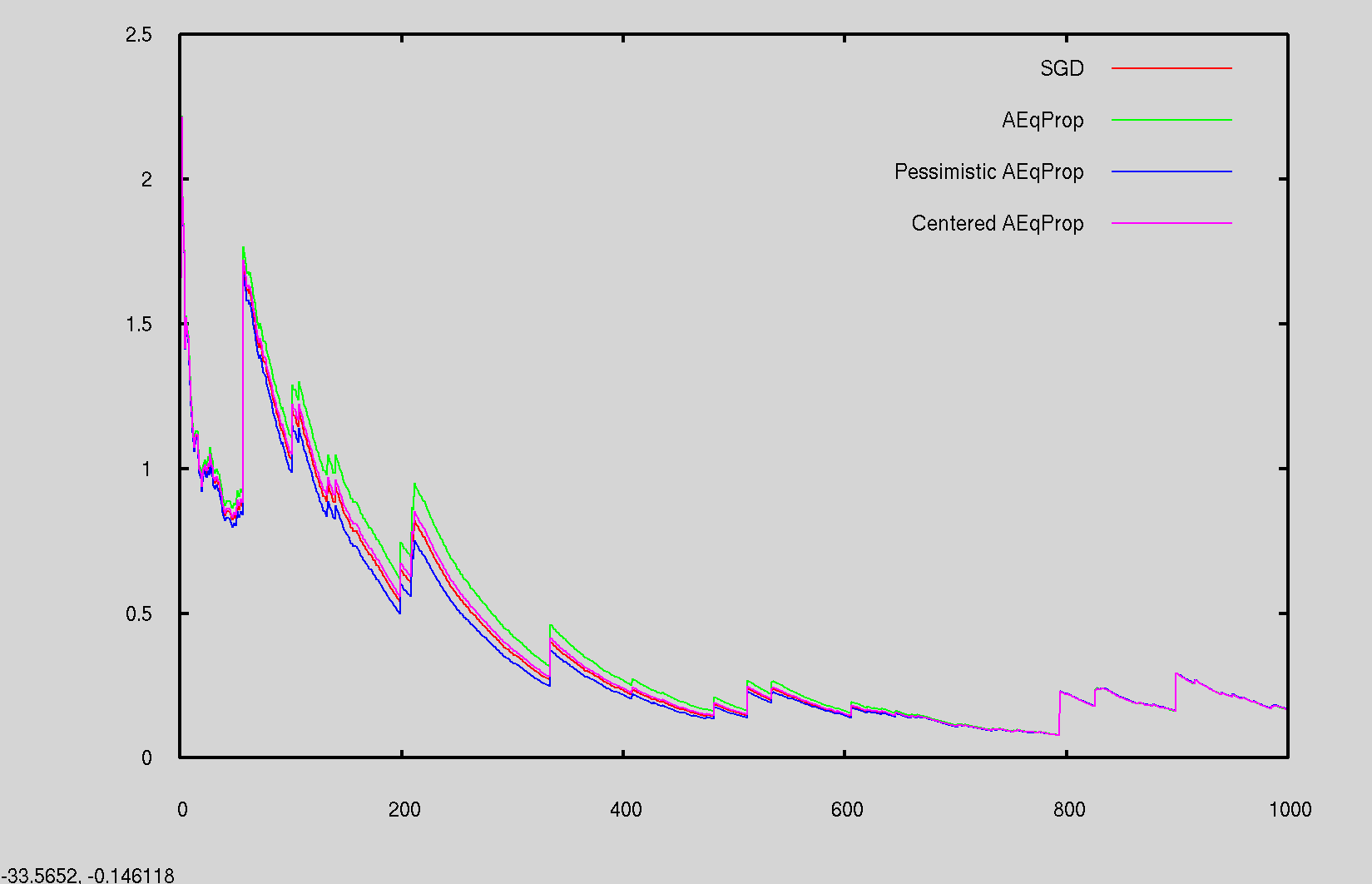}
\includegraphics[width=.3\textwidth]{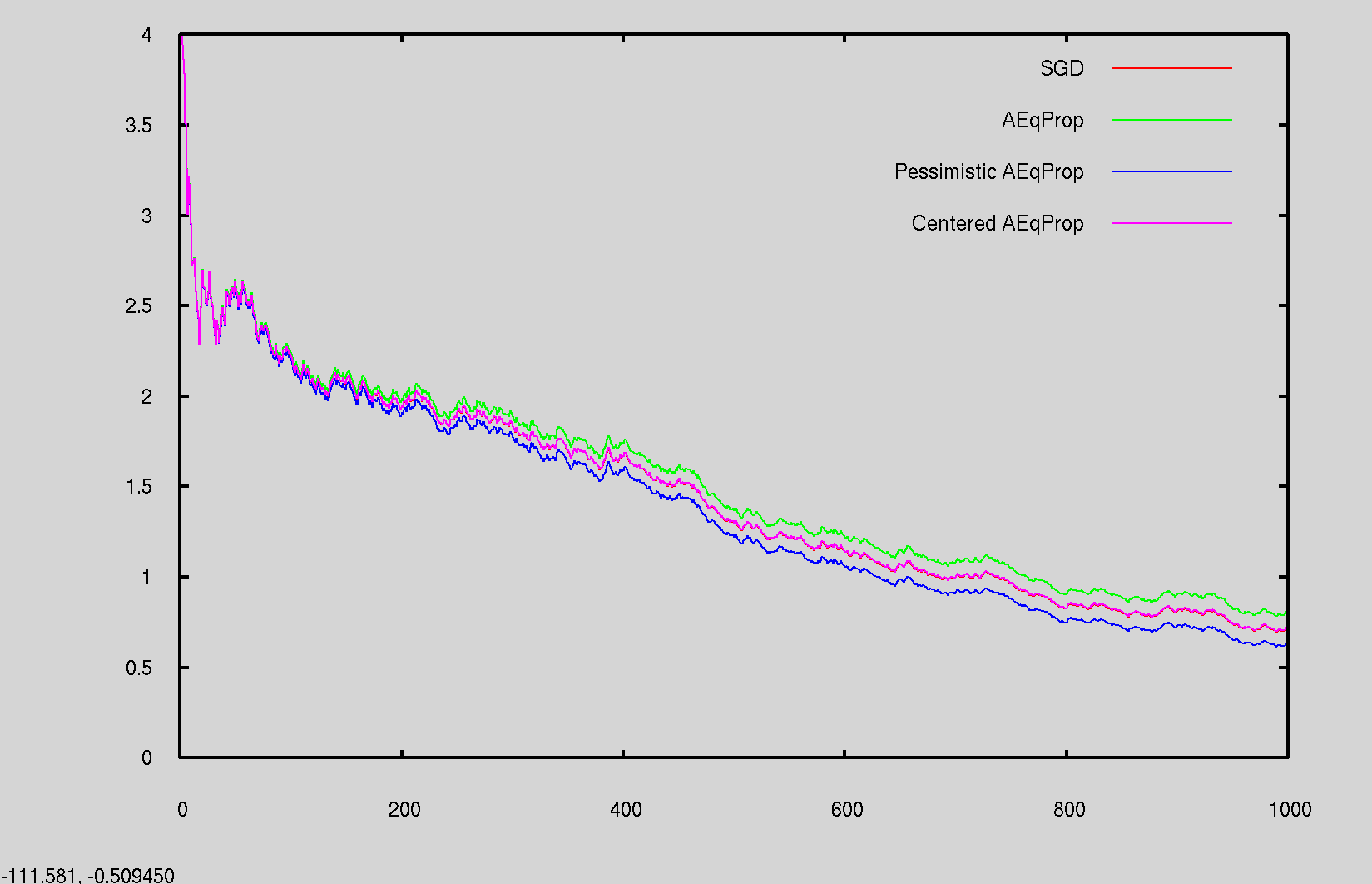}
\\
\includegraphics[width=.3\textwidth]{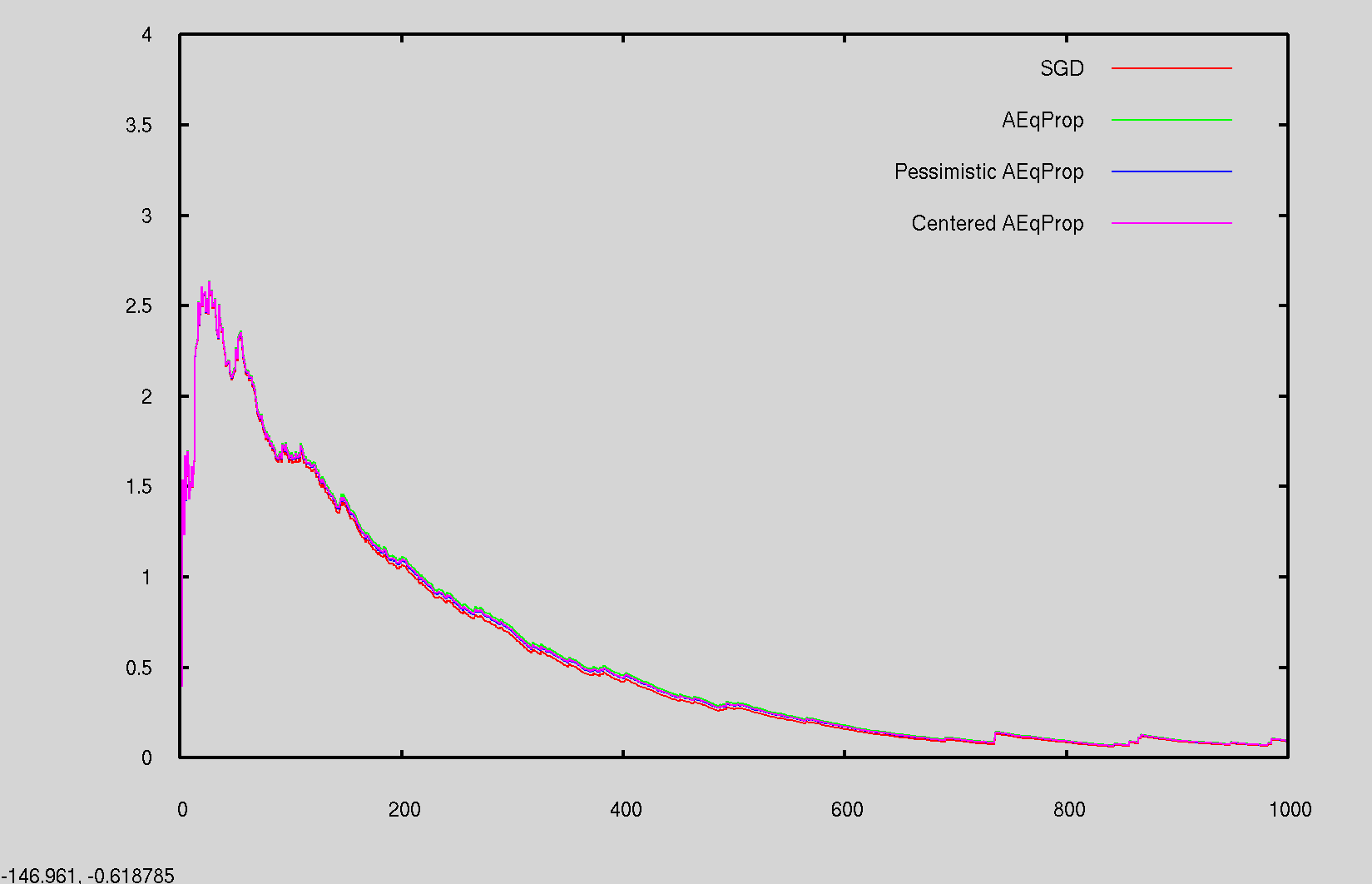}
\includegraphics[width=.3\textwidth]{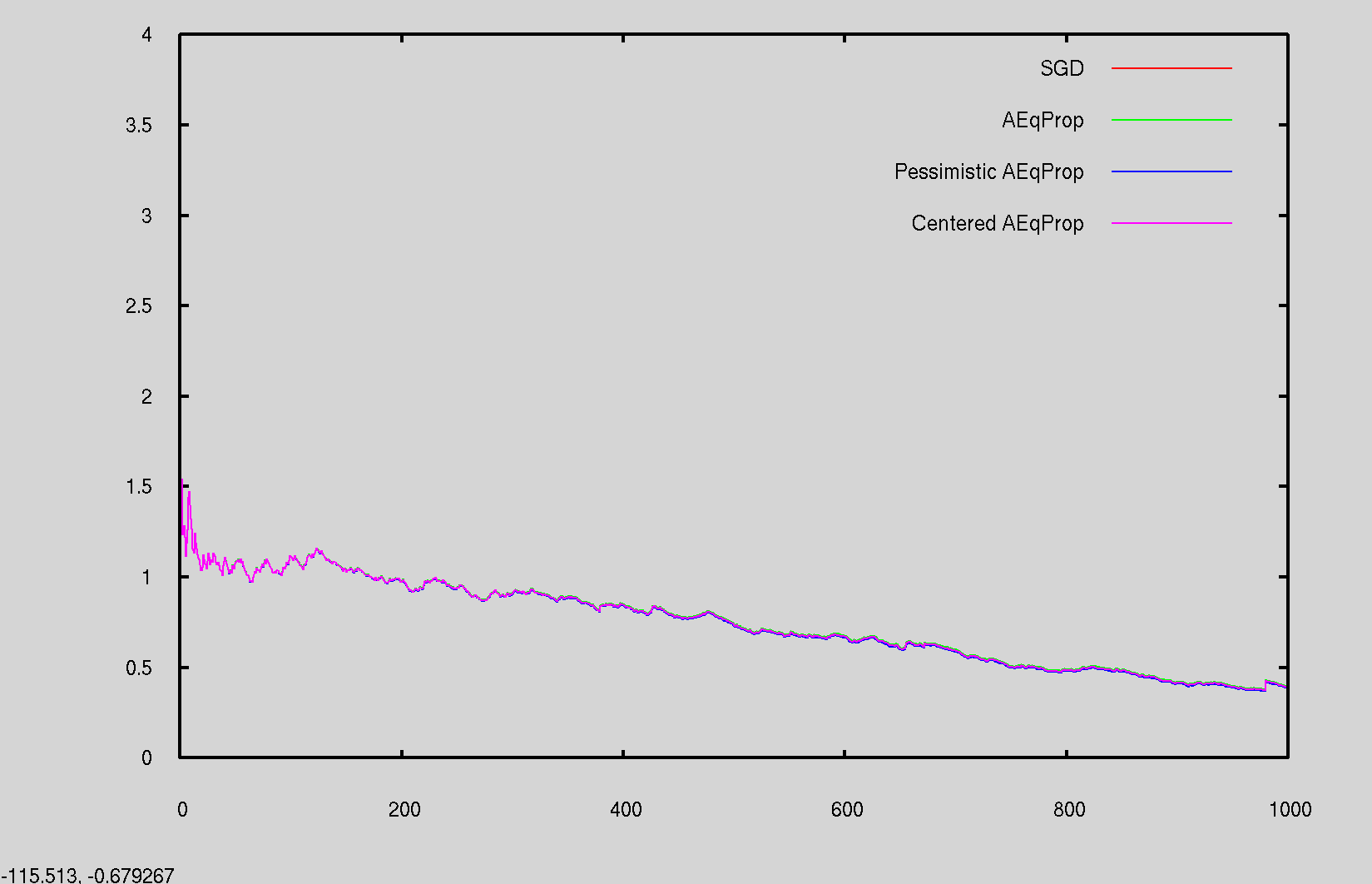}
\includegraphics[width=.3\textwidth]{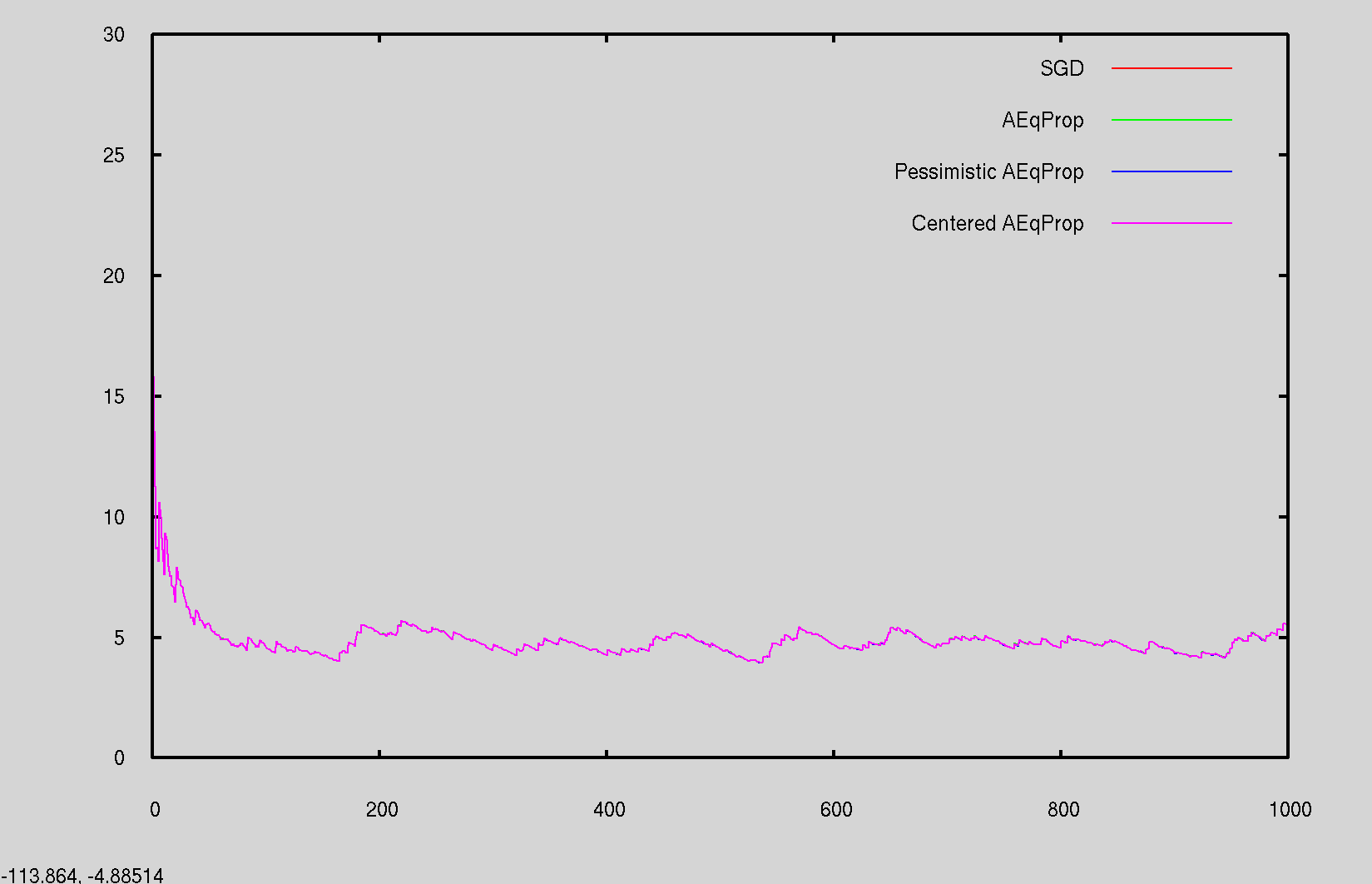}
\end{center}
\caption{SGD, \algoname, Pessimistic \algoname, and Centered
\algoname on the linear regression problem for various values of $\beta$
and $\eps$. Top row: $\beta=0.5$. Middle row: $\beta=0.1$. Bottom row:
$\beta=0.01$. Left column: $\eps=0.5$. Middle column: $\eps=0.1$. Right
column: $\eps=0.01$.}
\label{fig:linregresults}
\end{figure}

The numerical instability of Pessimistic and Centered \algoname for large
$\beta$ is
due to the energy $-\beta C(s,y)$: since $C$ can tend to $\infty$, this
energy is minimized when $C$ is infinite, with $s\to\infty$. This can be
corrected simply by adding $\norm{s}^2$ to the energy function $E$ of the
system: then as long as $\beta<2$ the energy is bounded below and cannot
diverge. This changes the prediction model, however, inducing a
preference for smaller values of $s$.

This is tested in Fig.~\ref{fig:linregresults_regul}:
with $\beta=1.5$, Pessimistic and Centered
\algoname are stable again, and all variants of \algoname seem to work
even in a regime where SGD itself is unstable
(Fig.~\ref{fig:linregresults_regul}, left and middle).
However, convergence seems to be slower (we used $5,000$ samples instead of
$1,000$ in the figure).
Once stabilized, it seems again that Pessimistic \algoname tends
to have smaller error than the other two variants.

\begin{figure}
\begin{center}
\includegraphics[width=.3\textwidth]{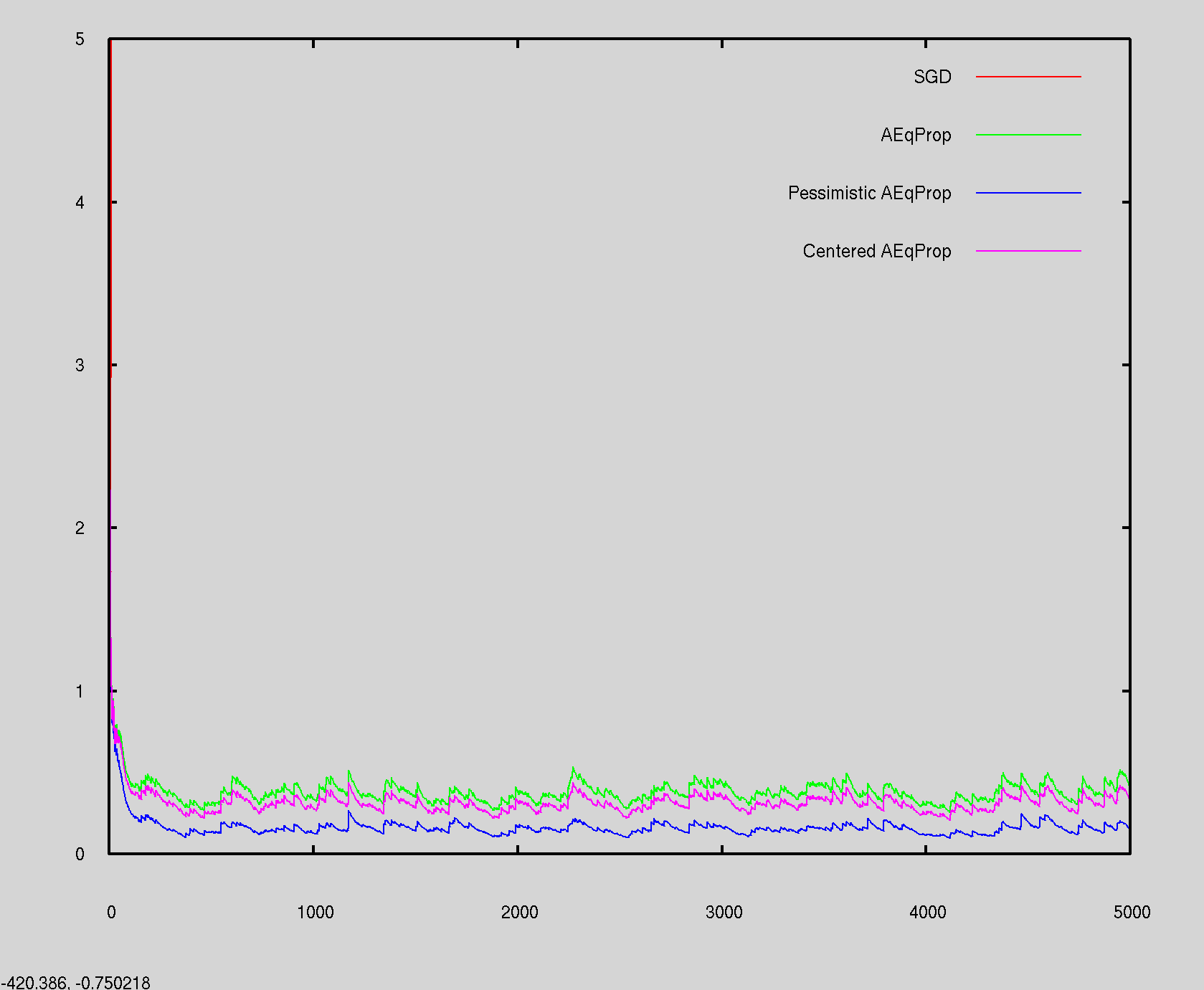}
\includegraphics[width=.3\textwidth]{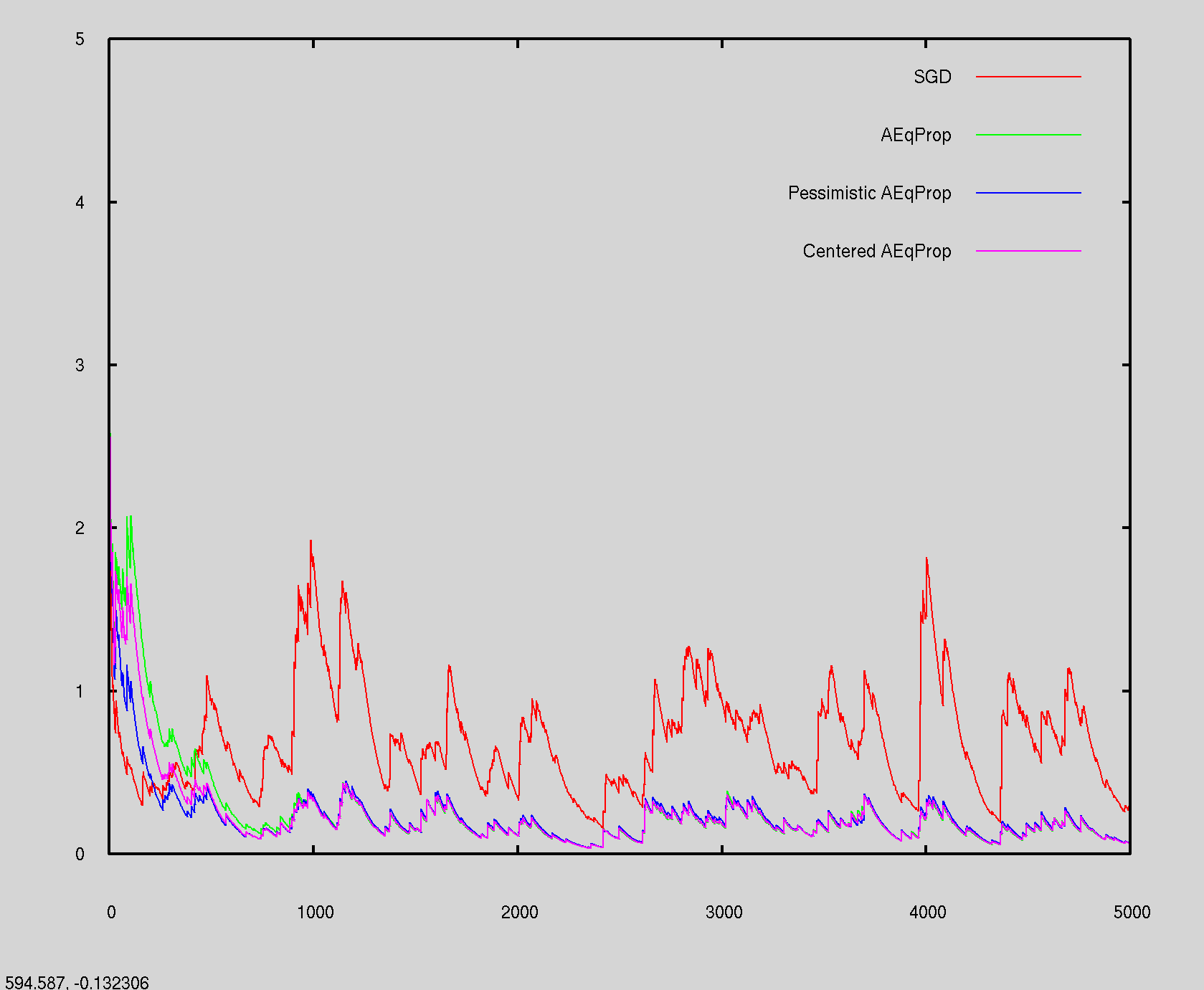}
\includegraphics[width=.3\textwidth]{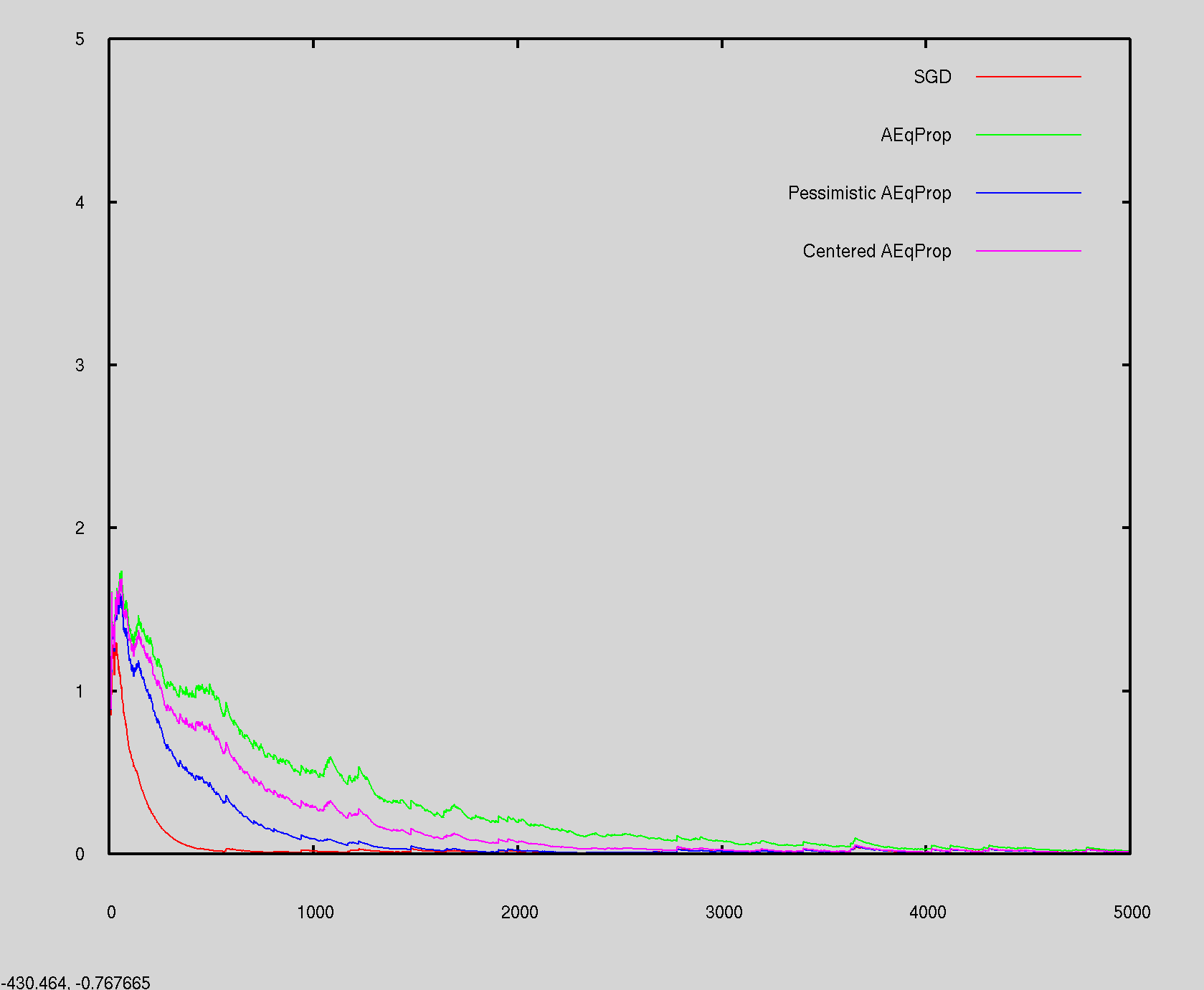}
\end{center}
\caption{SGD, \algoname, Pessimistic \algoname, and Centered
\algoname on the linear regression problem for $\beta=1.5$ and
for $\eps=0.5$ (left)
$\eps=0.1$ (middle), and $\eps=0.01$ (right), with an added term
$\norm{s}^2$ to the energy function.}
\label{fig:linregresults_regul}
\end{figure}

\todo{put some noise on $y$? there's already the noise from choosing the
sample $x$}

\subsection{Hopfield-Like Networks and Real Datasets}
\label{sec:hopfield}

Following the simulations of
\cite{scellier2017equilibrium,ernoult2019updates,laborieux2021scaling}
with Eqprop, we test \algoname on dense and convolutional Hopfield-like
networks. We train the networks on MNIST and FashionMNIST.

\paragraph{Dense Hopfield-like network.}
We consider the setting of classification.
In a layered Hopfield network, the state variable is of the form $s = (s_1, s_2, \ldots, s_N)$ where $s_1, s_2, \ldots, s_{N-1}$ are the `hidden layers' and $s_N$ is the `output layer'. Denoting $s_0 = x$ the inputs, the Hopfield energy function is
\begin{equation}
    E(\theta,x,s) = \sum_{k=1}^N \frac{1}{2} \| s_k \|^2 + \sum_{k=1}^N E_k^{\rm dense}(w_k, s_{k-1}, s_k) - \sum_{k=1}^N b_k^\top s_k,
\end{equation}
where
\begin{equation}
    E_k^{\rm dense}(w_k, s_{k-1}, s_k) := - s_k^\top w_k s_{k-1}
\end{equation}
is the energy of a dense interaction between layers $k-1$ and $k$, parameterized by the $\dim(s_{k-1}) \times \dim(s_k)$ matrix $w_k$. The set of parameters of the model is $\theta = \{ w_k, b_k \mid 1 \leq k \leq N \}$, where $w_k$ are the weights and $b_k$ the biases. We recall that the state of the model at equilibrium given an input $x$ is $s(\theta,x) = \underset{s \in \mathcal{S}}{\min} \; E(\theta,x,s)$, where $\mathcal{S}$ is the state space, i.e. the space of the state variables $s$. We choose $\mathcal{S}$ of the form $\mathcal{S} = \prod_{k=1}^N [p_k,q_k]^{\dim(s_k)}$, where $[p_k,q_k]$ is a closed interval of $\mathbb{R}$ and $\dim(s_k)$ is the number of units in layer $k$. This choice of $\mathcal{S}$ ensures that $s(\theta,x)$ is well defined (there exists a minimum for $E$ in $\mathcal{S}$) and also introduces nonlinearities: for fixed $\theta$, $s(\theta,x)$ is a nonlinear response of $x$.

\paragraph{Convolutional Hopfield-like interactions.}
Convolutional layers can be incorporated to the network by replacing some of the dense interactions $E_k^{\rm dense}$ in the energy function by convolutional interactions:
\begin{equation}
    E_k^{\rm conv}(w_k, s_{k-1}, s_k) := - s_k\bullet\mathcal{P}\left(w_k\star s_{k-1}\right).
\end{equation}
In this expression, $w_k$ is the kernel (the weights), $\star$ is the convolution operation, $\mathcal{P}$ is the average pooling operation, and $\bullet$ is the scalar product for pairs of tensors with same dimension.

\paragraph{Cost function.} We use the squared error cost function $C(s,y)
= \| s_N - y \|^2$, where $s_N$ is the output layer and $y$ is the
one-hot code of the label (in the classification tasks studied here).
\TODO{For the feedforward models we should stick to log-loss}

\paragraph{Energy minimization.} For our simulations, we require a numerical method to minimize the global energy with respect to the `floating variables' ($s_k$, $w_k$ and $b_k$). For each variable $z \in \{ s_k,w_k,b_k \mid 1 \leq k \leq N \}$, we note that the global energy $\mathcal{E}$ is a quadratic function of $z$ given the state of other variables fixed. That is, the global energy as a function of $z$ is of the form $\mathcal{E}(z) = a z^2 + b z + c$, for some real-valued coefficients $a$, $b$ and $c$. The minimum of $\mathcal{E}(z)$ in $\mathbb{R}$ is obtained at $z = - b / 2 a$, and therefore, the minimum in the interval $[p_k,q_k]$ is obtained at $z = \min(\max(p_k, - b / 2 a), q_k)$. We use this property to optimize $\mathcal{E}$ with the following strategy: at each step, we pick a variable $z$ and compute the state of $z$ that minimizes $\mathcal{E}$ given the state of other variables fixed. Then we pick another variable and we repeat. We repeat this procedure until convergence.

\paragraph{Homeostatic control.} To accelerate simulations, we use the following method to save computations in the first phase of training (homeostatic phase): first, we keep $\theta$ fixed to its current value and we calculate the state of the layers ($s_1, s_2, \ldots, s_N$) that minimizes the energy $E + \beta C$ for fixed $\theta$; then we calculate the value of the control knob $u$ for which the parameter $\theta$ is at equilibrium. Recalling that $\mathcal{E} = ||u-\theta||^2 / 2\epsilon + E + \beta C$, this value of $u$ can be computed in one step as it is characterized by $\frac{\partial \mathcal{E}}{\partial \theta} = 0$, i.e. $u = \theta + \epsilon \frac{\partial E}{\partial \theta}$, where $\frac{\partial E}{\partial \theta}$ is the \textit{partial} derivative of $E$ wrt $\theta$ (not the \textit{total} derivative).
For instance, we have $\frac{\partial E}{\partial b_k} = - s_k$ for the bias $b_k$, and $\frac{\partial E}{\partial w_k} = \frac{\partial E_k^{\rm dense}}{\partial w_k} = - s_{k-1} s_k^\top$ for dense weight $w_k$.

\paragraph{MNIST and FashionMNIST.} We train a dense Hopfield-like network and a convolutional Hopfield-like network on MNIST and FashionMNIST. Both networks have an input layer of size $1\times28\times28$ and an output layer with $10$ units. In addition, the dense network has one hidden layer of $2048$ units, whereas the convolutional network has two hidden layers of size a $32\times12\times12$ and $64\times4\times4$ : the first two interactions ($x-s_1$ and $s_1-s_2$) are convolutional with kernel size $5\times5$, zero padding, and average pooling; the last interaction ($s_2-s_3$) is dense.

\paragraph{Baseline.} Since Hopfield networks are defined by energy
minimization, it is not possible to apply backpropagation directly as a
baseline. Still, it is possible to compute gradients via the whole
procedure used to find the approximate energy minimizer: unfold the whole graph
of computations during the free phase minimization (with $\beta=0$), and
compute the gradient of the final loss with respect to the parameters.
Then,
take one step of gradient descent for each parameter $\theta_k$,
with step size $\beta \eps_k$. This is the baseline denoted as
\emph{autodiff} in the table.

However, the autodiff procedure seems to be numerically unstable, for
reasons we have not identified.

The results are reported in Table~\ref{table:mnist}, and show that
\algoname successfully manages to learn on these datasets, even with
the relatively large $\beta$ used ($0.5$ for dense networks and $0.2$ for
convolution-like networks). Centered
\algoname seems to offer the best precision. \todo{cite
a paper for results with similarly-sized models?}


\begin{table}[ht!]
  \caption{Simulation results on MNIST and FashionMNIST. We train dense networks and convolutional Hopfield-like networks. For each experiment, we perform five runs of 200 epochs. For each run we compute the mean test error rate and the mean train error rate over the last 50 epochs. We then report the mean and standard deviation over the 5 runs. Implementation details are provided in Appendix \ref{sec:simulation-details}.
  }
  \label{table:mnist}
  \centering
  \begin{tabular}{ccccc}
    \toprule
    \multicolumn{2}{c}{} & & \multicolumn{2}{c}{Error (\%)}                   \\
    \cmidrule(r){4-5}
    Task                          & Network & Training Method       & Test & (Train) \\
    \midrule
    \multirow{8}{*}{MNIST}        & \multirow{4}{*}{Dense Hopfield-like} & Optimistic \algoname         & $2.36 \pm 0.07$ & ($0.10$) \\
                                  &                                & Pessimistic \algoname & $1.38 \pm 0.03$ & ($0.09$) \\
                                  &                                & Centered \algoname & $\bf{1.29 \pm 0.04}$ & ($0.00$) \\
                                  &                                & Autodiff & $72.37 \pm 35.35$ & ($71.96$) \\
    \cmidrule(r){2-5}
                                  & \multirow{4}{*}{Convolutional Hopfield-like} & Optimistic \algoname & $1.12 \pm 0.07$ & ($3.62$) \\
                                  &                                & Pessimistic \algoname & $1.11 \pm 0.08$ & ($1.73$) \\
                                  &                                & Centered \algoname & $\bf{0.76 \pm 0.05}$ & ($0.24$) \\
                                  &                                & Autodiff & $89.63 \pm 0.96$ & ($89.61$) \\
    \midrule
    \multirow{8}{*}{FashionMNIST} & \multirow{4}{*}{Dense Hopfield-like} & Optimistic \algoname & $10.53 \pm 0.12$ & ($2.30$) \\
                                  &                                & Pessimistic \algoname & $10.73 \pm 0.07$ & ($7.46$) \\
                                  &                                & Centered \algoname & $\bf{9.28 \pm 0.10}$ & ($3.69$) \\
                                  &                                & Autodiff & $10.18 \pm 0.32$ & ($4.25$) \\
    \cmidrule(r){2-5}
                                  & \multirow{4}{*}{Convolutional Hopfield-like} & Optimistic \algoname & $10.69 \pm 0.17$ & ($15.11$) \\
                                  &                                & Pessimistic \algoname & $11.16 \pm 0.20$ & ($9.89$) \\
                                  &                                & Centered \algoname & $\bf{9.17 \pm 0.19}$ & ($7.00$) \\
                                  &                                & Autodiff & $29.55 \pm 30.47$ & ($28.27$) \\
    \bottomrule
  \end{tabular}
\end{table}



\section{Related Work}

There is a considerable amount of literature on the design of fast and
energy-efficient learning systems. While some aim to improve the digital
hardware for running and training existing deep learning algorithms,
others focus on designing novel algorithms for neural network training
and inference on new energy-efficient hardware. We can differentiate this
literature according to the presence or not of an explicit model of the physical
computation performed.

\paragraph{Explicit approaches.} A first approach is to improve the hardware for running neural networks and training them via backpropagation. For instance, \cite{courbariaux2015binaryconnect} 
explore the use of specialized digital processors for low-precision
tensor multiplications, whereas
\cite{ambrogio2018equivalent,xia2019memristive} investigate the use of
crossbar arrays to perform matrix-vector multiplications in analog.
However, given the mismatches in analog devices, the latter approach
requires mixed digital/analog hardware. These approaches can
be classified as explicit, as the state of the underlying system
can be expressed as $s = f(\theta,x)$ where $f = f_N \circ \ldots \circ
f_2 \circ f_1$ is the composition of (elementary) functions defined by
analytical formulae. 

\emph{Physics-aware training} \citep{wright2022deep} is a hybrid physical-digital approach in which
inference is carried out on an energy-efficient physical device, but
parameter training is done using gradients computed from an explicit digital model
of the physical device. This is demonstrated on machine learning
tasks using various examples of physical realizations
(optical, mechanical, electronic).

\emph{Spiking neural networks} (SNNs) are networks of individual units that communicate through low-energy electrical pulses, mimicking the spikes of biological neurons \citep{zenke2021visualizing}.
Most SNN models are explicit, and are confronted to the problem of
`differentiation through spikes', which arises when using the chain rule
of differentiation to compute the gradients of the loss\todo{REF}.
However, the implicit Eqprop framework has also been used to train SNNs \citep{mesnard2016towards,o2019training,martin2021eqspike}.

\todo{Perhaps some day, discuss other possible physical realizations? electrical circuits with memristors, ?proteins... But stick to related work, not perspectives}

\paragraph{Implicit approaches.} In contrast,
implicit approaches aim to harness the underlying physical laws of the
device for training and inference. These laws are
seldom in the form of explicit analytical formulae; rather, they are
often
characterized by the minimization
of an \emph{energy} function. For instance, the equilibrium state of an
electrical circuit composed of nonlinear resistors (such as diodes) is
given by the minimization of the so-called \emph{co-content}
\citep{millar1951cxvi}, a nonlinear analogue of electrical power.

Equilibrium propagation (Eqprop) was designed to train neural networks in
this setting \citep{scellier2017equilibrium}. 
First formulated in the context of Hopfield networks, Eqprop has then been deployed in the
context of nonlinear resistive networks and other physical systems
\citep{kendall2020training,scellier2020deep}. The feasibility of Eqprop
training has been further demonstrated empirically on a small resistive
circuit \citep{dillavou2021demonstration}.
A `centered' version of Eqprop was also proposed and tested numerically \citep{laborieux2021scaling}.

\cite{stern2021supervised} propose a variant of Eqprop called
\emph{coupled learning}. In the
second phase, rather than nudging the output unit by adding an energy
term $\beta C$ to the system, the output unit is clamped to $\beta y
+(1-\beta) y_0$, where $y_0$ is the output unit's equilibrium state
without nudging (i.e. the `prediction') and $y$ is the desired output.

Yet, as mentioned in
Section~\ref{sec:introduction} (see also Appendix \ref{sec:eqprop} for more
details), Eqprop as well as the variant of \cite{stern2021supervised} require explicit knowledge of the underlying energy
function, storage of the equilibrium state of the first phase, and
additional mechanisms for updating the parameters. \algoname overcomes all these three
limiting factors. 

The second of these three issues is considered in
another variant of Eqprop, \textit{Continual Eqprop} (CEP)
\citep{ernoult2020equilibrium}: the parameters are updated
continually in the second phase of training to avoid storing the first
equilibrium state. However, the dynamics of the parameters in CEP is
chosen \textit{ad hoc}: no physical mechanism is proposed to account for
the specific dynamics of the parameters.
\cite{anisetti2022learning} propose a different solution to the problem
of storing the first equilibrium state in Eqprop: in the second phase, 
another physical quantity (e.g. the concentration of a chemical) is used to play
the role of error signals.


\section{Discussion, Limitations, and Conclusion}
\label{sec:discussion}
In this paper, we have proposed Agnostic equilibrium propagation (\algoname), a novel algorithm by which physical systems can perform stochastic gradient descent without explicitly computing gradients. \algoname leverages energy minimization, homeostatic control and nudging towards the desired output to obtain an accurate estimate of the result of a gradient descent step (Theorem \ref{thm:sgd}). Although it builds upon equilibrium propagation (Eqprop) \citep{scellier2017equilibrium}, \algoname distinguishes itself from Eqprop in the following ways; i) it does not require any explicit knowlegde of the analytical form of the underlying energy function, ii) the equilibrium state at the end of the first phase is not needed to be stored and iii) the parameter update at the end of a gradient step is performed automatically in \algoname and no additional mechanism needs to be introduced to perform this update. Thus, \algoname mitigates major limitations of Eqprop (and its variants) and, in principle, significantly increases the range of hardware on which statistical learning can be performed.  

In addition to showing that \algoname estimates gradient descent steps
accurately, we have also derived a Lyapunov function for \algoname and
proved that this Lyapunov function improves monotonically along the
\algoname trajectory, suggesting enhanced robustness of this algorithm
with non-infinitesimal step sizes, compared to standard SGD. Moreover, we
consider different variants of \algoname (optimistic, pessimistic and
centered), each with desirable properties. In particular, the pessimistic
version of \algoname optimizes an upper bound of the true loss function
whereas the centered version provides a better (second-order)
approximation of the loss. We also illustrated \algoname (and its
variants) numerically with a simple linear regression example as well as
with Hopfield-like networks on the MNIST and
FashionMNIST datasets, showing that \algoname successfully implements gradient
descent learning in practice. 

At this stage, it is germane to examine the main assumptions on which
\algoname rests. Clearly, a large number of physical systems are based on
energy minimization and can be used in the context of \algoname, as they
have already been for Eqprop. Nudging
towards a desired output is a key design principle of \algoname as well
as Eqprop. Such nudging has been realized on model physical systems such as
in \cite{dillavou2021demonstration} and references therein, and
such devices can, in principle, be used in the context of \algoname as
well.
Next, our technical assumptions require the existence and smoothness of
local minimizers of the energy function. Uniqueness of the minimizer is not
required, but then, the system should remain around one of its possible
modes for the duration of training; training will be perturbed if the
system jumps to another minimizer.

Homeostatic control of the parameters is a limiting assumption for \algoname. In
this context, we would like to point that there is quite a bit of
flexibility in terms of the form of the coupling energy $U$. It need not
be quadratic: an arbitrary form of $U$ leads to a Riemannian instead of
Eucliean gradient
descent step, still decreasing the Lyapunov error function. Similarly, the coupling parameter $\eps$ need not be
infinitesimal: relatively large values of $\eps$ (weak coupling) are allowed. In the end, it
is hard to imagine a way to train parameters without some means of
monitoring and controlling them. In \algoname this control only takes the
form of being able to maintain stasis of the current parameters by
adjusting some control knobs.
\algoname offers one way to build gradient
descent from such a generic control mechanism.

Finally, the existence of \algoname has a more general interest, in
showing that generic, physically plausible ingredients such as
homeostatic control and output nudging are enough, in principle, for
natural (bio)physical systems to exhibit genuine
gradient descent learning.



\begin{ack}
The authors would like to thank Léon Bottou for insightful comments on
the method and assumptions. The research of BS and SM was partly performed under a project that has received funding from the European Research Council (ERC) under the European Union’s Horizon 2020 research and innovation programme (grant agreement No. 770880).
YB was funded by Samsung for this work.
\end{ack}

\bibliographystyle{abbrvnat}
\bibliography{biblio}

\neuripsonly{
\section*{Checklist}

\todo{}

\begin{enumerate}

\item For all authors...
\begin{enumerate}
  \item Do the main claims made in the abstract and introduction accurately reflect the paper's contributions and scope?
    \answerYes{}
  \item Did you describe the limitations of your work?
    \answerYes{See assumptions stated in section \ref{sec:aeqprop} as well as the discussion in section~\ref{sec:discussion}. See also Appendix \ref{sec:techhyp} for the precise mathematical assumptions under which the theoretical results hold.}
  \item Did you discuss any potential negative societal impacts of your work?
    \answerNA{Our work is a theoretical investigation which may have long term implications for the design of hardware for AI. It is difficult to assess today what future impacts (positive or negative) it could have on society.}
  \item Have you read the ethics review guidelines and ensured that your paper conforms to them?
    \answerYes{}
\end{enumerate}

\item If you are including theoretical results...
\begin{enumerate}
  \item Did you state the full set of assumptions of all theoretical results?
    \answerYes{See Appendix \ref{sec:techhyp}.}
        \item Did you include complete proofs of all theoretical results?
    \answerYes{See Appendix \ref{sec:proofs}.}
\end{enumerate}

\item If you ran experiments...
\begin{enumerate}
  \item Did you include the code, data, and instructions needed to reproduce the main experimental results (either in the supplemental material or as a URL)?
    \answerYes{See the supplemental material}
  \item Did you specify all the training details (e.g., data splits, hyperparameters, how they were chosen)?
    \answerYes{See section \ref{sec:numerical-illustration} and appendix \ref{sec:simulation-details}}
        \item Did you report error bars (e.g., with respect to the random seed after running experiments multiple times)?
    \answerTODO{}
        \item Did you include the total amount of compute and the type of resources used (e.g., type of GPUs, internal cluster, or cloud provider)?
    \answerTODO{See Appendix \ref{sec:simulation-details}}
\end{enumerate}

\item If you are using existing assets (e.g., code, data, models) or curating/releasing new assets...
\begin{enumerate}
  \item If your work uses existing assets, did you cite the creators?
    \answerYes{We use PyTorch and TorchVision. See Appendix \ref{sec:simulation-details}.}
  \item Did you mention the license of the assets?
    \answerTODO{}
  \item Did you include any new assets either in the supplemental material or as a URL?
    \answerYes{We provide our code for the simulations in the supplemental material}
  \item Did you discuss whether and how consent was obtained from people whose data you're using/curating?
    \answerNA{}
  \item Did you discuss whether the data you are using/curating contains personally identifiable information or offensive content?
    \answerNA{}
\end{enumerate}

\item If you used crowdsourcing or conducted research with human subjects...
\begin{enumerate}
  \item Did you include the full text of instructions given to participants and screenshots, if applicable?
    \answerNA{}
  \item Did you describe any potential participant risks, with links to Institutional Review Board (IRB) approvals, if applicable?
    \answerNA{}
  \item Did you include the estimated hourly wage paid to participants and the total amount spent on participant compensation?
    \answerNA{}
\end{enumerate}

\end{enumerate}

}

\clearpage
\appendix

\section{A Generalization of Theorem~\ref{thm:sgd}: \algoname with
Large $\eps$ or $\beta$, Centered and Pessimistic \algoname}
\label{sec:riemannian-sgd}

We now extend Theorems~\ref{thm:sgd} and~\ref{thm:lyapunov} in the following directions:
\begin{itemize}
\item Variants such as Pessimistic \algoname and Centered \algoname
(Section~\ref{sec:variants}) are
covered.
\item Only one of $\eps$ or $\beta$ needs to tend to $0$.
\item The control energy $U(u,\theta)$ is not necessarily the quadratic
$\norm{u-\theta}^2 / 2$.
\end{itemize}

So at each instant, we set input knobs $x$, control knobs $u$, and possibly (if $\beta>0$) a desired output $y$, and assume that the system reaches an equilibrium $(\theta_\star,s_\star) = \argmin_{(\theta,s)} \; \mathcal{E}(u,\theta,s,x,y,\eps,\beta)$, where
\begin{equation}
\label{eq:generalenergy}
\mathcal{E}(u,\theta,s,x,y,\eps,\beta) :=
U(u,\theta)/\eps+E(\theta,x,s)+\beta C(s,y)
\end{equation}
is the global energy function of the system. (Assumption~\ref{hyp:tech} in Appendix \ref{sec:proofs} ensures the argmin is well-defined.)

Then we follow the \algoname procedure from Section~\ref{sec:aeqprop}.
To cover variants like Pessimistic and Centered \algoname, here we use
two values $\beta_1<\beta_2$ in the two phases of the algorithm.
Namely, we first set a control value $u_{t}$ such
that the equilibrium value $\theta_{t-1}$ does not change when we
introduce the new input $x_t$, the desired output $y_t$ and nudging $\beta_1$. Then we obtain the next parameter by changing the nudging to $\beta_2$ and letting the system reach
equilibrium.

This time, we define the Lyapunov function
\begin{equation}
\label{eq:generallyapunov}
\lyap_{\beta_1;\beta_2}(\theta,x,y)\deq \frac{1}{\beta_2-\beta_1}
\int_{\beta^\prime=\beta_1}^{\beta_2} C(s_{\beta^\prime}(\theta,x,y),y)\d
\beta^\prime
\end{equation}
where as before,
\begin{equation}
s_\beta(\theta,x,y)\deq\argmin_s \{E(\theta,x,s)+\beta C(s,y)\}.
\end{equation}
This Lyapunov function tends to the loss
\begin{equation}
\lyap(\theta,x,y)\deq C(s(\theta,x),y)
\end{equation}
when $\beta_1$ and $\beta_2$ tend to $0$, where $s(\theta,x) \deq\argmin_s E(\theta,x,s) = s_0(\theta,x,y)$ by definition.

The next theorem states that \algoname performs a step of \emph{Riemannian} stochastic gradient descent for the input-output pair $(x_t,y_t)$, with step size (learning rate) $\eps(\beta_2-\beta_1)$, loss function $\lyap_{\beta_1;\beta_2}$, and preconditioning matrix (Riemannian metric) $M$.
When $\beta_1$ and $\beta_2$ both tend to $0$, the Lyapunov function $\lyap_{\beta_1;\beta_2}(\theta_{t-1},x_t,y_t)$ tends to the loss
$\lyap(\theta_{t-1},x_t,y_t)$, thus recovering (Riemannian) stochastic gradient descent with the ordinary loss function. This theorem gives a better description of the behavior of \algoname when $\beta_1$ and $\beta_2$ are not $0$: it still follows the (Riemannian) gradient descent of the closely related function $\lyap_{\beta_1;\beta_2}$.

\begin{thm}
\label{thm:riemannian-sgd}
Let $\theta_{t-1}$ be some parameter value. Let $\beta_1<\beta_2$.
Let $x_t$ and $y_t$ be some
input and output value. Let $u_t$ be a control value such that
\begin{equation}
\label{eq:controlstep2}
\theta_{t-1}=\argmin_\theta \;
\min_s \; \mathcal{E}(u_{t},\theta,s,x_t,y_t,\eps,\beta_1)
\end{equation}
and let
\begin{equation}
\label{eq:nudgestep2}
\theta_{t}=\argmin_{\theta} \;
\min_s \; \mathcal{E}(u_{t},\theta,s,x_t,y_t,\eps,\beta_2)
\end{equation}
working under the technical assumptions of Section~\ref{sec:techhyp}.

Then, 
for any $\eps>0$ and $\beta_2>\beta_1$ (not necessarily tending
to $0$) we have the Lyapunov property
\begin{equation}
\lyap_{\beta_1;\beta_2}(\theta_t,x_t,y_t)\leq
\lyap_{\beta_1;\beta_2}(\theta_{t-1},x_t,y_t).
\end{equation}

Moreover, when
either $\eps$, or $\beta_2-\beta_1$, or both, tend to $0$, we have
\begin{equation}
\label{eq:riemsgdstep}
\theta_t = \theta_{t-1} - \eps(\beta_2-\beta_1) \,
M^{-1}
\partial_\theta
\lyap_{\beta_1;\beta_2}(\theta_{t-1},x_t,y_t)+O(\eps^2(\beta_2-\beta_1)^2)
\end{equation}
where $\lyap_{\beta_1;\beta_2}$ is the Lyapunov function
\eqref{eq:generallyapunov}, and where $M$ is the positive definite matrix
\begin{align}
M = M_{\beta_1}^\eps(\theta_{t-1},x_t,y_t)
& \deq \eps \, \partial^2_\theta \left[
\min_s \mathcal{E}(u_t,\theta_{t-1},s,x_t,y_t,\eps,\beta_1) \right] \\
& = \partial^2_\theta \left[ U(u_t,\theta_{t-1}) + \eps \, \min_s \{E(\theta_{t-1},x_t,s)+\beta_1 C(s,y_t)\} \right]
\end{align}

When $\eps\to 0$ we have
$
M=\partial^2_\theta U(u_t^0,\theta_{t-1})+O(\eps)
$ 
where $u_t^0$ is such that $\theta_{t-1}=\argmin_\theta U(u_t^0,\theta)$. In
particular, if $U(u,\theta)=\norm{u-\theta}^2/2$ then $M=\Id+O(\eps)$.

Finally, the Lyapunov function enjoys the following properties. For any $\beta_1<0<\beta_2$, we have
\begin{equation}
\label{eq:lyapbounds}
\lyap_{0;\beta_2}(\theta,x,y)\leq C(s(\theta,x),y)\leq \lyap_{\beta_1;0}(\theta,x,y).
\end{equation}
In particular, Pessimistic \algoname optimizes an upper bound of the loss function.
Moreover, when $\beta_1$ and $\beta_2$ tend to $0$ we have
\begin{equation}
\label{eq:lyaptaylor1}
\lyap_{\beta_1;\beta_2}(\theta,x,y)=C(s(\theta,x),y)+O(\abs{\beta_1}+\abs{\beta_2}),
\end{equation}
and if $\beta_2=-\beta_1=\beta/2$ (Centered \algoname),
\begin{equation}
\label{eq:lyaptaylor2}
\lyap_{-\beta/2;\beta/2}(\theta,x,y)=C(s(\theta,x),y)+O(\beta^2).
\end{equation}
\end{thm}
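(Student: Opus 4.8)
The plan is to minimize out the state $s$ first and work with the scalar free energy it produces. For fixed $(x,y)$ define $F_\beta(\theta)\deq\min_s\{E(\theta,x,s)+\beta C(s,y)\}$, so that $\min_s\mathcal{E}(u_t,\theta,s,x,y,\eps,\beta)=U(u_t,\theta)/\eps+F_\beta(\theta)\eqd\Phi_\beta(\theta)$. Applying the envelope theorem at the minimizer $s_\beta$ --- whose stationarity condition annihilates the implicit $\beta$-dependence --- gives $\partial_\beta F_\beta(\theta)=C(s_\beta(\theta,x,y),y)$. Integrating this identity collapses the integral defining \eqref{eq:generallyapunov} into a finite difference,
\[ \lyap_{\beta_1;\beta_2}(\theta,x,y)=\frac{F_{\beta_2}(\theta)-F_{\beta_1}(\theta)}{\beta_2-\beta_1}, \]
which is the single fact driving every part of the theorem.

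For the Lyapunov inequality I would use an exchange argument requiring no smallness. By \eqref{eq:controlstep2}--\eqref{eq:nudgestep2}, $\theta_{t-1}$ minimizes $\Phi_{\beta_1}$ and $\theta_t$ minimizes $\Phi_{\beta_2}$, so $\Phi_{\beta_1}(\theta_{t-1})\leq\Phi_{\beta_1}(\theta_t)$ and $\Phi_{\beta_2}(\theta_t)\leq\Phi_{\beta_2}(\theta_{t-1})$. Summing the two inequalities, the control terms $U(u_t,\theta_{t-1})/\eps$ and $U(u_t,\theta_t)/\eps$ appear identically on both sides and cancel, leaving $F_{\beta_2}(\theta_t)-F_{\beta_1}(\theta_t)\leq F_{\beta_2}(\theta_{t-1})-F_{\beta_1}(\theta_{t-1})$; dividing by $\beta_2-\beta_1>0$ is exactly $\lyap_{\beta_1;\beta_2}(\theta_t)\leq\lyap_{\beta_1;\beta_2}(\theta_{t-1})$. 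Since this uses only that the two points are the respective minimizers, it holds for every $\eps>0$ and $\beta_2>\beta_1$, as claimed.

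For the gradient-step formula \eqref{eq:riemsgdstep} I would regard $\theta$ as an implicit function $\theta(\beta)$ defined, at fixed $u_t$ and $\eps$, by the stationarity condition $\partial_\theta\Phi_\beta(\theta(\beta))=0$, with $\theta(\beta_1)=\theta_{t-1}$ and $\theta(\beta_2)=\theta_t$. Since $\partial^2_\theta\Phi_\beta=\partial^2_\theta U/\eps+\partial^2_\theta F_\beta=M/\eps$ (which is exactly the claimed identity $M=\eps\,\partial^2_\theta[\min_s\mathcal{E}]$) and $\partial_\beta\partial_\theta\Phi_\beta=\partial_\theta C(s_\beta(\theta),y)$, the implicit function theorem gives $\theta'(\beta)=-\eps\,M(\beta)^{-1}\partial_\theta C(s_\beta(\theta(\beta)),y)$. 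Integrating from $\beta_1$ to $\beta_2$ and replacing $\theta(\beta)$ and $M(\beta)$ by their endpoint values turns $\int_{\beta_1}^{\beta_2}\partial_\theta C(s_\beta(\theta_{t-1}),y)\,\d\beta$ into $(\beta_2-\beta_1)\,\partial_\theta\lyap_{\beta_1;\beta_2}(\theta_{t-1})$, producing the leading term of \eqref{eq:riemsgdstep}. The $\eps\to0$ form of $M$ follows because the control condition $\partial_\theta U(u_t,\theta_{t-1})=-\eps\,\partial_\theta F_{\beta_1}(\theta_{t-1})$ forces $u_t\to u_t^0$ with $\partial_\theta U(u_t^0,\theta_{t-1})=0$, so $M=\partial^2_\theta U(u_t^0,\theta_{t-1})+O(\eps)$, equal to $\Id+O(\eps)$ for quadratic $U$. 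I expect the main obstacle to be the error term $O(\eps^2(\beta_2-\beta_1)^2)$: controlling it means bounding, inside the integral, the simultaneous deviation of $\theta(\beta)$ and $M(\beta)$ from their endpoint values, and the bookkeeping is regime-dependent according to whether $\eps$, $\beta_2-\beta_1$, or both are small. I would handle this by feeding the a priori estimate $\theta'(\beta)=O(\eps)$ back into the integral and Taylor-expanding while tracking orders in both small parameters.

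Finally, for \eqref{eq:lyapbounds}--\eqref{eq:lyaptaylor2} I would study the monotonicity of $c(\beta)\deq C(s_\beta(\theta,x,y),y)=\partial_\beta F_\beta(\theta)$. Differentiating the state stationarity condition $\partial_s E(\theta,x,s_\beta)+\beta\,\partial_s C(s_\beta,y)=0$ in $\beta$ yields $c'(\beta)=\partial^2_\beta F_\beta=-\partial_s C^\top[\partial^2_s(E+\beta C)]^{-1}\partial_s C\leq0$, the inequality coming from positivity of the state Hessian at a minimizer; hence $c$ is nonincreasing. Because $\lyap_{\beta_1;\beta_2}$ is the average of $c$ over $[\beta_1,\beta_2]$, for $\beta_1<0<\beta_2$ the average over $[0,\beta_2]$ is at most $c(0)=C(s(\theta,x),y)$ while the average over $[\beta_1,0]$ is at least $c(0)$, which is \eqref{eq:lyapbounds}. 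Taylor-expanding $c(\beta')=c(0)+c'(0)\beta'+O(\beta'^2)$ and averaging gives $\lyap_{\beta_1;\beta_2}=c(0)+c'(0)\tfrac{\beta_1+\beta_2}{2}+O((\abs{\beta_1}+\abs{\beta_2})^2)$, which is \eqref{eq:lyaptaylor1}; the centered choice $\beta_2=-\beta_1=\beta/2$ annihilates the linear term and leaves the $O(\beta^2)$ remainder of \eqref{eq:lyaptaylor2}.
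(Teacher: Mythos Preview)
Your proposal is correct and matches the paper closely on the core ingredients: the envelope identity $\partial_\beta F_\beta=C(s_\beta)$ and the resulting formula $\lyap_{\beta_1;\beta_2}=(F_{\beta_2}-F_{\beta_1})/(\beta_2-\beta_1)$ are exactly the paper's Theorem~\ref{thm:formula-loss-lyapunov}, and your exchange argument for the Lyapunov inequality is verbatim the paper's Section~\ref{sec:proof-lyapunov}.

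You diverge from the paper in two places, both legitimately. For the Riemannian SGD step, the paper subtracts the two first-order equilibrium conditions and Taylor-expands the difference, relying on a separate technical lemma (Lemma~\ref{lma:technical}) that establishes $\theta^\eps_\beta-\theta^\eps_0=O(\eps\beta)$ via a two-stage bootstrap (first $O(\sqrt{\eps\beta})$ from energy bounds, then $O(\eps\beta)$). Your route---differentiate $\theta(\beta)$ implicitly, obtain $\theta'(\beta)=-\eps M(\beta)^{-1}\partial_\theta C(s_\beta)=O(\eps)$, and integrate---delivers the same a~priori bound $\theta(\beta)-\theta_{t-1}=O(\eps(\beta-\beta_1))$ for free, and feeding it back into the integral gives the $O(\eps^2(\beta_2-\beta_1)^2)$ remainder cleanly (the variations of $M(\beta)$ and of $\partial_\theta C(s_\beta(\theta(\beta)))$ are each $O(\eps|\beta-\beta_1|)$, and the outer $\eps\int$ supplies the extra factor). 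This is arguably tidier than the paper's bootstrap; the price is that it leans on the smoothness of $\beta\mapsto\theta(\beta)$ asserted in Assumption~\ref{hyp:tech}. For the monotonicity of $c(\beta)=C(s_\beta)$, the paper uses a second exchange argument (Lemma~\ref{lma:monotonous}), which needs only that the minimizers exist; your computation of $c'(\beta)=-\partial_sC^\top[\partial_s^2(E+\beta C)]^{-1}\partial_sC\le0$ is equally valid under the paper's strict-minimum hypotheses but would fail at points where $s_\beta$ jumps.
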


When $\eps\to 0$, the Riemannian metric $M$ tends to $\partial^2_\theta
U(u_t,\theta_{t-1})$, thus recovering ordinary gradient descent for
quadratic $U$. Note that the Hessian $M$ is always nonnegative definite,
because $\theta_{t-1}$ minimizes the function $\theta \mapsto \min_s \mathcal{E}(u_t,\theta,s,x_t,y_t,\eps,\beta_1)$ by definition \eqref{eq:controlstep2}. Under the technical assumptions below, it is actually positive definite, so that $M^{-1}$ is well-defined.

For fixed, nonzero $\eps$, the metric $M$ depends on $\theta_{t-1}$, on the input $x_t$,
and also on $y_t$ if $\beta_1\neq 0$. Indeed, $E$ depends on $x_t$ in the
definition of $G$, and $u_t$ itself depends on $x_t$ via
\eqref{eq:controlstep2}. This dependency is at first order in $\eps$.

Thus, for large $\eps$, \algoname produces a gradient descent with a
sample-dependent preconditioning matrix. Since this preconditioning may be
correlated with the gradient of the loss for sample $x_t$, this breaks
the property of expected gradients in stochastic gradient descent, and
may introduce bias. This bias disappears when $\eps\to 0$: it is
only a term $O(\eps^2\beta)$ in \eqref{eq:riemsgdstep}.

\clearpage
\section{Proofs}
\label{sec:proofs}

In this section, we prove Theorem \ref{thm:sgd}, Theorem \ref{thm:lyapunov} and Theorem \ref{thm:riemannian-sgd}. We proceed as follows:
\begin{itemize}
    \item In Section \ref{sec:notation}, we introduce the notation.
    \item In Section \ref{sec:techhyp}, we state Definition \ref{def:strictmin} and Assumption~\ref{hyp:tech}, which gather the precise technical assumptions for
    the theorems, such as existence of the minima involved. Proposition \ref{prop:suffhyp}
    gives a simple sufficient condition for these assumptions to hold.
    \item In Section \ref{sec:formula-loss-lyapunov}, we establish
    important formulae relating the loss and Lyapunov function to the
    energy with a free-floating state (Theorem \ref{thm:formula-loss-lyapunov}). We also prove the properties of the Lyapunov function stated in Theorem \ref{thm:riemannian-sgd} (Proposition~\ref{prop:taylor-expansions} and Corollary~\ref{cor:bounds}).
    \item In Section \ref{sec:proof-lyapunov}, we prove the Lyapunov property of Theorem \ref{thm:lyapunov} using Theorem \ref{thm:formula-loss-lyapunov}.
    \item In Section \ref{sec:technical}, we prove a technical lemma (Lemma \ref{lma:technical}), under the assumptions of Assumption~\ref{hyp:tech}.
    \item In Section \ref{sec:proof-riemannian-sgd}, using Lemma \ref{lma:technical} and Theorem \ref{thm:formula-loss-lyapunov}, we prove the Riemannian SGD property of Theorem \ref{thm:riemannian-sgd}.
    \item In Section \ref{sec:proof-sgd}, we prove Theorem~\ref{thm:sgd} (the SGD property) as a corollary of Theorem \ref{thm:riemannian-sgd}, using Theorem \ref{thm:formula-loss-lyapunov} again.
\end{itemize}


\subsection{Notation}
\label{sec:notation}

Theorems~\ref{thm:sgd}
and~\ref{thm:lyapunov} are particular cases of
Theorem~\ref{thm:riemannian-sgd}.

Since Theorem~\ref{thm:riemannian-sgd}
deals with a fixed input-output pair $(x_t,y_t)$, in
all proofs we assume that $x_t$ and $y_t$ are fixed, and omit them from
the notation all along.

We denote
\begin{equation}
    s_\beta(\theta) \deq \underset{s}{\arg \min} \{ E(\theta,s) + \beta \, C(s) \}
\end{equation}
the equilibrium state with nudging $\beta$ and
\begin{align}
    F(\beta,\theta) \deq & \; \underset{s}{\min} \; \{ E(\theta,s) + \beta \, C(s) \}
    \\ = & \; E(\theta,s_\beta(\theta))+\beta C(s_\beta(\theta))
\end{align}
the minimal energy when $s$ is floating. The loss to optimize is
\begin{equation}
    \mathcal{L}(\theta) \deq C(s_0(\theta)),
\end{equation}
where $s_0(\theta)$ is the equilibrium state without nudging. For every $\beta_1 < \beta_2$, the Lyapunov function is
\begin{equation}
 \mathcal{L}_{\beta_1;\beta_2} \left( \theta \right) \deq \frac{1}{\beta_2-\beta_1} \int_{\beta_1}^{\beta_2} C(s_\beta(\theta)) \, d\beta.
\end{equation}

We then introduce a control variable $u$ and we further augment the
energy of the system by adding a coupling energy $U(u,\theta) / \eps$
between $u$ and $\theta$, scaled by a positive scalar $\eps$. We denote
$G_\beta^\eps(u,\theta)$ the global energy \eqref{eq:generalenergy}
minimized by the system (when $s$ is floating), rescaled by $\eps$:
\begin{align}
\label{eq:def-G}
G_\beta^\eps(u,\theta) &\deq U(u,\theta) + \eps \, F(\beta,\theta)
\\&= U(u,\theta) + \eps
\,E(\theta,s_\beta(\theta))+\eps\beta\,C(s_\beta(\theta)).
\end{align}
since the state that realizes the minimum of $F$ is $s_\beta(\theta)$.

Let $\theta_\beta^\eps(u)$ be the equilibrium parameter, i.e. the minimizer of the global energy:
\begin{equation}
    \theta_\beta^\eps(u) \deq \underset{\theta}{\arg \min} \; G_\beta^\eps(u,\theta).
\end{equation}
Given a parameter value $\theta$, we denote $u=u_\beta^\eps(\theta)$ the value of the control knobs such that $\theta$ is at equilibrium given $\beta$ and $\eps$, i.e.
\begin{equation}
    u = u_\beta^\eps(\theta) \qquad \Longleftrightarrow \qquad \theta = \theta_\beta^\eps(u).
\end{equation}
Finally we introduce the symmetric non-negative definite matrix
\begin{equation}
M_\beta^\eps(\theta) \deq \partial^2_\theta
G_\beta^\eps(u_\beta^\eps(\theta),\theta).
\end{equation}
(Here $\partial_\theta
G$ and $\partial^2_\theta G$ denote partial derivatives of $G$ with
respect to its second variable, and do \emph{not} include differentiation
of $u(\theta)$ with respect to $\theta$.)
In particular
\begin{equation}
    M_0^0(\theta) = \partial_\theta^2 U(u_0^0(\theta),\theta).
\end{equation}

With this notation, the quantities of
Theorem~\ref{thm:riemannian-sgd}
rewrite as 
\begin{equation}
\label{eq:changeofnotation}
u_t=u^\eps_{\beta_1}(\theta_{t-1}),
\qquad
\theta_{t-1}=\theta_{\beta_1}^\eps(u_t),\qquad
\theta_t=\theta^\eps_{\beta_2}(u_t).
\end{equation}

\begin{rem}
\label{rem:beta1}
Without loss of generality, we can assume that $\beta_1=0$, just by
replacing the energy $E$ with
\begin{equation}
E'(\theta,s)\deq E(\theta,s)+\beta_1 C(s)
\end{equation}
and applying the results to $E'$. This shifts all values of $\beta$ by
$\beta_1$.
\end{rem}

This will be used in some proofs below to use $0$ and $\beta$ instead of $\beta_1$ and
$\beta_2$.

\subsection{Technical Assumption: Smooth, Strict Energy Minimizers}
\label{sec:techhyp}

Here we state the technical assumptions for our formal computations to be
valid: namely, smoothness of all functions involved, and existence,
local uniqueness, and smoothness of the various minimizers.

We also provide a simple sufficient condition (Proposition~\ref{prop:suffhyp}) for this to hold in some neighborhood of the current parameter.

\begin{defi}[Strict minimum]
\label{def:strictmin}
We say that a value $x$ \emph{achieves a strict minimum} of a smooth
function $f$ if $x=\argmin_x f(x)$ and moreover $\partial^2 f(x)/\partial
x^2>0$ at this minimum (in the sense of positive definite matrices for
vector-valued $x$). We say that this holds \emph{locally} if the argmin is
restricted to some neighborhood of $x$.
\end{defi}

\begin{hyp}[Smooth, strict minimizers]
\label{hyp:tech}
We assume that $E$, $C$ and $U$ are smooth functions. We assume that $C$
is bounded below.

Let $\theta_0$ be a parameter value.
We assume that
there exists domains $\Theta\subset \R^{\dim(\theta)}$ in parameter
space, $\mathcal{S}\subset \R^{\dim(s)}$ in state
space, $\mathcal{U}\subset \R^{\dim(u)}$ in control knob space, and open intervals $I_1\subset \R$ and
$I_2\subset\R$ containing $0$, such
that:
\begin{itemize}
\item For any $\theta\in \Theta$ and $\beta\in I_1$, there exists $s_\beta(\theta)\in \mathcal{S}$ which
achieves the strict minimum
\begin{equation}
s_\beta(\theta)=\argmin_{s\in\mathcal{S}} \{E(\theta,s)+\beta\,C(s)\}
\end{equation}
and moreover the map $(\beta,\theta)\mapsto s_\beta(\theta)$ is smooth.
\item
For any $u\in \mathcal{U}$, $\eps\in I_2$, and $\beta\in I_1$, there exists
$\theta^\eps_\beta(u)$ which is the strict minimum
\begin{equation}
\theta^\eps_\beta(u)=\argmin_{\theta\in \Theta} \min_{s\in \mathcal{S}}\left\{
U(u,\theta)+\eps(
E(\theta,s)+\beta C(s))
\right\}
\end{equation}
and the map $(u,\eps,\beta)\mapsto \theta^\eps_\beta(u)$ is smooth.
\item For any $\eps\in I_2$, there exists $u^\eps\in \mathcal{U}$ such that
$\theta^\eps_0(u^\eps)=\theta_0$, namely, when $\beta=0$ we can use $u$ to fix $\theta$
to $\theta_0$.
Moreover, we assume that the map $\eps\mapsto u^\eps$ is smooth.
\end{itemize}
\end{hyp} 

All subsequent values of $\eps$ and $\beta$ will be restricted to $I_2$ and
$I_1$. All subsequent minimizations over $(\theta,s)$ will be taken in
$\Theta\times \mathcal{S}$. Thus, the case where $\Theta\times \mathcal{S}$ is not
the full space allows us, if needed, to
consider only the equilibrium points ``in
the same basin'' as $\theta_0$. Presumably, this is relevant for
\algoname, as a physical system will only jump to another distant
local minimum if it has to.

These assumptions justify the various derivatives and Taylor expansions
in the proofs.

\begin{prop}[A sufficient condition for Assumption~\ref{hyp:tech}]
\label{prop:suffhyp}
Assume that $E$, $C$, and $U$ are smooth, with $C$ bounded below. Let $\theta_0$ be a parameter
value.

Assume that there exists a state $s_0$ that locally achieves a strict minimum of
$E(\theta_0,s_0)$.

Also assume that there exists $u_0$ such that
$\theta_0$ locally achieves a strict minimum of $U(u_0,\theta_0)$. Assume
moreover that $\dim(u)=\dim(\theta)$ and that the matrix $\partial_u \partial_\theta U(u_0,\theta_0)$ is invertible (local controllability of $\theta$ by
$u$).

Then Assumption~\ref{hyp:tech} holds in a domain that contains a
neighborhood of
$(\theta_0, s_0, u_0, \eps=0, \beta=0)$.
\end{prop}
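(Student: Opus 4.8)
The plan is to obtain the three bullet points of Assumption~\ref{hyp:tech} by three successive applications of the implicit function theorem to the relevant first-order (stationarity) conditions, each time exploiting the assumed strict local minima to guarantee an invertible Jacobian, and then using continuity of the second derivatives to propagate positive definiteness of the Hessian — hence strictness of the minima in the sense of Definition~\ref{def:strictmin} — to a full neighborhood of $(\theta_0,s_0,u_0,\eps=0,\beta=0)$.

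First I would handle the state minimizer $s_\beta(\theta)$. By hypothesis $s\mapsto E(\theta_0,s)$ has a strict local minimum at $s_0$, so $\partial_s E(\theta_0,s_0)=0$ and $\partial^2_s E(\theta_0,s_0)$ is positive definite, hence invertible. Applying the implicit function theorem to the map $(s,\theta,\beta)\mapsto \partial_s E(\theta,s)+\beta\,\partial_s C(s)$, which vanishes at $(s_0,\theta_0,0)$ and whose $s$-Jacobian there is $\partial^2_s E(\theta_0,s_0)$, yields a smooth map $(\theta,\beta)\mapsto s_\beta(\theta)$ on some neighborhood solving the stationarity equation, with $s_0(\theta_0)=s_0$. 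Shrinking this neighborhood so that $\partial^2_s[E+\beta C]$ stays positive definite (possible by continuity), $s_\beta(\theta)$ is the strict minimizer over a suitable ball $\mathcal S$, giving the first bullet on some product $\Theta\times I_1$; the reduced energy $F(\beta,\theta)\deq E(\theta,s_\beta(\theta))+\beta C(s_\beta(\theta))$ is then smooth.

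Next I would treat the parameter minimizer $\theta^\eps_\beta(u)$. The function to be minimized over $\theta$ is $G^\eps_\beta(u,\theta)=U(u,\theta)+\eps F(\beta,\theta)$, smooth by the previous step. At $\eps=0$ it reduces to $U(u_0,\cdot)$, which has a strict local minimum at $\theta_0$, so $\partial_\theta U(u_0,\theta_0)=0$ and $\partial^2_\theta U(u_0,\theta_0)$ is positive definite. Applying the implicit function theorem to $(\theta,u,\eps,\beta)\mapsto \partial_\theta G^\eps_\beta(u,\theta)$ (vanishing at $(\theta_0,u_0,0,0)$, with $\theta$-Jacobian $\partial^2_\theta U(u_0,\theta_0)$ there) produces a smooth $\theta^\eps_\beta(u)$ with $\theta^0_0(u_0)=\theta_0$; continuity again keeps $\partial^2_\theta G^\eps_\beta$ positive definite nearby, so this is the strict minimizer, giving the second bullet. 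For the third bullet I would solve $\theta^\eps_0(u)=\theta_0$ for $u$ as a function of $\eps$: since $\theta_0$ is a strict minimum and (by the previous step) the unique critical point of $G^\eps_0(u,\cdot)$ near $\theta_0$, this is equivalent to the stationarity condition $H(u,\eps)\deq \partial_\theta U(u,\theta_0)+\eps\,\partial_\theta F(0,\theta_0)=0$, which holds at $(u_0,0)$. Here the controllability hypothesis enters: $\partial_u H(u_0,0)=\partial_u\partial_\theta U(u_0,\theta_0)$ is the square ($\dim u=\dim\theta$) invertible matrix, so the implicit function theorem gives a smooth $\eps\mapsto u^\eps$ with $u^0=u_0$ and $\theta^\eps_0(u^\eps)=\theta_0$.

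I expect the main obstacle to be bookkeeping rather than analysis: assembling a single coherent family of domains $\Theta,\mathcal S,\mathcal U,I_1,I_2$ on which all three properties hold simultaneously. Each implicit function theorem application supplies its own neighborhood, and these must be intersected and shrunk so that, for instance, $\mathcal U$ contains the curve $\{u^\eps:\eps\in I_2\}$, the values $\theta^\eps_\beta(u)$ stay in $\Theta$ so that $s_\beta(\theta^\eps_\beta(u))$ is defined, and the inner minimizer $s_\beta$ realizing $\min_{s\in\mathcal S}$ in the second bullet is genuinely the one constructed in the first step. The only truly analytic point — that the constructed stationary points are strict \emph{minima} and not merely critical points — follows from openness of the positive-definite cone together with continuity of the Hessians, which lets us shrink each neighborhood until strict convexity in the relevant variable is guaranteed.
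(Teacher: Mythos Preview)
Your proposal is correct and follows essentially the same route as the paper's proof: three successive applications of the implicit function theorem to the stationarity conditions for $s$, then $\theta$, then $u$, using respectively the positive definiteness of $\partial^2_s E$, of $\partial^2_\theta U$, and the invertibility of $\partial_u\partial_\theta U$, with continuity of the Hessians to upgrade stationary points to strict minima nearby. Your write-up is in fact slightly more explicit than the paper's about the domain bookkeeping and about why the critical points obtained are genuine strict minima.
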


The controllability condition is obviously satisfied for
$U(u,\theta)=\norm{u-\theta}^2$.

\begin{proof}[Proof of Proposition~\ref{prop:suffhyp}]
Let us first check the existence of the smooth minimizer
$s_\beta(\theta)$. Since $E$ and $C$ are smooth, the function
$(\theta,\beta,s)\mapsto E(\theta,s)+\beta \, C(s)
$
is smooth, and therefore, so is the function
\begin{equation}
(\theta,\beta,s)\mapsto f(\theta,\beta,s)\deq \nabla_s (E(\theta,s)+\beta C(s)).
\end{equation}
Moreover, for $\theta=\theta_0$ and $\beta=0$, since $s_0$ is a strict
minimizer of $E(\theta_0,s_0)$, we have that $\nabla_s
f(\theta_0,0,s_0)=\nabla^2_s E(\theta_0,s_0)$ is positive definite.

Therefore, by the implicit function theorem, there exists a smooth map
$(\theta,\beta)\mapsto s_\beta(\theta)$ such that
$f(\theta,\beta,s_\beta(\theta))=0$ in some neighborhood of
$\theta=\theta_0$ and $\beta=0$. By definition of $f$, such an
$s_\beta(\theta)$ is a critical point of $E(\theta,s)+\beta \, C(s)$. This
critical point is a strict minimum: indeed, at $(\theta_0,s_0,\beta=0)$ the second
derivative with respect to $s$ is positive definite, and by continuity
this extends to a neighborhood of $\theta_0$, $s_0$, and $\beta=0$.

For the argmin
\begin{equation}
\theta^\eps_\beta(u)=\argmin_{\theta\in \Theta} \min_{s\in
\mathcal{S}}\left\{
U(u,\theta)+\eps(
E(\theta,s)+\beta C(s))
\right\},
\end{equation}
following the previous proof, we can set the value $s=s_\beta(\theta)$. Therefore this is equivalent to the argmin
\begin{equation}
\theta^\eps_\beta(u)=\argmin_{\theta\in \Theta}
\left\{
U(u,\theta)+\eps( E(\theta,s_\beta(\theta))+\beta C(s_\beta(\theta)))
\right\}
\end{equation}
and since $s_\beta(\theta)$ is smooth,
this quantity is smooth as a function of $u$, $\theta$, and $\beta$. So
once more we can apply the implicit function theorem in a neighborhood of
$u=u_0$, $\eps=0$, $\beta=0$, using that $\theta_0$ is a strict minimum
of $U(u_0,\theta_0)$.

Last, with $\beta=0$, we want to find $u^\eps$ such that
\begin{equation}
\theta_0=\argmin_\theta \min_s \{U(u^\eps,\theta)+\eps E(\theta,s)\}.
\end{equation}
Once more, we can substitute $s=s_0(\theta)$ so we want
\begin{equation}
\theta_0=\argmin_\theta \{U(u^\eps,\theta)+\eps E(\theta,s_0(\theta))\}.
\end{equation}
Set
\begin{equation}
f(\eps,u)\deq  \nabla_\theta \left(
U(u,\theta_0) +\eps E(\theta_0,s_0(\theta_0))\right).
\end{equation}
This is a smooth function.
We have $f(0,u_0)=0$ because $\theta_0$ is a minimizer of
$U(u_0,\theta_0)$. Moreover, we have $\nabla_u f(0,u_0)=\nabla_u
\nabla_\theta U(u_0,\theta_0)$ which is invertible by assumption.
Therefore, by the implicit function theorem, we can find a smooth map
$\eps\mapsto u^\eps$ such that $f(\eps,u^\eps)=0$, namely, such that
$\theta_0$ is a critical point of $U(u^\eps,\theta)+\eps
E(\theta,s_0(\theta))$. This critical point is a strict minimum: indeed,
by assumption this holds for $\eps=0$, and by continuity the second
derivative will stay positive in a neighborhood of $0$.
\end{proof}

\subsection{Relationships Between the Loss, Lyapunov Function, and Energy}
\label{sec:formula-loss-lyapunov}

\begin{thm}[Formulae for the loss and Lyapunov functions]
\label{thm:formula-loss-lyapunov}
We have the following expression for the derivative of $F$ with respect to $\beta$:
\begin{equation}
    \partial_\beta F(\beta,\theta) = C(s_\beta(\theta)).
\end{equation}
Furthermore, the loss function $\mathcal{L}$ and the Lyapunov function $\mathcal{L}_{\beta_1;\beta_2}$ can be expressed in terms of $F$ as
\begin{equation}
    \mathcal{L}(\theta) = \partial_\beta F(0,\theta), \qquad \mathcal{L}_{\beta_1;\beta_2} \left( \theta \right) = \frac{F(\beta_2,\theta) - F(\beta_1,\theta)}{\beta_2-\beta_1}.
\end{equation}
\end{thm}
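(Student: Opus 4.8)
The plan is to prove the three claims in sequence, with the first (the formula for $\partial_\beta F$) being the substantive one and the other two following as immediate corollaries. The key tool is the envelope theorem (Danskin's theorem): since $F(\beta,\theta)=\min_s\{E(\theta,s)+\beta C(s)\}$ is a minimized value, its derivative in the parameter $\beta$ can be computed by treating the minimizer $s_\beta(\theta)$ as fixed, because the variation of the minimizer itself contributes nothing at first order.

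First I would establish $\partial_\beta F(\beta,\theta)=C(s_\beta(\theta))$. Writing $F(\beta,\theta)=E(\theta,s_\beta(\theta))+\beta\,C(s_\beta(\theta))$ and differentiating with respect to $\beta$ via the chain rule gives
\begin{equation}
\partial_\beta F(\beta,\theta)=\bigl[\partial_s E(\theta,s_\beta(\theta))+\beta\,\partial_s C(s_\beta(\theta))\bigr]\cdot\partial_\beta s_\beta(\theta)+C(s_\beta(\theta)).
\end{equation}
Now I invoke the first-order optimality condition for $s_\beta(\theta)$: since $s_\beta(\theta)=\argmin_s\{E(\theta,s)+\beta C(s)\}$ is, under Assumption~\ref{hyp:tech}, a smooth and strict (hence interior, stationary) minimizer, we have $\partial_s E(\theta,s_\beta(\theta))+\beta\,\partial_s C(s_\beta(\theta))=0$. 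The bracketed term therefore vanishes, and the surviving term is exactly $C(s_\beta(\theta))$. The smoothness of $\beta\mapsto s_\beta(\theta)$ guaranteed by Assumption~\ref{hyp:tech} is what licenses the differentiation and ensures $\partial_\beta s_\beta(\theta)$ exists, so this step is where the technical hypotheses earn their keep.

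With the derivative formula in hand, the claim $\mathcal{L}(\theta)=\partial_\beta F(0,\theta)$ is immediate: evaluating at $\beta=0$ gives $\partial_\beta F(0,\theta)=C(s_0(\theta))=\mathcal{L}(\theta)$ by the definition of the loss. For the Lyapunov formula, I would apply the fundamental theorem of calculus to $\partial_\beta F$, yielding $\int_{\beta_1}^{\beta_2}C(s_\beta(\theta))\,d\beta=\int_{\beta_1}^{\beta_2}\partial_\beta F(\beta,\theta)\,d\beta=F(\beta_2,\theta)-F(\beta_1,\theta)$, and then dividing by $\beta_2-\beta_1$ to recover $\mathcal{L}_{\beta_1;\beta_2}(\theta)$ directly from its integral definition.

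The only real obstacle is justifying the envelope step rigorously, i.e. confirming that the minimizer's stationarity holds and that all quantities are differentiable; this is precisely what Definition~\ref{def:strictmin} and Assumption~\ref{hyp:tech} secure (strict interior minima with smooth dependence on $\beta$). Everything after that is routine: the chain-rule cancellation, evaluation at $\beta=0$, and a single application of the fundamental theorem of calculus. I would therefore spend the bulk of the write-up on the envelope computation and treat the remaining two identities as one-line consequences.
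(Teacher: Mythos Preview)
Your proposal is correct and follows essentially the same route as the paper: differentiate $F(\beta,\theta)=E(\theta,s_\beta(\theta))+\beta\,C(s_\beta(\theta))$ by the chain rule, use the first-order stationarity of $s_\beta(\theta)$ to kill the term involving $\partial_\beta s_\beta$, then specialize to $\beta=0$ and integrate via the fundamental theorem of calculus. The only cosmetic difference is that the paper introduces the auxiliary notation $\overline{F}(\beta,\theta,s)=E(\theta,s)+\beta C(s)$ before applying the chain rule, while you name the argument as the envelope/Danskin theorem; the substance is identical.
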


\begin{proof}[Proof of Theorem \ref{thm:formula-loss-lyapunov}]
First, we note that
\begin{equation}
    \label{lma:min-energy}
    F(\beta,\theta) = \overline{F}(\beta,\theta,s_\beta(\theta)),
\end{equation}
where
\begin{equation}
    \overline{F}(\beta,\theta,s) \deq E(\theta,s) + \beta \, C(s),
\end{equation}
and by construction
\begin{equation}
    s_\beta(\theta) = \underset{s}{\arg \min} \;  \overline{F}(\beta,\theta,s).
\end{equation}
Then we differentiate both sides of \eqref{lma:min-energy} with respect to $\beta$. By the chain rule of differentiation, we have
\begin{equation}
    \label{eq:lma-diff-beta}
    \frac{\partial F}{\partial \beta}(\beta,\theta) = \frac{\partial \overline{F}}{\partial \beta}(\beta,\theta,s_\beta(\theta)) + \frac{\partial \overline{F}}{\partial s}(\beta,\theta,s_\beta(\theta)) \cdot \frac{\partial s_\beta}{\partial \beta}(\theta).
\end{equation}
The first term on the right-hand side of \eqref{eq:lma-diff-beta} is equal to $C(s_\beta(\theta))$ by definition of $\overline{F}$. The second term vanishes since $\frac{\partial \overline{F}}{\partial s}(\beta,\theta,s_\beta(\theta)) = 0$ at equilibrium. Therefore
\begin{equation}
    \label{eq:eqprop-property}
    \frac{\partial F}{\partial \beta}(\beta,\theta) = C(s_\beta(\theta)).
\end{equation}
Evaluating \eqref{eq:eqprop-property} at the point $\beta=0$, and using the definition of $\lyap$, we get
\begin{equation}
    \frac{\partial F}{\partial \beta}(0,\theta) = C(s_0(\theta)) = \mathcal{L}(\theta).
\end{equation}
Furthermore, integrating both hands of \eqref{eq:eqprop-property} from $\beta^\prime=\beta_1$ to $\beta^\prime = \beta_2$, we get
\begin{equation}
    F(\beta_2,\theta) - F(\beta_1,\theta) = \int_{\beta_1}^{\beta_2}
    C(s_{\beta^\prime}(\theta)) \d\beta^\prime.
\end{equation}
Dividing both sides by $\beta_2-\beta_1$ and using the definition of $\mathcal{L}_{\beta_1;\beta_2}$, we conclude that
\begin{equation}
\frac{F(\beta_2,\theta) - F(\beta_1,\theta)}{\beta_2-\beta_1} = \mathcal{L}_{\beta_1;\beta_2} \left( \theta \right).
\end{equation}
\end{proof}

Now we turn to the properties of the Lyapunov function stated in Theorem~\ref{thm:riemannian-sgd}.

\begin{prop}
\label{prop:taylor-expansions}
As $\beta_1, \beta_2 \to 0$, we have the Taylor expansion
\begin{equation}
    \mathcal{L}_{\beta_1;\beta_2}(\theta) = \mathcal{L}(\theta) + O(|\beta_1| + |\beta_2|),
\end{equation}
and for $\beta_2 = -\beta_1 = \beta$, we have
\begin{equation}
    \mathcal{L}_{-\beta/2;\beta/2}(\theta) = \mathcal{L}(\theta) + O(\beta^2).
\end{equation}
\end{prop}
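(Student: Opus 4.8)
The plan is to reduce both expansions to a one-variable Taylor analysis of the function $g(\beta) \deq C(s_\beta(\theta))$ for fixed $\theta$. By Theorem~\ref{thm:formula-loss-lyapunov} we have $g(\beta) = \partial_\beta F(\beta,\theta)$ and $\mathcal{L}(\theta) = g(0)$, and moreover the Lyapunov function is exactly the average of $g$ over the nudging interval:
\begin{equation}
\mathcal{L}_{\beta_1;\beta_2}(\theta) = \frac{1}{\beta_2-\beta_1}\int_{\beta_1}^{\beta_2} g(\beta')\, d\beta'.
\end{equation}
First I would record that $g$ is smooth in $\beta$ in a neighborhood of $0$: this follows from Assumption~\ref{hyp:tech}, under which $(\beta,\theta)\mapsto s_\beta(\theta)$ is smooth, post-composed with the smooth $C$. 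This smoothness is exactly what makes the Taylor remainders below genuine $O(\cdot)$ terms with locally uniform constants.

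For the first statement, I would write the zeroth-order expansion $g(\beta') = g(0) + O(\beta')$, valid uniformly for $\beta'$ in a neighborhood of $0$, and substitute it into the integral average. The constant term contributes exactly $g(0) = \mathcal{L}(\theta)$, while the remainder contributes $\frac{1}{\beta_2-\beta_1}\int_{\beta_1}^{\beta_2} O(\beta')\, d\beta'$, whose magnitude is at most $\max(|\beta_1|,|\beta_2|)$ times a constant, i.e.\ $O(|\beta_1|+|\beta_2|)$. This yields $\mathcal{L}_{\beta_1;\beta_2}(\theta) = \mathcal{L}(\theta) + O(|\beta_1|+|\beta_2|)$.

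For the centered case $\beta_1 = -\beta/2$, $\beta_2 = \beta/2$, the integration interval is symmetric about $0$, so I would push the expansion one order further: $g(\beta') = g(0) + \beta'\, g'(0) + O((\beta')^2)$. Averaging over $[-\beta/2,\beta/2]$, the constant term again yields $\mathcal{L}(\theta)$; the key point is that the linear term $\beta'\, g'(0)$ is odd and therefore integrates to zero over the symmetric interval, contributing nothing; and the quadratic remainder gives $\frac{1}{\beta}\int_{-\beta/2}^{\beta/2} O((\beta')^2)\, d\beta' = O(\beta^2)$. Hence $\mathcal{L}_{-\beta/2;\beta/2}(\theta) = \mathcal{L}(\theta) + O(\beta^2)$.

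The computation is otherwise routine, so the only substantive ingredient is the smoothness of $\beta \mapsto s_\beta(\theta)$ guaranteed by Assumption~\ref{hyp:tech}, which legitimizes the Taylor expansions and the uniformity of the remainder constants on the relevant neighborhood. The conceptual heart of the second expansion is simply that symmetric averaging cancels the odd-order term; this is precisely the mechanism responsible for the improved second-order accuracy of Centered \algoname.
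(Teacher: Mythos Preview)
Your proof is correct and is precisely the ``direct'' route the paper itself alludes to when it remarks that the expansions ``are direct consequences of the definition of $\mathcal{L}_{\beta_1;\beta_2}(\theta)$.'' The paper's written proof instead Taylor-expands $F(\beta,\theta)$ to second order and forms the finite difference $(F(\beta_2,\theta)-F(\beta_1,\theta))/(\beta_2-\beta_1)$, which is equivalent to your integral-of-$g$ argument via $g=\partial_\beta F$ and the fundamental theorem of calculus; the cancellation of the first-order term in the centered case appears there as the vanishing of the coefficient $\beta_1+\beta_2$, exactly mirroring your odd-symmetry observation.
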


These Taylor expansions are direct consequences of the definition of $\mathcal{L}_{\beta_1;\beta_2}(\theta)$. Alternatively, they can be derived from Theorem \ref{thm:formula-loss-lyapunov}, as follows.

\begin{proof}[Proof of Proposition~\ref{prop:taylor-expansions}]
We write
\begin{align}
    F(\beta_2, \theta) & = F(0,\theta) + \beta_2 \partial_\beta F(0,\theta) + \beta_2^2 \partial_\beta^2 F(0,\theta) + O(\beta_2^3), \\
    F(\beta_1, \theta) & = F(0,\theta) + \beta_1 \partial_\beta F(0,\theta) + \beta_1^2 \partial_\beta^2 F(0,\theta) + O(\beta_1^3),
\end{align}
so that
\begin{equation}
    \frac{F(\beta_2,\theta) - F(\beta_1,\theta)}{\beta_2-\beta_1} = \partial_\beta F(0,\theta) + (\beta_1 + \beta_2) \partial_\beta^2 F(0,\theta) + O(\beta_1^2 + \beta_2^2).
\end{equation}
Using Theorem \ref{thm:formula-loss-lyapunov}, we get
\begin{equation}
    \mathcal{L}_{\beta_1;\beta_2}(\theta) = \mathcal{L}(\theta) + (\beta_1 + \beta_2) \partial_\beta^2 F(0,\theta) + O(\beta_1^2 + \beta_2^2).
\end{equation}
\end{proof}

Next we prove that $\lyap_{0;\beta_2}(\theta)$ and $\lyap_{-\beta_1;0}(\theta)$ are lower and upper bounds of $\mathcal{L}(\theta)$. First, we need a further lemma.

\begin{lma}
\label{lma:monotonous}
Let $\theta$ be any value. For each $\beta\in \R$, let $s_\beta\in
\argmin_s \{E(\theta,s)+\beta C(s)\}$. Then $\beta\mapsto C(s_\beta)$ is
non-increasing.
\end{lma}

We note that this also implies that the function $\beta \to F(\beta,\theta)$ is concave, thanks to the first formula of Theorem \ref{thm:formula-loss-lyapunov}.

\begin{proof}[Proof of Lemma~\ref{lma:monotonous}]
Let $\beta\geq \beta'$.
By definition of $s_{\beta}$,
\begin{equation}
	E(\theta,s_\beta) + \beta C(s_\beta) \leq E(\theta,s_{\beta^\prime}) + \beta C(s_{\beta^\prime}).
\end{equation}
Similarly, by definition of $s_{\beta^\prime}$,
\begin{equation}
	E(\theta,s_{\beta^\prime}) + \beta^\prime C(s_{\beta^\prime}) \leq E(\theta,s_\beta) + \beta^\prime C(s_\beta).
\end{equation}
Summing these two inequalities, subtracting $(E(\theta,s_\beta) + E(\theta,s_{\beta^\prime}))$ on each side, and rearranging the terms, we get
\begin{equation}
(\beta-\beta^\prime) C(s_\beta)\leq (\beta-\beta^\prime)C(s_{\beta^\prime}),
\end{equation}
 which proves the claim.
\end{proof}




Since $\lyap_{\beta_1;\beta_2}(\theta)$ is the average of
$C(s_\beta(\theta))$ for $\beta\in [\beta_1;\beta_2]$, 
this immediately implies the following.

\begin{cor}
\label{cor:bounds}
For any $\beta_1<0<\beta_2$, we have
\begin{equation}
\lyap_{0;\beta_2}(\theta)\leq \mathcal{L}(\theta)\leq
\lyap_{-\beta_1;0}(\theta).
\end{equation}
\end{cor}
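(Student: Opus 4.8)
The plan is to deduce both inequalities directly from the monotonicity of $\beta \mapsto C(s_\beta(\theta))$ established in Lemma~\ref{lma:monotonous}, together with the fact that, by its very definition, $\lyap_{\beta_1;\beta_2}(\theta)$ is the average of $C(s_\beta(\theta))$ over the interval $[\beta_1,\beta_2]$, while $\mathcal{L}(\theta)=C(s_0(\theta))$ is the value of this same integrand at the endpoint $\beta=0$. So the whole argument reduces to comparing the average of a monotone function against its value at a boundary of the averaging window, and no additional machinery is needed.

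First I would treat the lower bound. On the interval $[0,\beta_2]$ every $\beta$ satisfies $\beta\geq 0$, so since $\beta\mapsto C(s_\beta(\theta))$ is non-increasing we get $C(s_\beta(\theta))\leq C(s_0(\theta))=\mathcal{L}(\theta)$ pointwise. Averaging this inequality over $\beta\in[0,\beta_2]$ and recognising the left-hand side as $\lyap_{0;\beta_2}(\theta)$ yields $\lyap_{0;\beta_2}(\theta)\leq \mathcal{L}(\theta)$.

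Next I would treat the upper bound symmetrically. On the interval $[\beta_1,0]$ every $\beta$ satisfies $\beta\leq 0$, so monotonicity now gives the reversed pointwise inequality $C(s_\beta(\theta))\geq C(s_0(\theta))=\mathcal{L}(\theta)$; here the integrand is bounded \emph{below} (not above) by $\mathcal{L}(\theta)$ precisely because we are averaging over negative nudges. Averaging over $\beta\in[\beta_1,0]$ and recognising the result as $\lyap_{\beta_1;0}(\theta)$ gives $\mathcal{L}(\theta)\leq \lyap_{\beta_1;0}(\theta)$, which combined with the previous step is the claimed chain (matching \eqref{eq:lyapbounds}).

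There is essentially no hard step: once Lemma~\ref{lma:monotonous} is in hand the corollary is immediate, which is why the excerpt presents it as following ``immediately'' from the averaging interpretation. The only point requiring mild care is the bookkeeping of the sign of the inequality when the averaging window lies to the left of $0$, so that the negative-$\beta$ branch produces an upper rather than a lower bound on $\mathcal{L}(\theta)$ --- this is exactly the mechanism by which Pessimistic \algoname optimises an upper bound of the true loss.
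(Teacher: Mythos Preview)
Your proposal is correct and is exactly the paper's approach: the paper's proof is the single sentence that $\lyap_{\beta_1;\beta_2}(\theta)$ is the average of $C(s_\beta(\theta))$ over $[\beta_1,\beta_2]$, whence the bounds follow immediately from Lemma~\ref{lma:monotonous}. You have also tacitly corrected what is evidently a typo in the displayed statement---the upper bound should be $\lyap_{\beta_1;0}$ (with $\beta_1<0$), as in \eqref{eq:lyapbounds}, not $\lyap_{-\beta_1;0}$.
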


\subsection{Proof of the Lyapunov Property (Theorem~\ref{thm:lyapunov})}
\label{sec:proof-lyapunov}

We now prove monotonous improvement of the Lyapunov function, as stated
in Theorems~\ref{thm:lyapunov} and~\ref{thm:riemannian-sgd}.
Let $\beta_2 > \beta_1$.
Let $u_t$ and $\eps>0$ be fixed. We denote
$U(\theta)$ and $\theta_\beta$ in place of $U(u_t,\theta)$ and
$\theta^\eps_\beta(u_t)$, for simplicity. Then $\theta_{\beta_1}$ is the value of $\theta$ before the
update, and $\theta_{\beta_2}$ its value after the update. 
We claim that
\begin{equation}
    \lyap_{\beta_1;\beta_2}(\theta_{\beta_2}) \leq \lyap_{\beta_1;\beta_2}(\theta_{\beta_1}).
\end{equation}

Indeed, since $\theta_{\beta_2}$ minimizes $G_{\beta_2}(\cdot) = U(\cdot)/\eps + F(\beta_2,\cdot)$ by definition, we have
\begin{equation}
	U(\theta_{\beta_2})/\eps + F(\beta_2,\theta_{\beta_2}) \leq U(\theta_{\beta_1})/\eps + F(\beta_2,\theta_{\beta_1}).
\end{equation}
Similarly, since $\theta_{\beta_1}$ minimizes $G_{\beta_1}(\cdot) = U(\cdot)/\eps + F(\beta_1,\cdot)$ by definition, we have
\begin{equation}
	U(\theta_{\beta_1})/\eps + F(\beta_1,\theta_{\beta_1}) \leq U(\theta_{\beta_2})/\eps + F(\beta_1,\theta_{\beta_2}).
\end{equation}
Summing these two inequalities, subtracting $(U(\theta_{\beta_1}) + U(\theta_{\beta_2}))/\eps$ on each side, rearranging the terms, and dividing by $\beta_2-\beta_1$ (which is positive), we get
\begin{equation}
\frac{F(\beta_2,\theta_{\beta_2}) - F(\beta_1,\theta_{\beta_2})}{\beta_2-\beta_1} \leq \frac{F(\beta_2,\theta_{\beta_1}) - F(\beta_1,\theta_{\beta_1})}{\beta_2-\beta_1}.
\end{equation}
We conclude using Theorem \ref{thm:formula-loss-lyapunov}.

\subsection{A Technical Lemma}
\label{sec:technical}

We now prove a technical lemma that we will use to prove the Riemannian SGD property (Theorem \ref{thm:riemannian-sgd}). 

By Remark~\ref{rem:beta1}, we can assume that $\beta_1=0$, and we just
denote $\beta_2$ by $\beta$.

\begin{lma}[Technical Lemma]
\label{lma:technical}
For any $u\in \mathcal{U}$,
we have $\theta_\beta^\eps(u)-\theta_0^\eps(u) = O(\eps \beta)$ when either $\eps\to 0$ or $\beta\to 0$ (or both).
\end{lma}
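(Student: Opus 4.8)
The plan is to regard the minimizer $\theta_\beta^\eps(u)$, for a fixed $u\in\mathcal{U}$, as a smooth function of the two scalar parameters $(\eps,\beta)$ and to show that the displacement $D(\eps,\beta)\deq \theta_\beta^\eps(u)-\theta_0^\eps(u)$ vanishes identically on both coordinate axes $\{\eps=0\}$ and $\{\beta=0\}$. A smooth function of two variables that vanishes on both axes is divisible by the product $\eps\beta$ with a smooth (hence locally bounded) quotient, so $D(\eps,\beta)=O(\eps\beta)$ on the compact domain guaranteed by Assumption~\ref{hyp:tech}; this is exactly the claim.

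First I would record the smoothness input: Assumption~\ref{hyp:tech} states that $(u,\eps,\beta)\mapsto\theta_\beta^\eps(u)$ is smooth on its domain, with $\eps$ ranging over an interval $I_2\ni 0$ and $\beta$ over $I_1\ni 0$, so $D$ is smooth in $(\eps,\beta)$ near the origin. Vanishing on the axis $\{\beta=0\}$ is immediate, since $D(\eps,0)=\theta_0^\eps(u)-\theta_0^\eps(u)=0$. Vanishing on the axis $\{\eps=0\}$ is the only place where the structure of the energy enters: at $\eps=0$ the energy reduces to $G_\beta^0(u,\theta)=U(u,\theta)+0\cdot F(\beta,\theta)=U(u,\theta)$, which does not depend on $\beta$, so $\theta_\beta^0(u)=\argmin_\theta U(u,\theta)$ is independent of $\beta$ and $D(0,\beta)=0$.

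To finish, I would factor out the two axes one at a time via Hadamard's lemma. Writing $D(\eps,\beta)=\eps\,g(\eps,\beta)$ with $g$ smooth (possible because $D(0,\beta)=0$), the identity $D(\eps,0)=0$ forces $g(\eps,0)=0$, and a second application gives $g(\eps,\beta)=\beta\,h(\eps,\beta)$ with $h$ smooth. Thus $D(\eps,\beta)=\eps\beta\,h(\eps,\beta)$, and boundedness of $h$ on the compact domain yields $D=O(\eps\beta)$ uniformly, whether $\eps\to 0$, $\beta\to 0$, or both.

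The main obstacle is making the ``divisible by $\eps\beta$'' step rigorous and uniform rather than a heuristic about Taylor coefficients. Equivalently, one can avoid Hadamard's lemma and argue directly from the stationarity conditions $\partial_\theta G_\beta^\eps(u,\theta_\beta^\eps(u))=0$ and $\partial_\theta G_0^\eps(u,\theta_0^\eps(u))=0$: evaluating $\partial_\theta G_0^\eps(u,\cdot)$ at $\theta_\beta^\eps(u)$ and using the first condition gives $\partial_\theta G_0^\eps(u,\theta_\beta^\eps(u))=\eps\left[\partial_\theta F(0,\theta_\beta^\eps(u))-\partial_\theta F(\beta,\theta_\beta^\eps(u))\right]=O(\eps\beta)$, since $\partial_\beta\partial_\theta F=\partial_\theta C(s_\beta(\theta))$ is bounded (Theorem~\ref{thm:formula-loss-lyapunov} together with smoothness). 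A first-order Taylor expansion of $\partial_\theta G_0^\eps(u,\cdot)$ around its zero $\theta_0^\eps(u)$ then converts this into $D=\left(\partial_\theta^2 G_0^\eps(u,\theta_0^\eps(u))\right)^{-1}O(\eps\beta)$, and the Hessian $\partial_\theta^2 G_0^\eps=\partial_\theta^2 U+\eps\,\partial_\theta^2 F(0,\cdot)$ stays positive definite with bounded inverse as $\eps\to 0$ thanks to the strict-minimum hypothesis on $U$. The delicate point in both routes is the uniformity of the constants across the regime where one parameter remains fixed while the other vanishes.
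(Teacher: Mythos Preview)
Your primary route via Hadamard's lemma is correct and is genuinely different from the paper's argument. The key observation that $G_\beta^0(u,\theta)=U(u,\theta)$ is independent of $\beta$, so $D(0,\beta)=0$, together with the trivial $D(\eps,0)=0$, lets you factor $D=\eps\beta\,h$ with $h$ smooth; boundedness of $h$ on compacta then gives the claim uniformly across all three regimes at once. The paper does not argue this way: it first disposes of the single-limit cases by smoothness, and for the joint limit it proceeds in two steps---an energy inequality (using $\partial_\theta^2 G_0^\eps\ge \eta\,\mathrm{Id}/2$ and comparing $G_0^\eps(\theta_\beta^\eps)$ to $G_\beta^\eps(\theta_\beta^\eps)$ and $G_\beta^\eps(\theta_0^\eps)$) to obtain a preliminary $D=O(\sqrt{\eps\beta})$, and then a bootstrap via the stationarity conditions to upgrade to $O(\eps\beta)$. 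Your Hadamard argument is shorter and delivers uniformity in $u$ as well (since $h$ is smooth jointly in $(u,\eps,\beta)$), which the paper obtains only by a separate continuity remark after the proof. What the paper's approach buys is that it never evaluates the system at $\eps=0$ and works entirely with inequalities that would survive under weaker regularity than full joint smoothness.

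One caveat on your second, stationarity-based sketch: as written it has the same gap the paper's bootstrap step is designed to close. The first-order Taylor expansion of $\partial_\theta G_0^\eps(u,\cdot)$ around $\theta_0^\eps(u)$ leaves a remainder $O(\lVert D\rVert^2)$, and you cannot absorb it into $O(\eps\beta)$ without an a priori bound of the form $D=o(1)$ or $D=O(\sqrt{\eps\beta})$. The paper supplies exactly that via the energy inequality; your Hadamard route supplies it automatically. So if you want Approach~2 to stand on its own, insert either the paper's $O(\sqrt{\eps\beta})$ step or invoke smoothness to get $D\to 0$ first, then appeal to the local Lipschitz inverse of $\theta\mapsto\partial_\theta G_0^\eps(u,\theta)$.
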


\begin{proof}[Proof of Lemma \ref{lma:technical}]
Under Assumption~\ref{hyp:tech}, $(\eps,\beta)\mapsto
\theta^\eps_\beta(u)$ is smooth. Therefore, when $\beta\to 0$ we have
$\theta_\beta^\eps(u)-\theta_0^\eps(u)=O(\beta)$, and when $\eps\to 0$ we
have $\theta_\beta^\eps(u)-\theta_0^\eps(u)=O(\eps)$.

Thus, the only
remaining case is when both $\eps$ and $\beta$ tend to $0$: we have to
establish that $\theta_\beta^\eps(u)-\theta_0^\eps(u)=O(\eps\beta)$.
Since we know that this difference is both $O(\beta)$ and $O(\eps)$, we
already know that this difference tends to $0$.

Since $u$ is fixed, we further simplify notation by omitting $u$.

Under Assumption~\ref{hyp:tech}, $\theta^0_0$ achieves a strict minimum of
$G_0^0$. Therefore, by Definition~\ref{def:strictmin}, the Hessian of
$G_0^0$ at $\theta_0^0$ is positive definite, and there exists $\eta>0$
such that
$
\partial^2_\theta G_0^0(\theta_0^0) \geq \eta \Id
$
in the sense of positive definite matrices. Since $G$ is smooth, by
continuity we can assume that
\begin{equation}
\partial^2_\theta G_\beta^\eps (\theta)\geq \eta \Id/2
\end{equation}
when $\eps$ and $\beta$ are close to $0$ and $\theta$ is in a
neighborhood of $\theta^0_0$.

Now, $\theta_0^\eps$ minimizes $G_0^\eps$. Therefore, for any $\theta$ in
a neighborhood of $\theta_0^0$ we have
\begin{equation}
G_0^\eps(\theta)\geq G_0^\eps(\theta_0^\eps)+\eta
\norm{\theta-\theta_0^\eps}^2/4
\end{equation}
using that the Hessian of $G_0^\eps$ is at least $\eta \Id/2$. In
particular, taking $\theta=\theta_\beta^\eps$,
\begin{equation}
G_0^\eps(\theta_0^\eps)+\eta
\norm{\theta_\beta^\eps-\theta_0^\eps}^2/4
\leq G_0^\eps(\theta_\beta^\eps).
\end{equation}
In turn,
\begin{align}
G^\eps_0(\theta^\eps_\beta) & =U(\theta^\eps_\beta)+\eps \inf_s
E(\theta^\eps_\beta,s)
\\&\leq U(\theta^\eps_\beta)+\eps\inf_s \{E(\theta^\eps_\beta,s)+\beta(C(s)-\inf
C)\}
\\&=U(\theta^\eps_\beta)+\eps\inf_s \{E(\theta^\eps_\beta,s)+\beta C(s)\}
-\eps\beta \inf C
\\&=G^\eps_\beta(\theta^\eps_\beta)-\eps\beta \inf C.
\end{align}
Since $\theta^\eps_\beta$ minimizes $G^\eps_\beta(\theta)$, we have
\begin{align}
G^\eps_\beta(\theta^\eps_\beta) &\leq G^\eps_\beta(\theta^\eps_0)
\\&=U(\theta^\eps_0)+\eps\inf_s
\{E(\theta^\eps_0,s)+\beta C(s)\}
\\&\leq U(\theta^\eps_0)+\eps E(\theta^\eps_0,s_0^\eps)+\eps\beta
C(s_0^\eps)
\\&= G_0^\eps(\theta^\eps_0)+\eps\beta C(s_0^\eps)
\end{align}
where $s_0^\eps$ is the value that realizes the infimum $E(\theta^\eps_0,s)$.
Combining the three inequalities, we find
\begin{equation}
\eta
\norm{\theta^\eps_\beta-\theta^\eps_0}^2/4
\leq \eps\beta (C(s_0^\eps)-\inf C).
\end{equation}
When $\eps\beta\to 0$, $s_0^\eps$ tends to $s_0^0$ so that $C(s_0^\eps)$
is bounded. This implies that
$\theta^\eps_\beta-\theta^\eps_0=O(\sqrt{\eps\beta})$.

Now, since $\theta^\eps_\beta$ minimizes $G^\eps_\beta$, we have $\partial_\theta G^\eps_\beta(\theta^\eps_\beta)=0$. Likewise for
$\theta^\eps_0$, we have $\partial_\theta G^\eps_0(\theta^\eps_0)=0$. Subtracting,
\begin{align}
0&=\partial_\theta G^\eps_\beta(\theta^\eps_\beta)-\partial_\theta
G^\eps_0(\theta^\eps_0)
\\&=
\left[ \partial_\theta G^\eps_\beta(\theta^\eps_\beta)-\partial_\theta
G^\eps_0(\theta^\eps_\beta) \right]
+ \left[ \partial_\theta G^\eps_0(\theta^\eps_\beta)-\partial_\theta
G^\eps_0(\theta^\eps_0) \right]
\\&=\eps \left[ \partial_\theta F(\beta,\theta^\eps_\beta)-\partial_\theta
F(0,\theta^\eps_\beta) \right] + \partial^2_\theta G^\eps_0(\theta^\eps_0)\cdot
(\theta^\eps_\beta-\theta^\eps_0)+O(\norm{\theta^\eps_\beta-\theta^\eps_0}^2).
\end{align}
Since $\theta^\eps_\beta-\theta^\eps_0=O(\sqrt{\eps\beta})$, the last $O$ term is
$O(\eps\beta)$.
Since $F$ is smooth, we have $\partial_\theta
F(\beta,\theta^\eps_\beta)-\partial_\theta
F(0,\theta^\eps_\beta)=O(\beta)$ so the first term is $O(\eps\beta)$ as well.
Therefore,
\begin{equation}
\partial^2_\theta G^\eps_0(\theta^\eps_0)\cdot
(\theta^\eps_\beta-\theta^\eps_0)=O(\eps\beta).
\end{equation}
Now the smallest eigenvalue of
$\partial^2_\theta G^\eps_0$ is at least $\eta/2$. Therefore,
$\theta^\eps_\beta-\theta^\eps_0=O(\eps\beta)$ as needed.
\end{proof}

For Theorem~\ref{thm:riemannian-sgd} we are going to use this lemma with
$u=u^\eps$. When $\eps$ is fixed this is a fixed value of $u$. When
$\eps\to 0$ this is not a fixed value of $u$; however, $u^\eps$ tends to
$u^0$ when $\eps\to 0$, and by continuity of all functions involved, the
constant in $O(\eps\beta)$ in the lemma is uniform in a neighborhood of
$u^0$. Therefore, we will be able to apply the lemma to $u^\eps$ when
$\eps\to 0$.

\subsection{Proof of the Riemannian SGD Property (Theorem \ref{thm:riemannian-sgd})}
\label{sec:proof-riemannian-sgd}

We now prove the remaining part of Theorem~\ref{thm:riemannian-sgd}, i.e., the expression for $\theta_t-\theta_{t-1}$. Let us first rephrase it using the notation introduced so far.

By Remark~\ref{rem:beta1}, assume again that $\beta_1=0$ and $\beta_2 = \beta > 0$.

Let $u^\eps\deq u^\eps_0(\theta_{t-1})$. We
denote for simplicity $\theta_\beta^\eps \deq \theta_\beta^\eps(u^\eps)$.
As mentioned in \eqref{eq:changeofnotation} (Section~\ref{sec:notation}),
we have $\theta_{t-1}=\theta_0^\eps$ and $\theta_t=\theta_\beta^\eps$.

Under the assumptions of Section \ref{sec:techhyp}, we claim that when either $\eps$
or $\beta$ (or both) tend to $0$,
\begin{equation}
\label{eq-riemannian-sgd-reformulation}
\theta_\beta^\eps = \theta_0^\eps - \eps\beta \, M_0^\eps(\theta^\eps_0)^{-1}
\partial_\theta \lyap_{0;\beta}(\theta_0^\eps)+O(\eps^2\beta^2)
\end{equation}
where $\lyap_{0;\beta}$ is the Lyapunov function of Section~\ref{sec:formula-loss-lyapunov} and $M_0^\eps$ is the Riemannian matrix given by
\begin{equation}
\label{eq:defM}
M_0^\eps(\theta^\eps_0) \deq \partial^2_\theta
G_0^\eps(u^\eps,\theta^\eps_0) = \partial^2_\theta
U(u^\eps,\theta^\eps_0)
+\eps \partial^2_\theta F(0,\theta^\eps_0).
\end{equation}

\begin{proof}[Proof of Theorem \ref{thm:riemannian-sgd}]
By definition, $\theta_\beta^\eps$ minimizes $G_\beta^\eps(u^\eps,\cdot) = U(u^\eps,\cdot)
+\eps F(\beta,\cdot)$.
The equilibrium condition for $\theta^\eps_\beta$ writes out as
\begin{equation}
\partial_\theta U(u^\eps,\theta^\eps_\beta) + \eps \, \partial_\theta
F(\beta,\theta^\eps_\beta) = 0.
\end{equation}

Let us subtract this equilibrium condition for arbitrary $\beta$ and for $\beta=0$ :
\begin{equation}
\label{eq:diffeqcond}
\left[ \partial_\theta U(u^\eps,\theta^\eps_\beta)-\partial_\theta U(u^\eps,\theta^\eps_0) \right] + \eps \, \left[ \partial_\theta F(\beta,\theta^\eps_\beta) - \partial_\theta F(0,\theta^\eps_0) \right] = 0.
\end{equation}
On the one side we have
\begin{align}
\partial_\theta U(u^\eps,\theta^\eps_\beta)-\partial_\theta
U(u^\eps,\theta^\eps_0) & = \partial^2_\theta U(u^\eps,\theta^\eps_0)\cdot(\theta^\eps_\beta-\theta^\eps_0) + O(\norm{\theta^\eps_\beta-\theta^\eps_0}^2) \\ 
& = \partial^2_\theta U(u^\eps,\theta^\eps_0)\cdot(\theta^\eps_\beta-\theta^\eps_0) + O(\eps^2 \beta^2),
\end{align}
since $\norm{\theta^\eps_\beta-\theta^\eps_0}=O(\eps\beta)$ by Lemma \ref{lma:technical}. On the other side,
\begin{align}
\partial_\theta F(\beta,\theta^\eps_\beta) - \partial_\theta F(0,\theta^\eps_0)
&= \partial_\theta \left(
F(\beta,\theta^\eps_\beta)-F(\beta,\theta^\eps_0) \right) + \partial_\theta \left( F(\beta,\theta^\eps_0) - F(0,\theta^\eps_0) \right)
\intertext{which, by Theorem~\ref{thm:formula-loss-lyapunov}, is}
&=
\partial_\theta \left(
F(\beta,\theta^\eps_\beta)-F(\beta,\theta^\eps_0) \right) + \beta \partial_\theta \lyap_\beta (\theta^\eps_0)
\\&=\beta \partial_\theta \lyap_\beta(\theta^\eps_0) + \partial^2_\theta
F(\beta,\theta^\eps_0) \cdot (\theta^\eps_\beta-\theta^\eps_0) + O(\norm{\theta^\eps_\beta-\theta^\eps_0}^2)
\\&= \beta \partial_\theta \lyap_\beta(\theta^\eps_0) + \partial^2_\theta
F(0,\theta^\eps_0) \cdot (\theta^\eps_\beta-\theta^\eps_0) + O(\beta\norm{\theta^\eps_\beta-\theta^\eps_0} + \norm{\theta^\eps_\beta-\theta^\eps_0}^2)
\\&=\beta \partial_\theta
\lyap_\beta(\theta^\eps_0) + \partial^2_\theta F(0,\theta^\eps_0) \cdot (\theta^\eps_\beta-\theta^\eps_0) + O(\eps\beta^2)
\end{align}
using Lemma \ref{lma:technical} again.

Thus, returning to \eqref{eq:diffeqcond} again, we find
\begin{equation}
\partial^2_\theta
U(u^\eps,\theta^\eps_0)\cdot(\theta^\eps_\beta-\theta^\eps_0) = 
-\eps \left[ \beta \partial_\theta
\lyap_\beta(\theta^\eps_0)
+ \partial^2_\theta
F(0,\theta^\eps_0)\cdot (\theta^\eps_\beta-\theta^\eps_0) \right]
+O(\eps^2\beta^2),
\end{equation}
namely
\begin{equation}
\theta^\eps_\beta-\theta^\eps_0=-\eps\beta
M_0^\eps(\theta^\eps_0)^{-1}
\partial_\theta \lyap_\beta(\theta^\eps_0)+O(\eps^2\beta^2).
\end{equation}
where the Riemannian Matrix $M_0^\eps$ is given by \eqref{eq:defM}.

Note that this Hessian matrix is positive definite, since
$\theta_0^\eps$ achieves a strict minimum of  $G_0^\eps(u^\eps,\cdot)$ by
definition.

Finally, we have $\theta_0^\eps=\theta_{t-1}$.
When $\eps\to 0$, $u^\eps$ tends to $u^0$ and $M_0^\eps$ is
$\partial^2_\theta U(u^0,\theta_{t-1})+O(\eps)$. By definition, $u^0$ is
the value of $u$ such that $\argmin_\theta U(u^0,\theta)=\theta_{t-1}$,
This is the last claim to be proven in Theorem~\ref{thm:riemannian-sgd}.
\end{proof}

\subsection{Proof of the SGD Property with Quadractic Coupling and $\eps, \beta \to 0$ (Theorem~\ref{thm:sgd})}
\label{sec:proof-sgd}

By Proposition~\ref{prop:taylor-expansions}, the Lyapunov function of Optimistic \algoname is
\begin{equation}
    \lyap_\beta = \lyap_{0;\beta}(\theta)=\lyap(\theta)+O(\beta)
\end{equation}
when $\beta\to 0$. Moreover, the Riemannian metric $M_0^\eps$ is
\begin{equation}
    M_0^\eps(\theta) = M_0^0(\theta) + O(\eps)
\end{equation}
when $\eps \to 0$. Injecting these expressions in \eqref{eq-riemannian-sgd-reformulation} (Theorem \ref{thm:riemannian-sgd}), we get
\begin{equation}
\theta_\beta^\eps = \theta_0^\eps - \eps \beta \, M_0^0(\theta_0^\eps)^{-1} \partial_\theta
\lyap(\theta_0^\eps)+O(\eps\beta^2+\eps^2\beta)
\end{equation}
when both $\eps$ and $\beta$ tend to $0$. In particular, using the quadratic control energy $U(u,\theta)=\norm{u-\theta}^2/2$, we have $M_0^0(\theta) = \Id$ and we recover standard SGD.

Similar results hold for Pessimistic \algoname and Centered \algoname, using that $\mathcal{L}_{-\beta;0}(\theta) = \mathcal{L}(\theta) + O(\beta)$ and $\mathcal{L}_{-\beta/2;\beta/2}(\theta) = \mathcal{L}(\theta) + O(\beta^2)$ (Proposition~\ref{prop:taylor-expansions}).

\clearpage
\section{Simulation Details}
\label{sec:simulation-details}

In this section, we provide the implementation details of our numerical simulations of \algoname on Hopfield-like networks (section \ref{sec:numerical-illustration}).

\paragraph{Datasets.}
We perform experiments on the MNIST and FashionMNIST datasets.

The MNIST dataset (the `modified' version of the National Institute of Standards and Technology dataset) of handwritten digits is composed of 60,000 training examples and 10,000 test examples \citep{lecun1998gradient}. Each example $x$ in the dataset is a $28 \times 28$ gray-scaled image and comes with a label $y \in \left\{ 0, 1, \ldots, 9 \right\}$ indicating the digit that the image represents.

The Fashion-MNIST dataset \cite{xiao2017fashion} shares the same image size, data format and the sane structure of training and testing splits as MNIST. It comprises a training set of 60,000 images and a test set of
10,000 images. Each example is a $28 \times 28$ grayscale
image from ten categories of fashion products.


\paragraph{Energy minimization.} We recall our general strategy to simulate energy minimization: at every step, we pick a variable (layer or parameter) and we `relax' that variable, i.e. we compute analytically the state of that variable that minimizes the energy, given the state of other variables (layers and parameters) fixed. We are able to do that because, when $E$ is the Hopfield energy and $C$ is the squared error, the global energy $\mathcal{E} = ||u-\theta||^2 / 2\epsilon + E + \beta C$ is a quadratic function of each of its variables (layers and parameters). Using this property, we can then alternate relaxation of the layers and parameters until a minimum of the energy is reached.

More specifically, during each phase of energy minimization, we relax the layers one by one, either from the first hidden layer to the output layer (in the `forward' direction), or from the output layer back to the first hidden layer (in the `backward' direction). Relaxing all the layers one after the other (once each), constitutes one `iteration'. We repeat as many iterations as is necessary until convergence is attained. We decide converge using the following criterion: at each iteration, we measure the $L^1$-norm $\|s_{\rm next} - s_{\rm previous}\|$, where $s_{\rm previous}$ is the state of the layers before the iteration, and $s_{\rm next}$ is the state of the layers after the iteration. The convergence criterion is $\|s_{\rm next} - s_{\rm previous}\| < \xi$, where $\xi$ is a given threshold.

The threshold $\xi$ is itself an adaptive threshold $\xi_t$ that we update at each epoch of training $t$. At the beginning of training, we start with $\xi_0 = 10^{-3}$. Then, at each epoch $t$, we proceed as follows: for each mini-batch in the training set, we measure the $L^1$-norm $\|s_\star^{(2)}-s_\star^{(1)}\|$ between the equilibrium state $s_\star^{(1)}$ of the first phase and the equilibrium state $s_\star^{(2)}$ of the second phase, and we compute $\mu_t$, the mean of $\|s_\star^{(2)}-s_\star^{(1)}\|$ over the entire training set during epoch $t$. Then, at the end of epoch $t$, we set the threshold for epoch $t+1$ to $\xi_{t+1} = \min(\xi_t, \gamma \mu_t)$, for some constant $\gamma$. We choose $\gamma = 0.01$ in our simulations.

\paragraph{Training procedure.} We train our networks with optimistic, pessimistic and centered \algoname. At each training step of SGD, we proceed as follows. First we pick a mini-batch of samples in the training set, $x$, and their corresponding labels, $y$. Then we set the nudging to $0$ and we perform a homeostatic phase. This phase allows us in particular to measure the training loss for the current batch, to monitor training. Next, if the training method is either pessimistic or centered \algoname, we set the nudging to either $-\beta$ or $-\beta/2$ respectively, and we perform a new homeostatic phase. Finally, we set the nudging to the second nudging value (which is $0$, $\beta/2$ or $\beta$ depending on the training method) and we perform a phase with clamped control knobs.

At each iteration of inference (homeostatic phase without nudging), we relax the layers from the first hidden layer to the output layer. We choose to do so because in this phase, the source of external signals comes from the input layer. Conversely, during the phases with non-zero nudging (either $-\beta$, $-\beta/2$, $+\beta/2$ or $+\beta$), we relax the layers from the output layer back to the first hidden layer, because the new source of external signals comes from the output layer.
Finally, in the `clamped' phase (with clamped control knobs), the parameters are all relaxed in parallel.

\paragraph{Weight initialization.} We initialize the weights of dense interactions according to (half) the `xavier uniform' scheme, i.e.
\begin{equation}
    \label{eq:xavier_uniform}
    w_{ij} \sim \mathcal{U}(-c,+c), \qquad c = \frac{\alpha}{2} \sqrt{\frac{6}{\text{fan\_in} + \text{fan\_out}}},
\end{equation}
where $\alpha$ is a gain, i.e. a scaling number. See Table \ref{tab:hyperparam-mnist} for the choice of the gains. We initialize the weights of convolutional interactions according to (half) the `kaiming normal' scheme, i.e.
\begin{equation}
    \label{eq:kaiming_normal}
    w_{ij} \sim \mathcal{N}(0,c), \qquad c = \frac{\alpha}{2} \sqrt{\frac{1}{\text{fan\_in}}},
\end{equation}
where $\alpha$ is a gain. The factor $\frac{1}{2}$ in \eqref{eq:xavier_uniform} and \eqref{eq:kaiming_normal} comes from the fact that, unlike feedforward networks where each layer receives input only from the bottom layer, in Hopfield networks, hidden layers receive input from both the bottom layer and the upper layer.


\paragraph{Simulation details.}
The code for the simulations uses PyTorch 1.9.0 and TorchVision 0.10.0. \cite{paszke2017automatic}.
The simulations were carried on a server of GPUs. For the dense networks, each run was performed on a single GPU for an average run time of 6 hours. For the convolutional networks, each run was performed on a single GPU for an average run time of 30 hours. The parameters were chosen based on trial and errors (Table~\ref{tab:hyperparam-mnist}).

\paragraph{Benchmark.} We compare the three \algoname training procedures (optimistic, pessimistic and centered) against \textit{automatic differentiation} (autodiff). To establish the benchmark via autodiff, we proceed as follows: we unfold the graph of computations during the free phase minimization (with $\beta=0$), and we compute the gradient with respect to the parameters. We then take one step of gradient descent for each parameter $\theta_k$, with step size $\beta \eps_k$.

\begin{table}
\caption{Hyper-parameters used for the simulations on MNIST and FashionMNIST with Hopfield-like networks.}
\label{tab:hyperparam-mnist}
\vspace{0.5cm}
\centering
\begin{tabular}{ccc}
\hline
Hyper-parameter & Dense Network & Convolutional Network \\
\hline
layer shapes & $1\plh28\plh28 - 2048 - 10$ & $1\plh28\plh28 - 32\plh12\plh12 - 64\plh4\plh4 - 10$ \\
weight shapes & $1\plh28\plh28\plh2048 - 2048\plh10$ & $32\plh1\plh5\plh5 - 64\plh32\plh5\plh5 - 64\plh4\plh4\plh10$ \\
state space ($\mathcal{S}$) & $[0,1]^{2048}\times[-1,2]^{10}$ & $[0,1]^{32\plh12\plh12}\times[0,1]^{64\plh4\plh4}\times[-1,2]^{10}$ \\
gains ($\alpha$) & 0.8 - 1.2 & 0.6 - 0.6 - 1.5 \\
initial threshold ($\xi_0$) & 0.001 & 0.001 \\
max iterations (first phase) & 100 & 100 \\
max iterations (second phase) & 100 & 100 \\
nudging ($\beta$) & 0.5 & 0.2 \\
batch size & 32 & 16 \\
learning rates (weights) & 0.1 - 0.05 & 0.128 - 0.032 - 0.008 \\
learning rates (biases) & 0.02 - 0.01 & 0.032 - 0.008 - 0.002 \\
decay of learning rates & 0.99 & 0.99 \\
\hline
\end{tabular}
\end{table}

\clearpage

\begin{figure}
\begin{center}
\includegraphics[width=\textwidth]{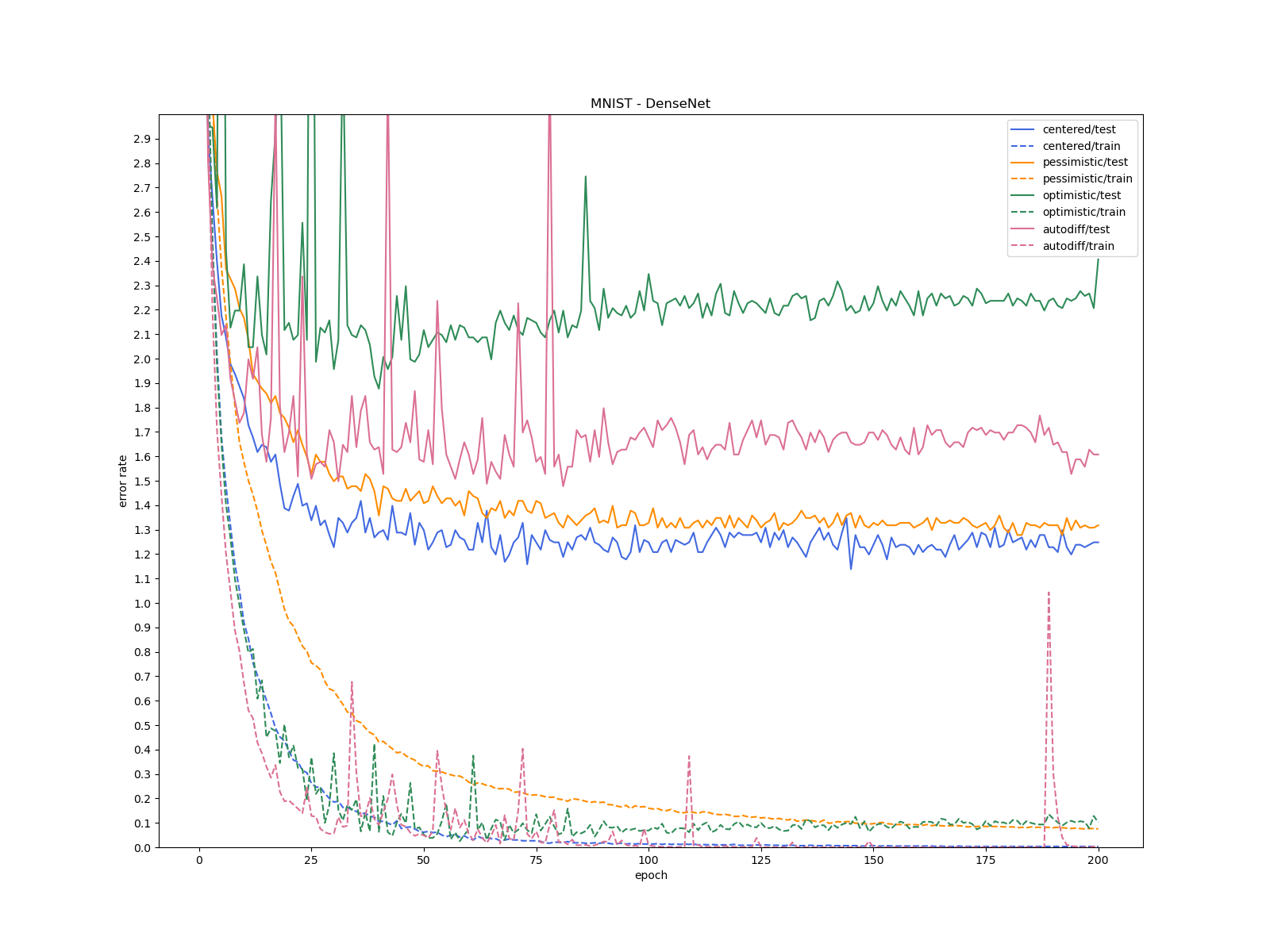}
\includegraphics[width=\textwidth]{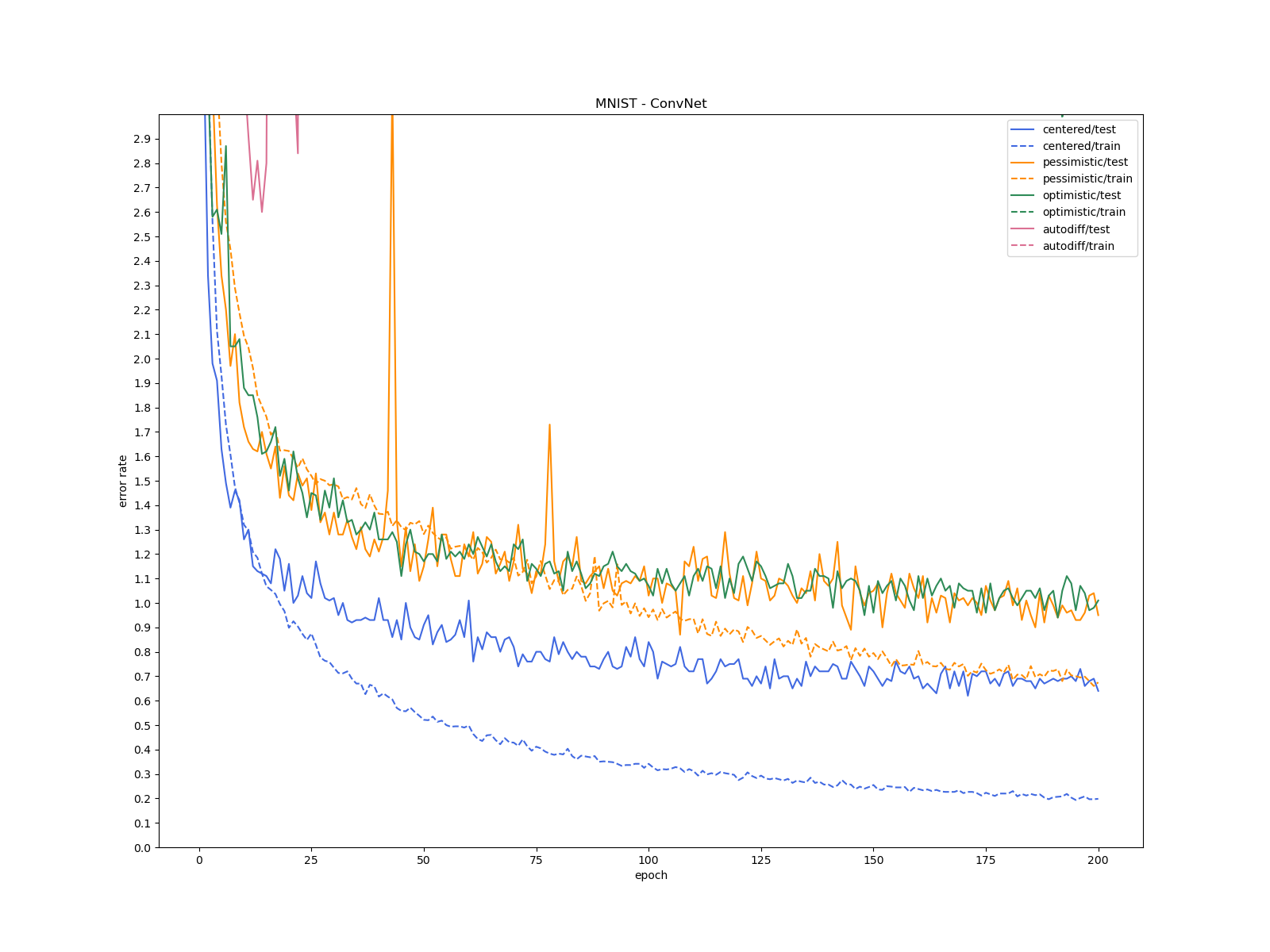}
\end{center}
\caption{Dense and Convolutional Hopfield-like Networks trained via \algoname on MNIST}
\label{fig:mnist}
\end{figure}

\begin{figure}
\begin{center}
\includegraphics[width=\textwidth]{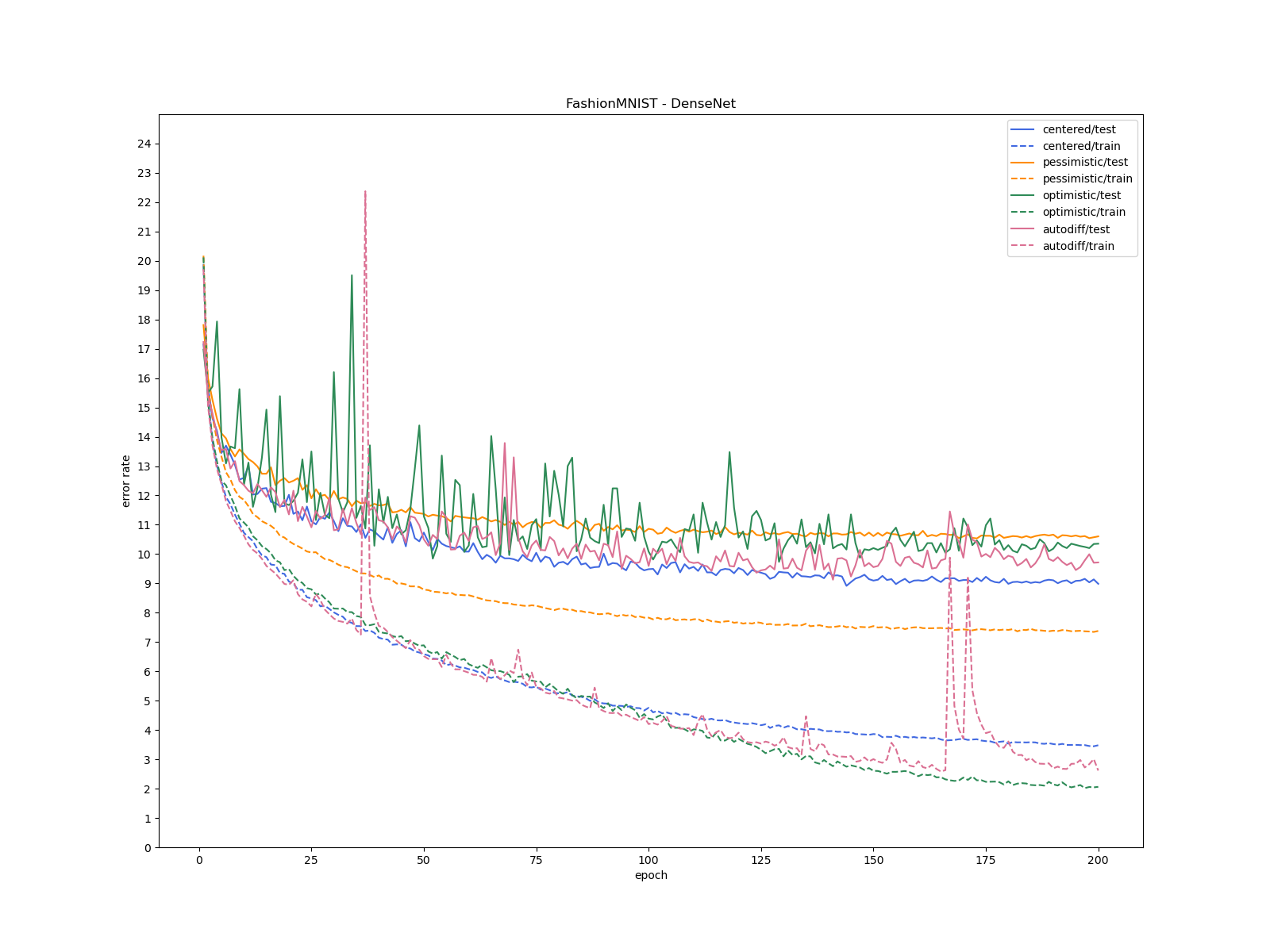}
\includegraphics[width=\textwidth]{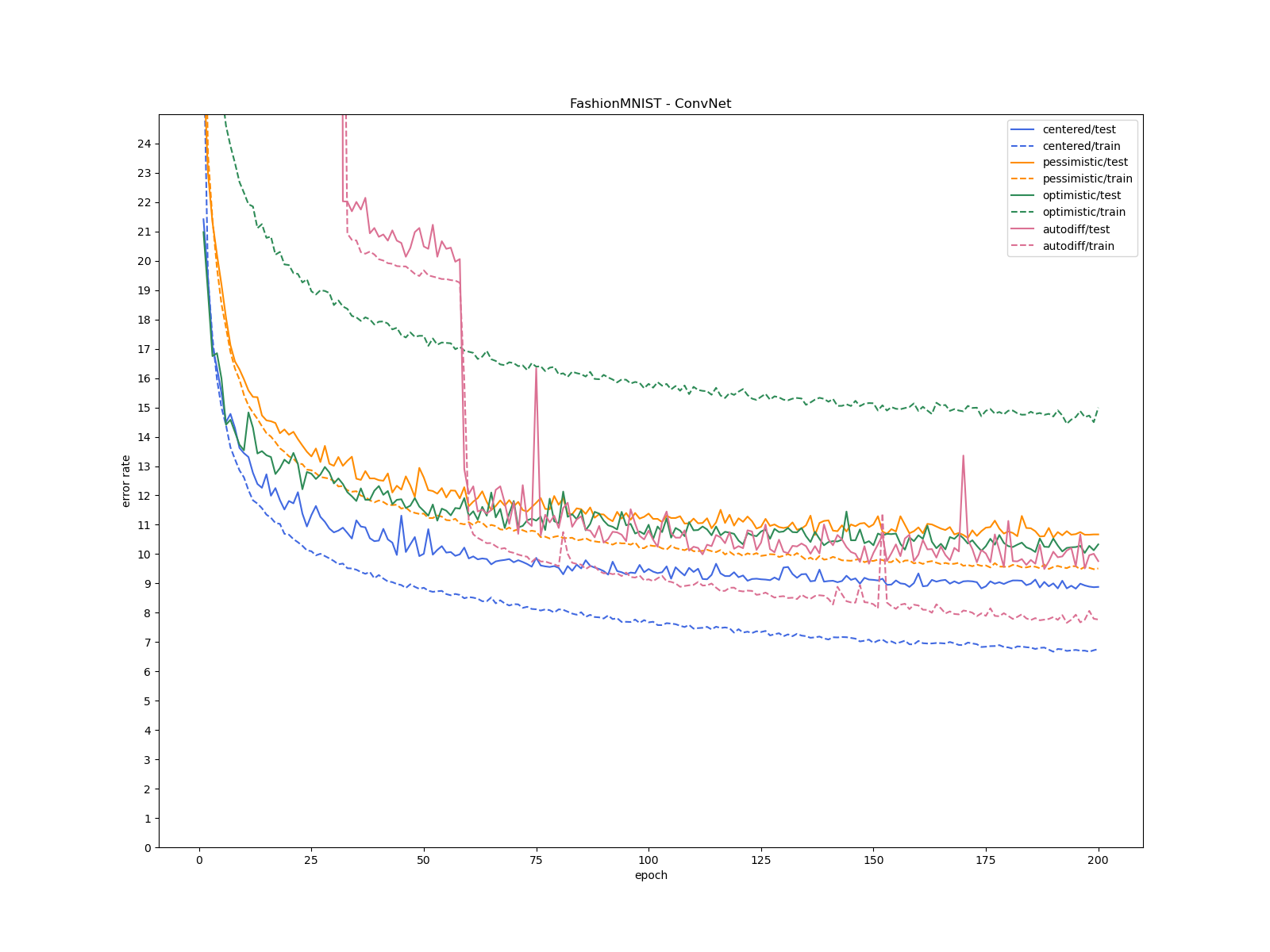}
\end{center}
\caption{Dense and Convolutional Hopfield-like Networks trained via \algoname on FashionMNIST}
\label{fig:fmnist}
\end{figure}

\clearpage
\section{From Eqprop to Agnostic Eqprop}
\label{sec:eqprop}

In this section, we present equilibrium propagation (Eqprop) \cite{scellier2017equilibrium} and explain in more details the problems of Eqprop that Agnostic Eqprop (\algoname) solves.

Recall that we consider an optimization problem of the form
\begin{equation}
    J(\theta) \deq \E_{(x,y)} \, \left[ \lyap(\theta,x,y) \right], \qquad \text{where} \qquad \lyap(\theta,x,y) \deq C(s(\theta,x),y),
\end{equation}
where $C$ is a cost function and $s(\theta,x)$ is a minimizer of some other function $E$:
\begin{equation}
    s(\theta,x) \deq \underset{s}{\arg \min} \; E(\theta,x,s).
\end{equation}
We call $E$ the energy function and $s(\theta,x)$ the equilibrium state.
The idea of Eqprop is to augment the energy at the output part of the system (the $s$-part) by adding an energy term $\beta \, C(s,y)$ proportional to the cost. The total energy of the system is then $E(\theta,x,s) + \beta \, C(s,y)$. As we vary $\beta$, the total energy varies, and therefore the equilibrium state varies, too. Specifically, for every nudging value $\beta$, we define the equilibrium state
\begin{equation}
    \label{eq:eqprop-equilibrium}
    s_\beta \deq \underset{s}{\arg \min} \, \left[  E(\theta,x,s) + \beta \, C(s,y) \right].
\end{equation}
In particular $s_0 = s(\theta,x)$. The main theoretical result of Eqprop is that the loss gradients can be computed by varying the nudging factor $\beta$, via the following formula.

\begin{thm}[Equilibrium propagation]
\label{thm:eqprop}
The gradient of the loss is equal to
\begin{equation}
    \label{eq:static-eqprop}
    \frac{\partial \mathcal{L}}{\partial \theta}(\theta,x,y) = \left. \frac{d}{d\beta} \right|_{\beta=0} \frac{\partial E}{\partial \theta} \left( \theta,x,s_\beta \right).
\end{equation}
\end{thm}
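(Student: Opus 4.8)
The plan is to reduce the identity to the symmetry of the mixed second partial derivatives of the ``free energy''
\[
F(\beta,\theta) \deq \min_s \{E(\theta,x,s) + \beta\, C(s,y)\} = E(\theta,x,s_\beta) + \beta\, C(s_\beta,y),
\]
which is exactly the object $F$ introduced in Theorem~\ref{thm:formula-loss-lyapunov}. First I would record the two first-order derivatives of $F$. Differentiating in $\beta$ and using the equilibrium (stationarity) condition $\partial_s[E+\beta C]=0$ at $s=s_\beta$ to discard the chain-rule term carrying $\partial_\beta s_\beta$, one obtains $\partial_\beta F(\beta,\theta) = C(s_\beta,y)$; this is precisely the envelope formula already established in Theorem~\ref{thm:formula-loss-lyapunov}. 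Differentiating $F$ instead in $\theta$ and invoking the same equilibrium condition to kill the term in $\partial_\theta s_\beta$ gives $\partial_\theta F(\beta,\theta) = \partial_\theta E(\theta,x,s_\beta)$, where $\partial_\theta E$ denotes the partial derivative of $E$ at frozen $s$, evaluated at the equilibrium state $s_\beta$.

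The heart of the argument is then to apply Schwarz's theorem. Under Assumption~\ref{hyp:tech} the map $(\beta,\theta)\mapsto s_\beta(\theta)$ is smooth and $E,C$ are smooth, so $F$ is $C^2$ in $(\beta,\theta)$ and its mixed partials agree: $\partial_\theta\partial_\beta F = \partial_\beta\partial_\theta F$. Substituting the two first-order formulae and evaluating at $\beta=0$, the left-hand side becomes $\partial_\theta[C(s_0,y)]$, which is exactly the total gradient $\partial_\theta\mathcal{L}(\theta,x,y)$ since $\mathcal{L}=C(s_0,y)$ with $s_0=s(\theta,x)$; the right-hand side becomes $\left.\frac{d}{d\beta}\right|_{\beta=0}\partial_\theta E(\theta,x,s_\beta)$, which is the claimed formula. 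This closes the proof.

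The main obstacle, and really the only delicate point, is the bookkeeping between partial and total derivatives, together with the justification of exchanging $\partial_\theta$ and $\partial_\beta$. The pitfall is that on the left the $\theta$-derivative is a \emph{total} derivative through the implicit dependence $s_0=s_0(\theta)$, whereas in $\partial_\theta F = \partial_\theta E(\theta,x,s_\beta)$ the symbol $\partial_\theta E$ is the \emph{partial} derivative at frozen $s$; the equilibrium condition is precisely what reconciles the two, and it must be invoked each time a chain-rule term in $\partial_\theta s_\beta$ or $\partial_\beta s_\beta$ is dropped. The smoothness and local uniqueness of $s_\beta$ guaranteed by Assumption~\ref{hyp:tech} (obtained via the implicit function theorem, as in Proposition~\ref{prop:suffhyp}) ensure $F\in C^2$ and thereby license Schwarz's theorem; without such regularity the clean cross-derivative identity could fail.
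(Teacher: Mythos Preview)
Your argument is correct and is exactly the standard derivation of the Eqprop formula: compute $\partial_\beta F$ and $\partial_\theta F$ via the envelope theorem (the equilibrium condition kills the chain-rule terms), then swap the order of differentiation by Schwarz. The paper does not spell out a self-contained proof of Theorem~\ref{thm:eqprop} (it is stated as the main result of \cite{scellier2017equilibrium}), but the ingredients you use---in particular $\partial_\beta F(\beta,\theta)=C(s_\beta)$ and $\mathcal{L}(\theta)=\partial_\beta F(0,\theta)$---are precisely those established in Theorem~\ref{thm:formula-loss-lyapunov}, and your companion identity $\partial_\theta F=\partial_\theta E(\theta,x,s_\beta)$ follows by the same envelope computation; so your route is the one implicit in the paper's framework.
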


In this expression, $\frac{\partial E}{\partial \theta}$ represents the
partial derivative of $E(\theta,x,s)$ with respect to its first argument, $\theta$ ; we note that $s_\beta$ also depends on $\theta$ through Eq.~\eqref{eq:eqprop-equilibrium}, but importantly,
$\frac{\partial E}{\partial \theta}(\theta,x,s_\beta)$ does not take into account the differentiation paths through
$s_\beta$.
Thanks to Theorem \ref{thm:eqprop}, we can estimate the gradient of $\mathcal{L}$ with finite differences, using e.g. the first-order finite difference forward estimator
\begin{equation}
\label{eq:eqprop-estimator}
\widehat{\nabla}(\beta,\theta,x,y) \deq \frac{1}{\beta} \left( \frac{\partial E}{\partial \theta} \left( \theta, x, s_\beta \right) - \frac{\partial E}{\partial \theta} \left( \theta, x, s_0 \right) \right).
\end{equation}
We note that $\widehat{\nabla}(\beta,\theta,x,y)$ depends on $y$ through $s_\beta$. Eqprop training then consists in optimizing the objective $J(\theta)$ by stochastic gradient descent:
\begin{equation}
    \label{eq:eqprop-update}
    \theta_t \deq \theta_{t-1} - \eta \widehat{\nabla}(\beta,\theta_{t-1},x_t,y_t),
\end{equation}
where, at each step $t$, $\theta_{t-1}$ is the previous parameter value, $(x_t,y_t)$ is an input/target pair taken from the training set, and $\eta$ is the learning rate.
The gradient estimator $\widehat{\nabla}(\beta,\theta_{t-1},x_t,y_t)$ can be obtained with two phases and two measurements, as follows. In the first phase, we present input $x_t$ to the system, we set the nudging factor $\beta$ to zero, and we let the system's state settle to equilibrium, $s_0$. For each parameter $\theta_k$, the quantity $\frac{\partial E}{\partial \theta_k}$ is measured and stored. In the second phase, we present the desired output $y_t$ and set the nudging factor $\beta$ to a positive value, and we let the system settle to a new equilibrium state $s_\beta$. For each parameter $\theta_k$, the quantity $\frac{\partial E}{\partial \theta_k}$ is measured again. Finally, the parameters are updated in proportion to their gradient using \eqref{eq:eqprop-update}.

However, Eqprop training presents several challenges for physical implementations, including the following three. First of all, for each parameter $\theta_k$, the partial derivatives $\frac{\partial E}{\partial \theta_k}$ need to be measured in both phases. To this end, some knowledge about the analytical form of the energy function is necessary, which can be a limitation in physical systems whose components' characteristics are unknown or only partially known.
Second, the quantities $\frac{\partial E}{\partial \theta_k}$ of the first phase need to be stored, since they are no longer physically available at the end of the second phase when the parameters are updated. Third and most importantly, after computing the gradient estimators, we still need to update the parameters according to some (nontrivial) physical procedure. The \algoname method presented in this work fixes these three issues at once.

To derive \algoname from Eqprop, our starting point is Lemma~\ref{lma:monotonous}.
For brevity of notation, we omit $\theta$, $x$ and $y$, and we denote $s_\beta$ the state that minimizes $E(s)+\beta C(s)$. Using this notation, if $\frac{\partial C}{\partial s} \left( s_0 \right) \neq 0$, then for $\beta>0$ small enough, the perturbed equilibrium state $s_\beta$ yields a lower value of the underlying cost function than $s_0$ i.e., $C(s_\beta) < C(s_0)$. More specifically, we have the following formula for the derivative of $s_\beta$ with respect to $\beta$:
\begin{equation}
\label{eq:eqprop-riemann}
\left. \frac{\partial s_\beta}{\partial \beta} \right|_{\beta=0} = - \frac{\partial^2 E}{\partial s^2} \left( s_0 \right)^{-1} \cdot \frac{\partial C}{\partial s} \left( s_0 \right).
\end{equation}
This is shown by differentiating the equilibrium condition $\partial_s E(s_\beta) + \beta \, \partial_s C(s_\beta) = 0$ with respect to $\beta$. Written as a Taylor expansion, \eqref{eq:eqprop-riemann} rewrites
\begin{equation}
\label{eq:eqprop-taylor}
s_\beta = s_0 - \beta \, \partial_s^2 E \left( s_0 \right)^{-1} \cdot \partial_s C \left( s_0 \right) + O(\beta^2),
\end{equation}
where the Hessian $\partial_s^2 E \left( s_0 \right)$ is positive definite, provided that $s_0$ is a proper minimum of $E(s)$.
The main thrust of \algoname is to establish a formula similar to \eqref{eq:eqprop-taylor} for the parameters, by viewing them as another set of floating variables that minimize the system's energy (like the state variables). The SGD property (Theorem~\ref{thm:sgd}) and the more general Riemannian SGD property (Theorem~\ref{thm:riemannian-sgd}) shown in this paper achieve this.

We note that the formulae of Section \ref{sec:formula-loss-lyapunov} relating the loss, Lyapunov function and energy (Theorem~\ref{thm:formula-loss-lyapunov}, Proposition~\ref{prop:taylor-expansions} and Corollary~\ref{cor:bounds}) hold more broadly in the context of Eqprop. In particular, the gradient \textit{estimator} \eqref{eq:eqprop-estimator} of the true loss $\lyap$ is the \textit{true} gradient of the Lyapunov function $\lyap_\beta$ :
\begin{equation}
\widehat{\nabla}(\beta,\theta) = \frac{1}{\beta} \left( \partial_\theta F(\beta,\theta) - \partial_\theta F(0,\theta) \right) = \partial_\theta \lyap_\beta(\theta),
\end{equation}
where we recall that $F(\beta,\theta) := \min_s (E(\theta,s) + \beta C(s))$.
But in Eqprop, unlike in \algoname, the function $\lyap_\beta$ does not necessarily decrease at each step of training: if the learning rate $\eta$ is too large, $\lyap_\beta$ may increase after one step of \eqref{eq:eqprop-update}, like in standard SGD.

Directly derived from Eqprop is the method proposed by \cite{stern2021supervised} called \textit{coupled learning}. \cite{stern2021supervised} considers the case of the squared error cost function $C(s) = \| s-y \|^2$, for which we have $\partial_s C \left( s \right) = (s-y)$. With this choice of $C$, and assuming that $\partial_s^2 E \approx \text{Id}$, Eq.~\eqref{eq:eqprop-taylor} yields $s_\beta \approx s_0 - \beta \, (s_0-y)$. Thus, to achieve nudging in the second phase, instead of adding an energy term $\beta C(s)$ to the system as in Eqprop, \cite{stern2021supervised} propose to clamp the output unit to the state
\begin{equation}
s_{\rm clamped} := (1-\beta) s_0 + \beta y,
\end{equation}
and to let the system relax to equilibrium.
However, contrary to Eqprop, this method does not in general compute the gradient of the loss, even in the limit of infinitesimal perturbation ($\beta \to 0$), except in the special case where the Hessian of $E$ is the identity matrix.

Theorem \ref{thm:eqprop} also has implications for meta-learning and other bilevel optimization problems: \citep{zucchet2021contrastive} introduced the \textit{contrastive meta-learning} rule (CML), which uses the differentiation method of Eqprop to compute the gradients of the meta-parameters. We refer to \cite{zucchet2022beyond} for a review of implicit gradient methods in bilevel optimization problems.

\end{document}